\documentclass[10pt]{article} % For LaTeX2e
\usepackage[preprint]{tmlr}
\usepackage{amsmath,amssymb,amsthm}
\allowdisplaybreaks
\usepackage{graphicx}
\usepackage{multirow}
\usepackage{array}
\usepackage{subfig}
\usepackage{verbatim}
\usepackage{booktabs}
\usepackage{algorithm}
\usepackage{algpseudocode}
\usepackage{float}
\usepackage{arydshln}
\usepackage{xcolor}
\usepackage{colortbl}
\usepackage{url}
\usepackage{hyperref}

\newtheorem{theorem}{Theorem}
\newtheorem{proposition}{Proposition}
\newtheorem{lemma}{Lemma}

\newtheorem{definition}{Definition}

\title{Private and Distributionally Robust Federated Multiclass Classification}

\author{\name Andrea P. Hernandez \email ahernandez327@gatech.edu \\
       \addr H. Milton Stewart School of Industrial and Systems Engineering\\
       Georgia Institute of Technology\\
       Atlanta, GA 30332-0205, USA
       \AND
       \name Michael Ibrahim \email mibrahim41@gatech.edu \\
       \addr H. Milton Stewart School of Industrial and Systems Engineering\\
       Georgia Institute of Technology\\
       Atlanta, GA 30332-0205, USA
       \AND
       \name Nagi Gebraeel \email nagi.gebraeel@isye.gatech.edu \\
       \addr H. Milton Stewart School of Industrial and Systems Engineering\\
       Georgia Institute of Technology\\
       Atlanta, GA 30332-0205, USA}

\def\month{MM}  % Insert correct month for camera-ready version
\def\year{YYYY} % Insert correct year for camera-ready version
\def\openreview{\url{https://openreview.net/forum?id=XXXX}} % Insert correct link to OpenReview for camera-ready version

\begin{document}
\raggedbottom

\maketitle

\begin{abstract}
Fault diagnosis in asset-intensive industries requires multiclass models trained on noisy, sensitive data collected across geographically distributed client sites. Federated learning enables collaborative training without centralizing data, but most approaches do not provide formal privacy guarantees for individual records stored at client sites and can degrade under label noise. To address these limitations, we develop a differentially private, distributionally robust framework for federated multiclass classification that provides sample-level privacy for records stored at each client. We build on a federated Wasserstein distributionally robust multiclass objective with the Crammer--Singer loss, solved using an Alternating Direction Method of Multipliers (ADMM)--based algorithm. To protect the released client updates, our method injects calibrated Gaussian noise into the proximal linearized client updates of both the local model and the local distributionally robust variable before aggregation at the central server, and we quantify the cumulative privacy loss using the moments accountant. We provide per-round sample-level \((\epsilon,\delta)\)-differential privacy guarantees and an overall composed privacy guarantee across all training rounds, together with convergence results and an explicit privacy--utility tradeoff. Experiments on simulated industrial fault data and UCI benchmark datasets show that our method outperforms differentially private federated empirical risk minimization and stochastic gradient descent baselines in most tested settings, with the largest gains under strong privacy-induced noise.
\end{abstract}

\section{Introduction}

In asset-intensive industries such as mining, energy, and aerospace, firms increasingly rely on long-term service agreements with Original Equipment Manufacturers (OEMs) to ensure the reliability of critical components over extended periods of time \citep{velmurugan2021asset}. These agreements
are particularly attractive for customers because they shift the risk of equipment failure and maintenance to OEMs, who can reduce maintenance costs
throughout their customer base and pass on savings \citep{bollapragada2007managing}. As part of the service, the OEM is responsible for diagnosing and distinguishing among multiple fault types that can lead to equipment failure. To meet the performance guarantees specified in these
agreements, OEMs must rely on advanced fault diagnostic tools capable of
detecting faults early and accurately. In recent years, data-driven machine
learning (ML) classifiers have become central to enabling such capabilities.
These models have been successfully applied to diagnose faults in various
systems, including gas turbines, lithium-ion batteries, and aircraft fuel systems \citep{li2023intelligent, samanta2021machine, sun2021fault}. 

Despite their promise, the implementation of ML-based fault diagnostic systems in real-world industrial settings presents several key challenges. First, training robust ML models typically requires access to large volumes of operational data generated across geographically distributed client sites. However, sharing such data raises privacy concerns and risks disclosing proprietary or sensitive operational information, often restricted by contractual, regulatory, or competitive constraints. As a result, centralized data collection is frequently infeasible. Second, transmitting high-dimensional sensor data to a centralized cloud system for processing can saturate bandwidth and introduce latency, delaying fault detection \citep{jouini2024survey}. Such delays increase the risk of missing early signs of failure, which undermines the effectiveness of predictive maintenance strategies. Third, harsh industrial environments often introduce noise into sensor readings, and inconsistencies in fault labeling across sites or operators further amplify data uncertainty. These issues collectively degrade model performance and complicate accurate fault classification. 

Federated learning (FL) \citep{syed2023federated} has emerged as a promising alternative to centralized training when data pooling is restricted or impractical because of privacy, ownership, or bandwidth constraints. In FL, clients keep their data locally, train models independently, and send model updates to a central server for aggregation. This decentralized approach has been successfully applied to fault diagnosis tasks \citep{du2023efficient, zhang2021federated, yu2023fedcae}, making it well suited to our problem setting. However, privacy concerns persist even in federated settings. As shown by \citet{dayal2023comparative} and \citet{geng2023improved}, shared model updates can leak sensitive information through the iterative learning process. In industrial fault diagnosis, such leakage may reveal operating conditions or fault severity, thereby risking a client’s competitive advantage. Therefore, while FL reduces the need for centralized data sharing, it does not by itself guarantee privacy, which motivates the incorporation of stronger privacy-preserving mechanisms.

In addition to privacy concerns, industrial fault diagnosis data often exhibit significant uncertainty due to noisy labels and features. Distributionally Robust Optimization (DRO) \citep{kuhn2019wasserstein, ndong2018distributionally, van2021data, mohajerin2018data, delage2010distributionally} provides a principled way to address such uncertainty by constructing a set of plausible distributions, called an ambiguity set, around a nominal distribution, most commonly the empirical distribution, and minimizing the worst-case expected loss over this set. We defer the formal definition to Section~\ref{sec2}. Recent work has also explored DRO in federated learning \citep{ibrahim2024federated}, enabling decentralized training that is more robust to noise. However, most existing approaches focus on binary classification or, when extended to multiple classes, rely on one-vs-one (OvO) or one-vs-all (OvA) strategies for multiclass classification \citep{zgarni2018classification, liu2023fault, islam2019reliable, mandal2020assessment}. Such decompositions can be limiting in practice because they either incur high computational cost or fail to capture relationships among classes. These limitations point to the need for a federated DRO framework that can directly handle multiclass fault classification while remaining robust to uncertainty.

While federated learning and DRO together address decentralization and uncertainty, they do not by themselves eliminate the risk of privacy leakage through shared model updates. To provide strong privacy guarantees, a widely used mechanism is Differential Privacy (DP) \citep{dwork2006differential, dwork2014algorithmic}, which ensures that the distribution of an algorithm’s output changes only slightly when any single data point is added, removed, or modified. In learning settings, this is commonly enforced by adding calibrated noise to the model update before it is released \citep{chaudhuri2008privacy, chaudhuri2011differentially}. Although effective for privacy protection, such perturbations can interfere with optimization and degrade model performance. Furthermore, many existing DP methods rely on assumptions such as strong convexity or smoothness of the objective function, which may not hold for a broad class of loss functions. As a result, incorporating DP into our federated DRO multiclass setting is nontrivial and requires adapting the privacy analysis to this framework.

Our work connects three lines of research that have largely been developed in isolation. Prior federated DRO methods based on mixtures of Wasserstein balls have been developed for binary classification without privacy guarantees~\citep{ibrahim2024federated}, while existing multiclass DRO formulations have primarily been studied in centralized, non-private settings~\citep{ibrahim2024wasserstein}. Separately, differentially private Alternating Direction Method of Multipliers (ADMM) methods have been developed for empirical risk minimization (ERM), including settings without smoothness or strong convexity assumptions~\citep{huang2019dp}, but these methods do not address the federated DRO structure considered here. The one setting in which all three meet uses DRO for a different purpose, namely to counteract the uncertainty introduced by perturbing the local training data for privacy~\citep{shi2022distributionally}. Our work instead combines a federated Wasserstein DRO formulation, a native multiclass loss, and sample-level differential privacy for the released client updates. We build on the mixture-of-Wasserstein-balls formulation of~\citet{ibrahim2024federated}, which considers binary classification and releases exact client updates; we address the multiclass setting directly through the Crammer--Singer loss and release perturbed client updates that satisfy sample-level differential privacy.

\noindent\textbf{Contributions.} The objective of this work is to develop a federated distributionally robust multiclass classification framework for fault diagnosis with formal differential privacy guarantees. The proposed model enables collaborative training across \(G\) clients, accounts for uncertainty in both features and labels, and protects sensitive information through privacy-preserving client updates. To the best of our knowledge, this is among the first works to integrate federated learning, Wasserstein DRO, multiclass classification, and differential privacy into a unified framework for fault diagnosis. The main contributions of this paper are as follows:
\begin{enumerate}
    \item We formulate a federated distributionally robust multiclass classification problem for fault diagnosis that captures uncertainty in both features and labels. We further show that the resulting client update admits a tractable convex optimization formulation, enabling efficient decentralized training.

   \item We propose a differentially private ADMM-based algorithm for the resulting multiclass federated DRO problem, which perturbs both the local model and the local DRO variable with calibrated Gaussian noise. The proposed methodology allows privacy-preserving training under weaker assumptions than those commonly required in existing DP methods, such as strong convexity and smoothness.

    \item We derive sample-level differential privacy guarantees for the noisy client updates and use the moments accountant to compose the resulting privacy loss across training rounds.

    \item We analyze the convergence behavior of the proposed noisy training algorithm and characterize the resulting privacy--utility tradeoff.

    \item We evaluate the proposed method on simulated industrial fault diagnosis data and several UCI benchmark datasets through a comprehensive benchmarking study. The results show that the proposed approach remains competitive under privacy constraints, with additional experiments assessing scalability with respect to the number of clients and classes.

\end{enumerate}

The remainder of this paper is organized as follows. Section~\ref{sec:prelim} introduces the problem setup, the distributionally robust learning framework, and the differential privacy preliminaries. Section~\ref{sec:formulation} develops the federated multiclass DRO formulation and the non-private training algorithm. Section~\ref{sec:dp_algorithm} presents the proposed differentially private algorithm. Section~\ref{sec:privacy} establishes its privacy guarantees, and Section~\ref{sec:convergence} analyzes its convergence and the resulting privacy--utility tradeoff. Section~\ref{sec:experiments} reports the experimental evaluation, and Section~\ref{sec:conclusion} concludes the paper.

\section{Problem Setup and Preliminaries}
\label{sec:prelim}
\subsection{Problem Setup}
We consider a multiclass classification problem in a federated setting with a set of clients $[G] := \{1,\dots,G\}$ and a central server. Each data point is represented by a tuple $\xi=(x,y)$ drawn from a true underlying distribution $\mathbb{P}$, where the feature vector satisfies $x \in \mathcal{X} \subseteq \mathbb{R}^P$ and the class label satisfies $y \in [K] := \{1,2,\dots,K\}$.

To facilitate a structured formulation of robustness against label perturbations, we adopt a one-hot encoding of the label. In this representation, the label vector $\mathbf{y} \in \mathcal{Y} \subset \mathbb{R}^K$ satisfies
\[
\mathcal{Y} = \left\{ \mathbf{y} \in \mathbb{R}^K : \sum_{k=1}^K \mathbf{y}_k = 1,\ \mathbf{y}_k \in \{0,1\} \ \forall k \in [K] \right\}.
\]
With a slight abuse of notation, in the remainder of the paper we use $y$ to denote either the class index in $[K]$ or its one-hot encoding $\mathbf{y}$, depending on the context.

The goal is to learn a linear multiclass classifier parameterized by $M \in \mathbb{R}^{K \times P}$ by minimizing a loss function $\ell(M;\xi)$ over training data, where $\xi=(x,\mathbf{y}) \in \mathcal{X}\times\mathcal{Y}$. In this work, we instantiate $\ell$ using the Crammer--Singer loss: 
\[
\ell_{\mathrm{CS}}(M;\xi)
:=
\max_{k\in[K]}
\left\{
v_{(k)}^\top (Mx - \mathbf{y}) + 1
\right\}
- \mathbf{y}^\top Mx,
\]
where $v_{(k)} \in \mathbb{R}^K$ denotes the one-hot vector for class \( k \). 

We assume access to a training dataset $S=\left\{\hat{\xi}_n=(\hat{x}_n,\hat{y}_n)\right\}_{n=1}^N$,
which is partitioned across the $G$ clients. Each client $g \in [G]$ holds a local dataset $\left\{\hat{\xi}_i^{(g)}=(\hat{x}_i^{(g)},\hat{y}_i^{(g)})\right\}_{i=1}^{N_g}$,
drawn from an unknown local distribution $\mathbb{P}_g$, with corresponding empirical distribution $\widehat{\mathbb{P}}_{N_g}$.

Clients communicate only with the central server and do not exchange information directly with one another. They may share local model updates (e.g., parameters or gradients), but they do not share raw data.

\subsection{Distributionally Robust Learning}
\label{sec2}

Distributionally Robust Optimization (DRO) provides a principled framework for learning models that are robust to distributional shifts \citep{kuhn2019wasserstein, ndong2018distributionally, van2021data, mohajerin2018data, delage2010distributionally}. Instead of minimizing empirical risk, DRO minimizes the worst-case expected loss over an ambiguity set of distributions centered around a nominal distribution. In our setting, the nominal distribution is the empirical distribution $\widehat{\mathbb{P}}_N$ constructed from the training data. Given a loss function $\ell(M;\xi)$, the DRO formulation takes the form
\begin{equation}
\inf_{M} \sup_{\mathbb{Q} \in \mathcal{A}} \mathbb{E}^{\mathbb{Q}}[\ell(M;\xi)],
\label{eq:dro_formulation}
\end{equation}
where $\mathcal{A}$ is an ambiguity set of probability distributions.

In this work, we use a Wasserstein ambiguity set centered at $\widehat{\mathbb{P}}_N$ due to its flexibility and established generalization guarantees:
\begin{equation}
\mathcal{A}_{r,d,q}(\Xi)
:=
\left\{
\mathbb{Q} \in \mathcal{P}(\Xi)
:\,
W_{d,q}(\mathbb{Q},\widehat{\mathbb{P}}_N) \le r
\right\},
\label{eq:wasserstein_set}
\end{equation}
where $\mathcal{P}(\Xi)$ denotes the set of probability distributions supported on $\Xi$, and $W_{d,q}$ denotes the type-$q$ Wasserstein distance induced by the transportation cost $d(\xi,\xi')$.

To model perturbations in both features and labels, we use the type-1 Wasserstein distance with a separable transportation cost \citep{shafieezadeh2019regularization, shafieezadeh2015distributionally}:
\begin{equation}
d(\xi,\xi')
:=
\|x-x'\| + \kappa\,\mathbf{1}\{y \neq y'\},
\label{eq:cost_function}
\end{equation}
where $\xi=(x,y)$, $\xi'=(x',y')$, $\kappa > 0$ penalizes label changes, and $\|\cdot\|$ is a norm on $\mathbb{R}^P$.

\subsection{Differential Privacy}

In federated learning, clients share model updates with a central server, which in our setting is the OEM, rather than sharing raw data. However, these updates may still reveal information about the underlying local datasets through inversion attacks \citep{fredrikson2015model} or membership attacks \citep{shokri2017membership}. In our setting, such leakage could reveal whether a client site operates under a sensitive regime or exhibits unusually high failure rates. Therefore, our goal is to protect individual records within each client's local dataset, rather than to hide whether a client participates in training. We consider two possible adversaries: an honest-but-curious server that follows the protocol but may use client updates to infer information about local records, and an external eavesdropper who has access only to the communicated model updates. We assume that the adversary cannot directly access a client's local dataset.

To address this issue, we adopt differential privacy (DP), a rigorous mathematical framework that provides formal privacy guarantees for individual data records. DP ensures that replacing a single data sample in a dataset does not significantly alter the output distribution of a randomized algorithm, thereby limiting the amount of information that can be inferred about that sample \citep{dwork2006differential}.

\begin{definition}[(\(\epsilon, \delta\))-Differential Privacy \citep{dwork2014algorithmic}]
A randomized mechanism \( \mathcal{M} \) satisfies \((\epsilon, \delta)\)-differential privacy if, for all adjacent datasets \( D \) and \( D' \) differing in one element, and for all measurable sets \( S \),
\[
\mathbb{P}[\mathcal{M}(D) \in S] \le e^\epsilon \, \mathbb{P}[\mathcal{M}(D') \in S] + \delta.
\]
\end{definition}

Here, \(\epsilon\) denotes the privacy budget, with smaller values corresponding to stronger privacy guarantees, while \(\delta\) denotes a small failure probability under which the guarantee may not hold exactly. In this work, we employ the Gaussian mechanism, which adds noise sampled from a Gaussian distribution to each model update before it is sent to the server.

\section{Federated Multiclass Distributionally Robust Formulation}
\label{sec:formulation}

We extend the DRO formulation introduced in
Section~\ref{sec2} to the federated multiclass setting.
Specifically, we use the Crammer--Singer loss for multiclass
classification and define the global ambiguity set as a weighted
mixture of local client Wasserstein balls~\citep{ibrahim2024federated}.
Formally, the ambiguity set is given by
\begin{equation}
\mathcal{A}_G := \bigg\{ \mathbb{Q} = \sum_{g=1}^{G} \alpha_g \mathbb{Q}_g : \alpha_g \geq 0,\ \sum_{g=1}^{G} \alpha_g = 1,\ \mathbb{Q}_g \in \mathcal{A}^{(g)}_{r_g,1,d}(\Xi),\ g=1,\ldots,G \bigg\},
\label{eq:mobw}
\end{equation}
where \(\mathcal{A}^{(g)}_{r_g,1,d}(\Xi)\) denotes the type-1 Wasserstein ball of radius \(r_g\) centered at the empirical distribution \(\widehat{\mathbb{P}}_{N_g}\) of client \(g\), supported on \(\Xi\), and defined by the cost function \(d(\xi,\xi')\). This construction preserves the separable structure of the DRO problem, enabling distributed optimization across clients. Moreover, the Mixture of Wasserstein Balls (MoWB) formulation inherits the finite-sample generalization guarantees established for classical Wasserstein DRO~\citep{ibrahim2024federated}.

To solve the federated optimization problem, we introduce a consensus constraint and adopt an ADMM-based approach. Specifically, letting \( M_g \) denote the local model for client \( g \) and \( M \) the global model, the problem can be written as
\begin{equation}
\label{eq:fed_dro_admm}
\begin{aligned}
\inf_{\{M_g\}, M} \quad
& \sum_{g=1}^G \alpha_g
\sup_{\mathbb{Q}_g \in \mathcal{A}^{(g)}_{r_g,1,d}(\Xi)}
\mathbb{E}^{\mathbb{Q}_g}
\left[ \ell_{\mathrm{CS}}(M_g; \xi) \right] \\
\text{s.t.} \quad
& M_g = M, \quad \forall g \in [G].
\end{aligned}
\end{equation}

\begin{algorithm}[!t]
\caption{Non-Private Fed DRO}
\label{alg:fed_dro}
\begin{algorithmic}[1]
\State \textbf{Initialize:} $M^{(0)}$, $\{M_g^{(0)}\}_{g=1}^{G}$, $\{\mu_g^{(0)}\}_{g=1}^{G}$
\State \textbf{Parameters:} $\rho$, $\{r_g\}_{g=1}^{G}$, $\kappa$, $T$
\State \textbf{Output:} $M^{*}$

\For{$t = 1, \dots, T$}
    \State \textbf{Client Update:}
    \For{clients $g = 1, \dots, G$}
        \State Solve for $(M_g^{(t)},\lambda_g^{(t)})\gets(M_g^*,\lambda_g^*)$, minimizer of $\mathcal{J}_g(M^{(t-1)},\mu_g^{(t-1)})$
        \State Send $M_g^{(t)}$ to the central server
    \EndFor

    \State \textbf{Server Update:}
    \State $M^{(t)} \gets \sum_{g=1}^{G}\alpha_g\left(M_g^{(t)} + \mu_g^{(t-1)}\right)$
    \State Broadcast $M^{(t)}$ to all clients

    \State \textbf{Client Update:}
    \For{clients $g = 1, \dots, G$}
        \State $\mu_g^{(t)} \gets \mu_g^{(t-1)} + M_g^{(t)} - M^{(t)}$
    \EndFor
\EndFor
\end{algorithmic}
\end{algorithm}

To handle the consensus constraints, we construct the augmented Lagrangian as
\[
\mathcal{L}_{\rho}(M_1,\dots,M_G,M,\mu_1,\dots,\mu_G)
=
\sum_{g=1}^{G}\alpha_g \,\mathcal{L}_{\rho,g}(M_g,M,\mu_g),
\]
where \(\mu_g\) is the scaled dual variable associated with client \(g\)'s consensus constraint and has the same dimension as \(M_g\). The local augmented Lagrangian is given by
\begin{equation}
\mathcal{L}_{\rho,g}(M_g,M,\mu_g) = \sup_{\mathbb{Q}_g \in \mathcal{A}^{(g)}_{r_g,1,d}(\Xi)} \mathbb{E}^{\mathbb{Q}_g}\!\left[\ell_{\mathrm{CS}}(M_g;\xi)\right] + \frac{\rho}{2}\left\|M_g-M+\mu_g\right\|_F^2,
\label{eq:local_lagrangian}
\end{equation}
where \(\rho > 0\) is the penalty parameter associated with the consensus constraint.

From the augmented Lagrangian, we obtain the optimization problems solved by the clients and the central server. Specifically, each client updates its local model by minimizing with respect to \(M_g\), while the server updates the global model by minimizing with respect to \(M\). The local problem solved by each client is given in Proposition~\ref{prop:client_update}.

\begin{proposition}[Client update]
\label{prop:client_update}
Given the updated global model \(M^*\), the client update \(M_g^*\) is obtained by solving the following convex optimization problem:
\begin{equation}
\label{eq:orig_cs}
\begin{aligned}
\mathcal{J}_g(M,\mu_g): \quad \min_{M_g,\, \lambda_g,\, \{s_n\}} \quad
& \lambda_g\, r_g + \frac{1}{N_g}\sum_{n=1}^{N_g} s_n + \frac{\rho}{2}\,\|M_g - M + \mu_g\|_F^2 \\
\text{s.t.} \quad
& \max_{c \in [K]} \Bigl[ v_{(c)}^\top (M_g x_n - y_n) + 1 \Bigr] - y_n^\top M_g x_n \le s_n, \quad \forall n \in [N_g], \\
& \max_{c \in [K]} \big[ v_{(c)}^\top (M_g x_n - y) + 1 \big] - y^\top M_g x_n - \kappa\,\lambda_g \le s_n, \\
& \qquad \forall n \in [N_g],\ \forall y \in \mathcal{Y},\ y \neq y_n, \\
& \lambda_g \ge \big\| (v_{(i)} - v_{(j)})^\top M_g \big\|_*, \quad \forall i,j \in [K],
\end{aligned}
\end{equation}
where \(\|\cdot\|_*\) denotes the dual of the norm used in the transportation cost \(d(\xi,\xi')\) in \eqref{eq:cost_function}.
\end{proposition}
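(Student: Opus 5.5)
The client update minimizes the local augmented Lagrangian \eqref{eq:local_lagrangian} in \(M_g\) with \(M\) and \(\mu_g\) fixed. The quadratic term \(\frac{\rho}{2}\|M_g-M+\mu_g\|_F^2\) simply carries through. So the task reduces to rewriting the worst-case expectation \(\sup_{\mathbb{Q}_g}\mathbb{E}^{\mathbb{Q}_g}[\ell_{\mathrm{CS}}(M_g;\xi)]\) as a finite convex program in \((M_g,\lambda_g,s)\), and then merging the inner minimization over \((\lambda_g,s)\) with the outer minimization over \(M_g\).

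\textbf{Strong duality.} For a type-1 Wasserstein ball around an empirical distribution with \(N_g\) atoms, I would invoke the standard Wasserstein DRO duality (e.g. \citet{mohajerin2018data}, \citet{shafieezadeh2019regularization}):
\[
\sup_{\mathbb{Q}_g \in \mathcal{A}^{(g)}_{r_g,1,d}}\mathbb{E}^{\mathbb{Q}_g}[\ell_{\mathrm{CS}}]
=\inf_{\lambda_g\ge 0}\ \lambda_g r_g+\frac{1}{N_g}\sum_{n=1}^{N_g}\sup_{(x,y)\in\Xi}\bigl[\ell_{\mathrm{CS}}(M_g;(x,y))-\lambda_g\|x-x_n\|-\lambda_g\kappa\mathbf{1}\{y\ne y_n\}\bigr].
\]
Introduce an epigraph variable \(s_n\) for each inner supremum. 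Because \(\mathcal{Y}\) is finite, each sup splits into a case \(y=y_n\) and the cases \(y\ne y_n\); the latter carry the extra \(-\kappa\lambda_g\). This yields the two families of constraints in \eqref{eq:orig_cs}, provided the supremum over \(x\) can be evaluated in closed form.

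\textbf{Supremum over \(x\) (the main obstacle).} For fixed \(y\), write
\[
\ell_{\mathrm{CS}}(M_g;(x,y))=\max_{c}\bigl[(v_{(c)}-y)^\top M_g x+1-v_{(c)}^\top y\bigr],
\]
a pointwise maximum of affine functions of \(x\) with slopes \(M_g^\top(v_{(c)}-y)\). When \(y=v_{(j)}\), these slopes are \(((v_{(c)}-v_{(j)})^\top M_g)^\top\).
\begin{itemize}
\item If \(\lambda_g\ge\|(v_{(i)}-v_{(j)})^\top M_g\|_*\) for all \(i,j\), then \(\sup_x[a^\top x-\lambda_g\|x-x_n\|]=a^\top x_n\) for every slope \(a\) in the family (assuming \(\Xi=\mathbb{R}^P\times\mathcal{Y}\)). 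Exchanging the two maxima, the sup over \(x\) is attained at \(x=x_n\), which gives exactly the stated left-hand sides.
\item If instead \(\lambda_g<\|a\|_*\) for some slope \(a\), moving \(x\) along the dual-norm maximizing direction of \(a\) sends the sup to \(+\infty\). Hence the dual-norm constraint can be imposed without loss; it is precisely the set where the dual objective is finite.
\end{itemize}
The constraint \(\lambda_g\ge0\) is implied by the dual-norm constraint (take \(i=j\)).

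\textbf{Assembly and convexity.} Combining these pieces with the quadratic penalty gives \eqref{eq:orig_cs}. Minimizing jointly over \(M_g\) and the dual variables is legitimate because an \(\inf\) of an \(\inf\) is a joint \(\inf\). Convexity follows term by term:
\begin{itemize}
\item the objective is linear in \((\lambda_g,s)\) plus a convex quadratic in \(M_g\);
\item each constraint left-hand side is a max of affine functions of \(M_g\), minus a linear term, so it is convex;
\item the dual-norm constraints are convex, being a norm of a linear map of \(M_g\) bounded by \(\lambda_g\).
\end{itemize}
The delicate points are justifying strong duality, which holds here because \(\ell_{\mathrm{CS}}\) is upper semicontinuous and the cost is a proper lower semicontinuous metric-like cost, and checking the \(+\infty\) case so that no feasible region is lost.
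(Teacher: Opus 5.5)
Your proposal is correct and follows the same route as the paper: Wasserstein strong duality, epigraph variables $s_n$, and then rewriting each inner supremum as the two Crammer--Singer loss constraints plus the dual-norm constraint on $\lambda_g$, with the quadratic consensus term simply carried along. The difference is one of self-containment. The paper delegates the supremum-over-$x$ step to Theorem~1 of \citet{ibrahim2024wasserstein}, whereas you carry out that dual-norm computation directly, and you also make explicit three points the paper leaves implicit: the convexity check, the redundancy of $\lambda_g\ge 0$, and the need for unbounded feature support $\Xi=\mathbb{R}^P\times\mathcal{Y}$ for the reformulation to be exact.
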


\begin{proof}
The proof is given in Appendix~\ref{app:proof_prop_client}.

\end{proof}

After solving its local subproblem, each client shares its updated local model with the central server. The global model is then updated at the server according to the closed-form expression given in Proposition~\ref{prop:server_update}.

\begin{proposition}[Server update]
\label{prop:server_update}
Given the updated local models \(M_g^*\) and scaled dual variables \(\mu_g^*\), the global model is updated according to
{\setlength{\abovedisplayskip}{4pt}
\setlength{\belowdisplayskip}{4pt}
\setlength{\abovedisplayshortskip}{4pt}
\setlength{\belowdisplayshortskip}{4pt}
\begin{equation}
M^{*}
=
\sum_{g=1}^{G} \alpha_g \left(M_g^{*} + \mu_g^{*}\right).
\label{eq:global_update}
\end{equation}
}
\end{proposition}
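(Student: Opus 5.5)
The plan is to derive the server step directly from the augmented Lagrangian. Fix the local models $M_g^*$ and the scaled duals $\mu_g^*$, and minimize $\mathcal{L}_\rho$ over the global variable $M$ alone.

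The first step is to observe that only the quadratic penalty terms in \eqref{eq:local_lagrangian} depend on $M$. The worst-case expectation $\sup_{\mathbb{Q}_g}\mathbb{E}^{\mathbb{Q}_g}[\ell_{\mathrm{CS}}(M_g;\xi)]$ is a function of $M_g$ only, so it is a constant in this subproblem. Hence the $M$-update reduces to
\[
\min_{M\in\mathbb{R}^{K\times P}} \; \frac{\rho}{2}\sum_{g=1}^G \alpha_g \|M_g^* - M + \mu_g^*\|_F^2 .
\]
Since $\rho>0$, $\alpha_g\ge 0$ and $\sum_g\alpha_g=1$, this objective is a strictly convex quadratic in $M$. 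Its Hessian is $\rho\,\mathrm{Id}$ (with $\sum_g\alpha_g=1$), so it has a unique minimizer, characterized by the first-order condition.

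The second step is to compute that condition. Differentiating with respect to $M$ gives
\[
-\rho\sum_{g=1}^G\alpha_g\,(M_g^*-M+\mu_g^*)=0 ,
\]
which yields $\big(\sum_g\alpha_g\big)M=\sum_g\alpha_g(M_g^*+\mu_g^*)$. Applying the normalization $\sum_g\alpha_g=1$ then gives exactly \eqref{eq:global_update}.

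The only subtle point is the justification for dropping the DRO term. In the consensus formulation \eqref{eq:fed_dro_admm}, the loss is evaluated at $M_g$, not at $M$, so the decoupling is legitimate. It is also worth noting that the weights $\alpha_g$ enter as fixed mixture weights in the Lagrangian. Beyond this, the argument is routine: a weighted least-squares average with no real obstacle.
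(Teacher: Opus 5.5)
Your proof is correct and follows the same route as the paper. The paper defers to Proposition~5 of \citet{ibrahim2024federated} with $w$ replaced by $M$ and the Euclidean norm by the Frobenius norm, and that result is exactly the argument you give: drop the worst-case loss term, which depends only on $M_g$, then solve the first-order condition of the strongly convex $\alpha_g$-weighted quadratic penalty using $\sum_g \alpha_g = 1$.
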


\begin{proof}
The proof follows as in Proposition~5 of \citet{ibrahim2024federated}, with the vector variable \(w\) replaced by the matrix variable \(M\) and the Euclidean norm replaced by the Frobenius norm.
\end{proof}

The updated global model \(M^*\) is then shared with all clients, after which they update their Lagrange multipliers and repeat the process. Algorithm~\ref{alg:fed_dro} summarizes the non-private Fed DRO algorithm.

\section{DP Federated DRO Algorithm}
\label{sec:dp_algorithm}

In this section, we develop the differentially private version of the federated multiclass DRO algorithm. In the non-private setting, each local update \(M_g^{(t)}\) is obtained by solving the client subproblem \(\mathcal{J}_g(M^{(t-1)},\mu_g^{(t-1)})\) in \eqref{eq:orig_cs} and is sent directly to the server. In contrast, the private algorithm releases the perturbed update \(\widetilde{M}_g^{(t)}\) while keeping the perturbed local variable \(\widetilde{\lambda}_g^{(t)}\) local, where \(\widetilde{M}_g^{(t)}\) and \(\widetilde{\lambda}_g^{(t)}\) are obtained by adding Gaussian noise to \(M_g^{(t)}\) and \(\lambda_g^{(t)}\), respectively. However, directly privatizing the exact client update is difficult because \(\mathcal{J}_g\) does not satisfy the smoothness and strong convexity assumptions commonly used in sensitivity analysis. We therefore replace the exact local DRO subproblem with a proximal linearized approximation, which enables tractable sensitivity analysis while preserving the structure of the original update. To construct this approximation, we first move the norm constraint on \(\lambda_g\) in (7) into the objective through a Bertsekas-type quadratic penalty, where \(\tau_g\ge0\) denotes the associated multiplier, updated by the algorithm in each round, and \(\gamma>0\) is the corresponding penalty parameter. This step yields an unconstrained client objective that is amenable to linearization. Letting \(\phi(M_g):=\max_{i,j}\|(v_{(i)}-v_{(j)})^\top M_g\|_*\), we write the resulting augmented Lagrangian as:
\begin{align}
\mathcal{L}_{\rho,g}(M_g,\lambda_g)
&= \lambda_g\,r_g + \frac{1}{N_g}\sum_{n=1}^{N_g} \max\!\Big\{ \ell_{\mathrm{CS}}(M_g;x_n,y_n),\ \max_{y\ne y_n}\big(\ell_{\mathrm{CS}}(M_g;x_n,y)-\kappa\,\lambda_g\big) \Big\} \notag\\
&\quad + \frac{\rho}{2}\,\|M_g - M + \mu_g\|_F^2 + \frac{1}{2\gamma}\Big( \max\{0,\tau_g+\gamma(\phi(M_g)-\lambda_g)\}^{2}-\tau_g^{2} \Big).
\label{eq:unconstrained_max_form}
\end{align}

\begin{algorithm}[H]
\caption{DP-Fed DRO}
\label{alg:dp_fed_dro}
\begin{algorithmic}[1]
\State \textbf{Initialize:} $M^{(0)}$, $\{M_g^{(0)}\}_{g=1}^{G}$, $\{\widetilde{M}_g^{(0)}\}_{g=1}^{G}$, $\{\lambda_g^{(0)}\}_{g=1}^{G}$, $\{\widetilde{\lambda}_g^{(0)}\}_{g=1}^{G}$, $\{\mu_g^{(0)}\}_{g=1}^{G}$, $\{\tau_g^{(0)}\}_{g=1}^{G}$
\State \textbf{Parameters:} $\rho$, $\gamma$, $\{\eta_g\}$, $\{\nu_g\}$, $\{\sigma_{M,g}^2\}$, $\{\sigma_{\lambda,g}^2\}$, $T$
\State \textbf{Output:} $M^{*}$

\For{$t = 1, \dots, T$}
    \State \textbf{Client Update:}
    \For{clients $g = 1, \dots, G$}
        \State $(M_g^{(t)},\lambda_g^{(t)}) \gets \arg\min_{(M_g,\lambda_g)} \widehat{\mathcal{L}}_{\rho,g}^{\,t}(M_g,\lambda_g)$
        \State $\xi_g^{(t)} \gets \mathcal{MN}_{K,P}\!\left(0, I_K, \sigma_{M,g}^2 I_P\right)$
        \State $\zeta_g^{(t)} \gets \mathcal{N}\!\left(0,\sigma_{\lambda,g}^2\right)$
        \State $\widetilde{M}_g^{(t)} \gets M_g^{(t)} + \xi_g^{(t)}$
        \State $\widetilde{\lambda}_g^{(t)} \gets \lambda_g^{(t)} + \zeta_g^{(t)}$
        \State Send $\widetilde{M}_g^{(t)}$ to the central server
    \EndFor
    \State \textbf{Server Update:}
    \State $M^{(t)} \gets \sum_{g=1}^{G}\alpha_g\left(\widetilde{M}_g^{(t)} + \mu_g^{(t-1)}\right)$
    \State Broadcast $M^{(t)}$ to all clients
    \State \textbf{Client Update:}
    \For{clients $g = 1, \dots, G$}
        \State $\mu_g^{(t)} \gets \mu_g^{(t-1)} + \widetilde{M}_g^{(t)} - M^{(t)}$
        \State $\tau_g^{(t)} \gets \max\!\left\{0,\tau_g^{(t-1)} + \gamma\bigl(\phi(\widetilde{M}_g^{(t)})-\widetilde{\lambda}_g^{(t)}\bigr)\right\}$
    \EndFor
\EndFor
\end{algorithmic}
\end{algorithm}

Then, we form a first-order approximation of the augmented
Lagrangian and add \(\ell_2\)-norm proximal terms. For client \(g\)
at iteration \(t\), the resulting approximation is given by
\begin{align}
\widehat{\mathcal L}_{\rho,g}^t(M_g,\lambda_g)
={}& f(\widetilde M_g^{t-1},\widetilde\lambda_g^{t-1}) + \lambda_g r_g + \left\langle \partial_{M_g}f(\widetilde M_g^{t-1},\widetilde\lambda_g^{t-1}), M_g-\widetilde M_g^{t-1} \right\rangle_F \notag\\*
&+ \left\langle \partial_{\lambda_g}f(\widetilde M_g^{t-1},\widetilde\lambda_g^{t-1}), \lambda_g-\widetilde\lambda_g^{t-1} \right\rangle + \frac{\rho}{2} \left\|M_g-M^{t-1}+\mu_g^{t-1}\right\|_F^2 \notag\\*
&+ \frac{1}{2\eta_g} \left\|M_g-\widetilde M_g^{t-1}\right\|_F^2 + \frac{1}{2\nu_g} \left\|\lambda_g-\widetilde\lambda_g^{t-1}\right\|^2 .
\label{eq:prox_linearized_lagrangian}
\end{align}

where \(f(M_g,\lambda_g)\) collects the nonsmooth terms in the client objective
\begin{align}
f(M_g,\lambda_g)
:={}& \frac{1}{N_g}\sum_{n=1}^{N_g} \max\!\Big\{ \ell_{\mathrm{CS}}(M_g; x_n, y_n),\ \max_{y \in \mathcal{Y}\setminus\{y_n\}} \big(\ell_{\mathrm{CS}}(M_g; x_n, y) - \kappa\,\lambda_g\big) \Big\} \notag\\
&+ \frac{1}{2\gamma}\Big( \max\{0,\, \tau_g + \gamma\big(\phi(M_g)-\lambda_g\big)\}^{2} - \tau_g^{2} \Big).
\label{eq:def_f_local}
\end{align}

Here, \(\eta_g>0\) and \(\nu_g>0\) are proximal step sizes held constant across rounds and defined in Section~\ref{sec:convergence}. 

To solve the problem, we first initialize the global model \(M\), the local models \(M_g\), the local DRO variable \(\lambda_g\), and the dual variables \(\mu_g\) and \(\tau_g\). At each round \(t\), every client solves its local subproblem by minimizing \(\widehat{\mathcal{L}}_{\rho,g}^t(M_g,\lambda_g)\) with respect to \((M_g,\lambda_g)\), yielding the local updates \(M_g^t\) and \(\lambda_g^t\). Each client then samples a noise matrix \(\xi_g^t \sim \mathcal{MN}_{K,P}(0, I_K,(\sigma_{M,g})^2 I_P)\) and a scalar noise variable \(\zeta_g^t \sim \mathcal{N}(0,(\sigma_{\lambda,g})^2)\), where \(\mathcal{MN}_{K,P}(0, I_K,(\sigma_{M,g})^2 I_P)\) denotes a \(K \times P\) matrix normal distribution with zero mean and entrywise variance \((\sigma_{M,g})^2\). The noise terms are added to \(M_g^t\) and \(\lambda_g^t\) to obtain the perturbed variables \(\widetilde{M}_g^t\) and \(\widetilde{\lambda}_g^t\), respectively. The perturbed local updates \(\widetilde{M}_g^t\) are shared with the central server, while \(\widetilde{\lambda}_g^t\) remains local and is carried forward to the next iteration. The server uses \(\widetilde{M}_g^t\) together with the current dual variables \(\mu_g\) to compute the updated global model \(M^t\). The server then shares \(M^t\) with all clients, which update their dual variables using \(\widetilde{M}_g^t\) and \(\widetilde{\lambda}_g^t\) and proceed to the next round. This process is repeated for \(T\) rounds.

The design of Algorithm~\ref{alg:dp_fed_dro} is guided by three main considerations. First, the approximate augmented Lagrangian enables us to handle a nonsmooth objective by replacing it with a first-order approximation. This makes the local subproblem easier to solve, leads to a closed-form update, and allows faster computation. More importantly, it enables us to derive upper bounds on the sensitivity of the local updates without requiring the original objective to be smooth or strongly convex. Second, the proximal terms $\|M_g - \widetilde{M}_g^{t-1}\|$ and $\|\lambda_g - \widetilde{\lambda}_g^{t-1}\|$ keep the updated variables close to the previous perturbed iterates, which regularizes the local surrogate problem and prevents the linearized approximation from deviating too far from its expansion point. This helps stabilize the optimization procedure and promotes smoother updates across iterations. Third, the step sizes \(\eta_g\) and \(\nu_g\) control both optimization progress and privacy. Since the sensitivities of the local updates scale with the step sizes, the step size directly determines the amount of injected noise, which allows the two to be tuned jointly through a single parameter.

\section{Differential Privacy Guarantees}
\label{sec:privacy}
In this section, we establish the differential privacy guarantees of the proposed algorithm. We begin by deriving sensitivity bounds for the local model update and the local DRO variable, which determine the variances of the Gaussian perturbations and serve as the basis for the privacy analysis across rounds. The following lemma first provides the sensitivity bound for the local model update.

\begin{lemma}[Sensitivity of the local model update]
\label{lem:sensitivity_Mg}
Let \(D_g\) and \(D_g'\) be two adjacent local datasets that differ in exactly one sample. Assume that the feature vectors satisfy \(\|x_n\|_2 \le L\) for all \(n\), and let \(M_g^t(D_g)\) denote the client update obtained from the proximal linearized subproblem at iteration \(t\). Then the \(\ell_2\)-sensitivity of the released client update satisfies
\[
\max_{D_g \sim D_g'}
\left\|M_g^t(D_g)-M_g^t(D_g')\right\|_F
\le
\frac{2\sqrt{2}\,L}{N_g\left(\rho+\frac{1}{\eta_g}\right)}.
\]
\end{lemma}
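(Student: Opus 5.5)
The plan is to use the fact that the proximal linearized subproblem is separable in \(M_g\) and \(\lambda_g\), and that its \(M_g\)-part is a strongly convex quadratic with a closed-form minimizer. Setting the Frobenius gradient of \(\widehat{\mathcal L}_{\rho,g}^t\) with respect to \(M_g\) to zero gives
\[
\partial_{M_g} f(\widetilde M_g^{t-1},\widetilde\lambda_g^{t-1}) + \rho\bigl(M_g - M^{t-1}+\mu_g^{t-1}\bigr) + \tfrac{1}{\eta_g}\bigl(M_g-\widetilde M_g^{t-1}\bigr)=0,
\]
which yields
\[
M_g^t = \frac{\rho(M^{t-1}-\mu_g^{t-1}) + \tfrac{1}{\eta_g}\widetilde M_g^{t-1} - \partial_{M_g} f(\widetilde M_g^{t-1},\widetilde\lambda_g^{t-1})}{\rho+\tfrac{1}{\eta_g}}.
\]
In the sensitivity analysis we condition on the previously released and perturbed quantities \(M^{t-1},\mu_g^{t-1},\widetilde M_g^{t-1},\widetilde\lambda_g^{t-1},\tau_g^{t-1}\), as is standard for adaptive composition. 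Under this conditioning, the only data-dependent term is the (sub)gradient \(\partial_{M_g} f\). Hence
\[
\|M_g^t(D_g)-M_g^t(D_g')\|_F = \frac{\|\partial_{M_g} f(D_g)-\partial_{M_g} f(D_g')\|_F}{\rho+1/\eta_g}.
\]
This reduces the lemma to bounding the change in the subgradient by \(2\sqrt2 L/N_g\).

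Next, I would decompose \(f\). The penalty term \(\frac{1}{2\gamma}(\max\{0,\cdot\}^2-\tau_g^2)\) depends on the data only through \(\tau_g\), which is fixed under the conditioning, and on \(\phi(M_g)\). It is therefore identical for \(D_g\) and \(D_g'\), and so is the subgradient selection we fix for it. The data term is an average \(\frac1{N_g}\sum_n h_n(M_g)\), where \(h_n\) is the pointwise max of Crammer--Singer losses over the observed and flipped labels. Since \(D_g\) and \(D_g'\) differ in exactly one sample \(n_0\), the difference of subgradients equals \(\frac1{N_g}\bigl(\partial h_{n_0}(\hat\xi)-\partial h_{n_0}(\hat\xi')\bigr)\), provided we use a deterministic subgradient selection rule for the other terms. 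The triangle inequality then gives a bound of \(\frac{2}{N_g}\sup_\xi\|\partial h(\xi)\|_F\).

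The remaining step is to bound the subgradient of a single \(h\). An active piece has the form \(\ell_{\mathrm{CS}}(M;x,y)\), possibly minus \(\kappa\lambda\), which does not affect \(\partial_M\). Its subgradient is \((v_{(c)}-y)x^\top\), where \(c\) is the maximizing class. If \(c\) differs from the label, \(\|v_{(c)}-y\|_2=\sqrt2\); otherwise the subgradient is zero. Thus \(\|(v_{(c)}-y)x^\top\|_F=\|v_{(c)}-y\|_2\|x\|_2\le\sqrt2 L\), and the subgradient of a max of such pieces is a convex combination, so the same bound applies. Combining these gives \(\|\Delta\partial f\|_F\le 2\sqrt2L/N_g\), and dividing by \(\rho+1/\eta_g\) yields the claim.

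I expect the main obstacle to be justifying the conditioning step. We must make sure that \(\partial_M\) of the penalty term and the previous noisy iterates are treated as fixed (post-processing of earlier releases), and that the subgradient selection for the nonsmooth max is done consistently across the two datasets. I would handle this by stating explicitly that the mechanism uses a fixed selection rule applied per sample, so only the \(n_0\)-th summand changes.
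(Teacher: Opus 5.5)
Your proposal is correct and follows the paper's proof essentially step by step. Both arguments derive the closed-form $M_g$-minimizer from the first-order condition and condition on the previous perturbed state (including $\tau_g^{t-1}$), so that only the differing sample's robust-loss subgradient changes; both then bound each such subgradient $(v_{(c)}-y)x^\top$ by $\sqrt{2}L$ and apply the triangle inequality. Your explicit requirement of a fixed per-sample subgradient selection rule is a welcome clarification of a point the paper leaves implicit.
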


\begin{proof}
The proof is given in Appendix~\ref{app:proof_lem_sens_M}.
\end{proof}

The assumption in Lemma~\ref{lem:sensitivity_Mg} implies a uniform bound on the Frobenius norm of each individual subgradient, which is ensured by the normalization of the feature vectors. We next derive
the sensitivity of the local DRO variable. 

\begin{lemma}[Sensitivity of the local DRO variable]
\label{lem:sensitivity_lambda}
Let \(D_g\) and \(D_g'\) be two adjacent local datasets that differ
in exactly one sample, and let \(\lambda_g^t(D_g)\) denote the
local DRO variable obtained from the proximal linearized
subproblem at iteration \(t\). Then,
\[
\max_{D_g\sim D_g'}
\left|
\lambda_g^t(D_g)-\lambda_g^t(D_g')
\right|
\le
\frac{\kappa\nu_g}{N_g}.
\]
\end{lemma}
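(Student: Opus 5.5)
The plan is to exploit the closed form of the $\lambda_g$-update of the proximal linearized subproblem \eqref{eq:prox_linearized_lagrangian}. The objective $\widehat{\mathcal L}_{\rho,g}^t$ is separable in $M_g$ and $\lambda_g$. Its $\lambda_g$-part is
\[
\lambda_g r_g + \bigl\langle \partial_{\lambda_g} f(\widetilde M_g^{t-1},\widetilde\lambda_g^{t-1}), \lambda_g-\widetilde\lambda_g^{t-1}\bigr\rangle + \tfrac{1}{2\nu_g}\bigl(\lambda_g-\widetilde\lambda_g^{t-1}\bigr)^2,
\]
which is a strongly convex scalar quadratic. Setting its derivative to zero gives
\[
\lambda_g^t = \widetilde\lambda_g^{t-1} - \nu_g\bigl(r_g + h_g\bigr),
\]
where $h_g$ is the chosen subgradient of $f$ with respect to $\lambda_g$ at the previous perturbed iterate. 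As in the sensitivity analysis of Lemma~\ref{lem:sensitivity_Mg}, we condition on the same previous iterates $(\widetilde M_g^{t-1},\widetilde\lambda_g^{t-1},\tau_g,\mu_g,M^{t-1})$ for both datasets. Then $\widetilde\lambda_g^{t-1}$ and $r_g$ cancel in the difference, and
\[
|\lambda_g^t(D_g)-\lambda_g^t(D_g')| = \nu_g\,|h_g(D_g)-h_g(D_g')|.
\]

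Next we decompose $h_g$ into its two sources. The penalty term $\frac{1}{2\gamma}(\max\{0,\tau_g+\gamma(\phi(M_g)-\lambda_g)\}^2-\tau_g^2)$ has $\lambda_g$-derivative $-\max\{0,\tau_g+\gamma(\phi(\widetilde M_g^{t-1})-\widetilde\lambda_g^{t-1})\}$. This quantity does not depend on the data, so it cancels between $D_g$ and $D_g'$. What remains is the data term $\frac{1}{N_g}\sum_n \partial_{\lambda_g}\max\{\ell_{\mathrm{CS}}(M_g;x_n,y_n),\max_{y\ne y_n}(\ell_{\mathrm{CS}}(M_g;x_n,y)-\kappa\lambda_g)\}$.

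For each sample, the first branch of the max is constant in $\lambda_g$ and each branch in the second group has slope $-\kappa$. A subgradient of the pointwise max is therefore a convex combination of these slopes, so it lies in $[-\kappa,0]$. We fix a deterministic subgradient selection rule that depends only on the sample, such as taking an active branch. Since $D_g$ and $D_g'$ differ in one sample, all other summands coincide and cancel. The differing pair contributes at most $\frac{1}{N_g}\cdot|a-b|$ with $a,b\in[-\kappa,0]$, which is at most $\kappa/N_g$. Multiplying by $\nu_g$ gives the claimed bound $\kappa\nu_g/N_g$. Unlike Lemma~\ref{lem:sensitivity_Mg}, there is no triangle inequality over two terms and hence no factor $2$. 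This is because both per-sample subgradients lie in the same one-sided interval, so their difference is bounded by $\kappa$ rather than $2\kappa$.

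The main obstacle is being careful about what is held fixed. We must argue explicitly that the penalty contribution and the previous iterates are identical under $D_g$ and $D_g'$. This is the standard per-round sensitivity convention, with the previous perturbed outputs treated as already released or conditioned upon. We must also make sure the subgradient selection is consistent, so that the non-differing samples truly cancel. If the convention is instead that the subgradient is an arbitrary element, we would restrict to a fixed selection rule, which the algorithm implicitly uses.
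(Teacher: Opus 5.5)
Your proposal is correct and matches the paper's proof essentially step for step. It obtains the closed-form update $\lambda_g^t=\widetilde\lambda_g^{t-1}-\nu_g(r_g+\partial_{\lambda_g}f)$ from first-order optimality, conditions on the state at the start of round $t$ so that the Bertsekas-penalty term and all unchanged samples cancel, and uses that each per-sample $\lambda_g$-subgradient lies in $[-\kappa,0]$, which bounds the difference by $\kappa$ and gives sensitivity $\kappa\nu_g/N_g$.
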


\begin{proof}
The proof is given in Appendix~\ref{app:proof_lem_sens_lambda}.
\end{proof}

Using the sensitivity bounds from Lemmas~1 and~2, we now calibrate
the Gaussian noise added to the local model update and DRO variable,
and establish the corresponding differential privacy guarantees for
the perturbed client state at each iteration of Algorithm~2.

\begin{theorem}[Differential Privacy of the Perturbed Client State]
\label{thm:dp_client_update}
Under the assumptions of Lemmas~\ref{lem:sensitivity_Mg} and
\ref{lem:sensitivity_lambda}, consider the perturbed variables
\[
\widetilde{M}_g^t=M_g^t+\xi_g^t,
\qquad
\widetilde{\lambda}_g^t
=\lambda_g^t+\zeta_g^t,
\]
where \(\xi_g^t\sim\mathcal{MN}_{K,P}(0,I_K,\sigma_{M,g}^2 I_P)\) and
\(\zeta_g^t\sim\mathcal{N}(0,\sigma_{\lambda,g}^2)\), with
\[
\sigma_{M,g} =
\frac{2\sqrt{2}L}
{N_g\left(\rho+\frac{1}{\eta_g}\right)}
\frac{\sqrt{2\ln(1.25/\delta_M)}}{\epsilon_M},
\]
and
\[
\sigma_{\lambda,g} =
\frac{\kappa\nu_g}{N_g}
\frac{\sqrt{2\ln(1.25/\delta_\lambda)}}{\epsilon_\lambda}.
\]
Then \(\widetilde{M}_g^t\) satisfies
\((\epsilon_M,\delta_M)\)-differential privacy and
\(\widetilde{\lambda}_g^t\) satisfies
\((\epsilon_\lambda,\delta_\lambda)\)-differential privacy.
\end{theorem}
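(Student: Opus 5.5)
The plan is to apply the classical Gaussian mechanism \citep{dwork2014algorithmic} separately to each released quantity, using Lemmas~\ref{lem:sensitivity_Mg} and \ref{lem:sensitivity_lambda} as the $\ell_2$-sensitivity inputs. The classical result states the following. Let $h$ be a deterministic map from datasets to $\mathbb{R}^m$ with $\ell_2$-sensitivity $\Delta_2 h$, and let $\epsilon\in(0,1)$. Then releasing $h(D)+Z$ with $Z\sim\mathcal{N}(0,\sigma^2 I_m)$ and $\sigma\ge \Delta_2 h\,\sqrt{2\ln(1.25/\delta)}/\epsilon$ is $(\epsilon,\delta)$-differentially private. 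I would therefore state the theorem implicitly under $\epsilon_M,\epsilon_\lambda\in(0,1)$, which is the standard condition for this constant.

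For the model update, I would first vectorize. The map $D_g\mapsto \mathrm{vec}(M_g^t(D_g))\in\mathbb{R}^{KP}$ has Euclidean norm equal to the Frobenius norm of the matrix. The noise $\xi_g^t\sim\mathcal{MN}_{K,P}(0,I_K,\sigma_{M,g}^2 I_P)$ has $\mathrm{vec}(\xi_g^t)\sim\mathcal{N}(0,I_P\otimes \sigma_{M,g}^2 I_K)=\mathcal{N}(0,\sigma_{M,g}^2 I_{KP})$, so it is isotropic Gaussian. Lemma~\ref{lem:sensitivity_Mg} gives $\Delta_2 \le 2\sqrt{2}L/(N_g(\rho+1/\eta_g))$. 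The prescribed $\sigma_{M,g}$ is exactly this bound times $\sqrt{2\ln(1.25/\delta_M)}/\epsilon_M$, so the Gaussian mechanism yields $(\epsilon_M,\delta_M)$-DP for $\widetilde M_g^t$. Privacy is also preserved under the bijection $\mathrm{vec}^{-1}$ by post-processing.

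For the DRO variable, the argument is the one-dimensional case with $\Delta_2=\kappa\nu_g/N_g$ from Lemma~\ref{lem:sensitivity_lambda}, and the given $\sigma_{\lambda,g}$ matches the threshold. This gives $(\epsilon_\lambda,\delta_\lambda)$-DP for $\widetilde\lambda_g^t$.

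The main subtlety, and the step I expect to be the real obstacle, is that the statement is per round. At round $t$, $M_g^t$ depends on the previous perturbed iterates $\widetilde M_g^{t-1}$, $\widetilde\lambda_g^{t-1}$, $M^{t-1}$, $\mu_g^{t-1}$ and $\tau_g^{t-1}$. I would handle this by conditioning on all quantities produced in earlier rounds and treating them as fixed inputs, which is how the sensitivity lemmas are formulated, since they compare $D_g$ and $D_g'$ at the same expansion point. With these inputs fixed, the round-$t$ update is a deterministic function of $D_g$, and the guarantee holds conditionally, which is the per-round claim. Composition across rounds is deferred to the moments accountant. I would also remark that the joint release $(\widetilde M_g^t,\widetilde\lambda_g^t)$ is $(\epsilon_M+\epsilon_\lambda,\delta_M+\delta_\lambda)$-DP by basic composition, although only $\widetilde M_g^t$ is actually sent to the server.
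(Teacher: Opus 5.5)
Your proposal is correct and follows the same route as the paper, which applies the Gaussian mechanism (Dwork--Roth, Theorem~3.22) separately to $M_g^t$ and $\lambda_g^t$, using the sensitivity bounds of Lemmas~\ref{lem:sensitivity_Mg} and~\ref{lem:sensitivity_lambda}. Your added details are sound refinements of that one-line argument, all of which the paper leaves implicit: vectorizing the matrix-normal noise into $\mathcal{N}(0,\sigma_{M,g}^2 I_{KP})$, conditioning on the earlier-round state, and noting that the cited theorem requires $\epsilon_M,\epsilon_\lambda\in(0,1)$.
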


\begin{proof}
By Lemma~\ref{lem:sensitivity_Mg}, the local model update
\(M_g^t\) has bounded \(\ell_2\)-sensitivity, and by
Lemma~\ref{lem:sensitivity_lambda}, the local DRO variable
\(\lambda_g^t\) has bounded sensitivity. Applying the Gaussian
mechanism \cite[Theorem~3.22]{dwork2014algorithmic} separately
to each variable, with noise calibrated to its corresponding
sensitivity, yields the stated privacy guarantees for
\(\widetilde{M}_g^t\) and \(\widetilde{\lambda}_g^t\).
\end{proof}

Theorem~\ref{thm:dp_client_update} establishes the individual privacy guarantees for the
two Gaussian mechanisms applied at each training iteration. We now
extend this analysis to the full training procedure over \(T\) rounds.
Since the perturbed variables are generated sequentially and each
iteration depends on the previous perturbed state, the overall
mechanism forms an adaptive composition of the two Gaussian
mechanisms across iterations. To obtain a tighter bound on the
cumulative privacy loss, we use the moments accountant method
\citep{abadi2016deep}, which tracks and composes the log moments of
the privacy loss random variables.

\begin{theorem}[Moments Accountant Composition]
\label{thm:composition}
Under the assumptions of Lemmas~\ref{lem:sensitivity_Mg} and~\ref{lem:sensitivity_lambda}, suppose that at each
iteration \(t=1,\ldots,T\), client \(g\) applies the Gaussian
mechanisms of Theorem~\ref{thm:dp_client_update} to the model update \(M_g^t\) and the local
DRO variable \(\lambda_g^t\), with per-variable privacy budgets
\(\epsilon_M,\epsilon_\lambda>0\) and
\(\delta_M=\delta_\lambda=\delta\).
Then the adaptive composition of the two Gaussian mechanisms over
\(T\) iterations satisfies
\((\bar{\epsilon},\bar{\delta})\)-differential privacy, where
\[
\bar{\epsilon}
=
\frac{T\left(\epsilon_M^2+\epsilon_\lambda^2\right)}{4\ln(1.25/\delta)}
+
\sqrt{
\frac{T\left(\epsilon_M^2+\epsilon_\lambda^2\right)\ln(1/\bar{\delta})}
{\ln(1.25/\delta)}
}.
\]
\end{theorem}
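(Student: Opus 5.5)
The plan is to reduce everything to the Gaussian mechanism with unit-normalized noise and then compose via the log-moment (equivalently Rényi) bound. By Theorem~\ref{thm:dp_client_update}, the noise multiplier of the model mechanism, i.e.\ the ratio of standard deviation to $\ell_2$-sensitivity from Lemma~\ref{lem:sensitivity_Mg}, is
\[
z_M=\sigma_{M,g}/\Delta_M=\sqrt{2\ln(1.25/\delta)}/\epsilon_M .
\]
Likewise $z_\lambda=\sqrt{2\ln(1.25/\delta)}/\epsilon_\lambda$ for the DRO-variable mechanism with sensitivity from Lemma~\ref{lem:sensitivity_lambda}. This holds for every round $t$, because both sensitivity bounds are uniform in $t$ and in the (adaptively chosen) previous perturbed state $(\widetilde M_g^{t-1},\widetilde\lambda_g^{t-1},M^{t-1},\mu_g^{t-1},\tau_g^{t-1})$. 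Conditioning on that state is exactly what the adaptive composition framework of \citet{abadi2016deep} requires.

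First, compute the log moment of a single Gaussian mechanism with noise multiplier $z$ (full batch, no subsampling). For two shifted Gaussians whose means differ by at most the sensitivity, the privacy loss random variable is itself Gaussian, with mean $1/(2z^2)$ and variance $1/z^2$. Hence
\[
\alpha(\beta)=\ln\mathbb{E}\bigl[e^{\beta c}\bigr]=\frac{\beta(\beta+1)}{2z^2}
\]
exactly. A worst-case shift of smaller norm only decreases this, so it is an upper bound for each mechanism. By the composability theorem of the moments accountant, the log moments add across the two mechanisms and the $T$ rounds:
\[
\alpha_{\mathrm{tot}}(\beta)\le T\beta(\beta+1)\Bigl(\frac{1}{2z_M^2}+\frac{1}{2z_\lambda^2}\Bigr)=\beta(\beta+1)\,\frac{T(\epsilon_M^2+\epsilon_\lambda^2)}{4\ln(1.25/\delta)}=:\beta(\beta+1)A .
\]

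Second, apply the tail bound of the moments accountant:
\[
\bar\delta=\min_{\beta>0}\exp\bigl(\alpha_{\mathrm{tot}}(\beta)-\beta\bar\epsilon\bigr).
\]
Solving for $\bar\epsilon$ at fixed $\bar\delta$ gives
\[
\bar\epsilon=\min_\beta\Bigl[(\beta+1)A+\frac{\ln(1/\bar\delta)}{\beta}\Bigr].
\]
The optimal choice is $\beta=\sqrt{\ln(1/\bar\delta)/A}$. With it,
\[
\bar\epsilon=A+2\sqrt{A\ln(1/\bar\delta)}=A+\sqrt{4A\ln(1/\bar\delta)} .
\]
Since $4A=T(\epsilon_M^2+\epsilon_\lambda^2)/\ln(1.25/\delta)$, this is precisely the stated expression.

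The algebra is routine. The main obstacle is justifying that the per-round log-moment bound holds uniformly under adaptivity. One point needs care: $\widetilde\lambda_g^t$ is not released, yet it feeds the next round through $\tau_g^t$. I would treat the pair $(\widetilde M_g^t,\widetilde\lambda_g^t)$ as the released per-round output, which only strengthens the guarantee. Then each round is a mechanism whose input is the previous outputs plus $D_g$, and the sensitivity lemmas hold for every fixed value of those inputs. A second point is that $\beta$ must be a positive integer (or one uses the Rényi real-order version). I would either use the Rényi-DP formulation, which permits real $\beta$, or note that the bound holds for real $\beta$ when the loss is exactly Gaussian.
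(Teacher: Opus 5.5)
Your proposal is correct and follows essentially the same route as the paper. The paper computes the exact Gaussian log moment $\tau(\tau+1)\Delta^2/(2\sigma^2)$ for each mechanism and substitutes the calibration of Theorem~\ref{thm:dp_client_update} to get $\tau(\tau+1)\epsilon^2/(4\ln(1.25/\delta))$. It then sums over the two mechanisms and the $T$ rounds and optimizes the tail bound over $\tau$ to obtain $A+2\sqrt{A\ln(1/\bar\delta)}$. Your extra remarks are consistent with the paper's treatment: it implicitly composes both $(\widetilde M_g^t,\widetilde\lambda_g^t)$ as per-round outputs, and it notes that the Gaussian computation holds for every real order $\tau>0$.
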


\begin{proof}
The proof is given in Appendix~\ref{app:proof_comp_theorem}.
\end{proof}

Notice that the first term in \(\bar\epsilon\) grows linearly in \(T\), whereas the second grows only as \(\sqrt{T}\). For small per-round budgets, the quadratic factor \(\epsilon_M^2+\epsilon_\lambda^2\) makes the first term negligible, so the cumulative privacy loss is dominated by the \(\sqrt{T}\) term. In other words, the moments accountant yields a cumulative budget that grows roughly as \(\sqrt{T}\) rather than linearly in \(T\), which is precisely its advantage over composing the per-round guarantees directly.

\section{Convergence Analysis}
\label{sec:convergence}
In this section, we analyze the convergence of the proposed differentially private federated multiclass DRO algorithm. We begin with a single-iteration bound, which characterizes one round of the local and global updates and serves as the basis for the convergence result over all training rounds. To simplify notation, for each client \(g\), define the stacked variable $z_g := (M_g,\lambda_g).$ Correspondingly, let $\widetilde z_g^t := (\widetilde M_g^t, \widetilde\lambda_g^t)$ denote the perturbed iterate of Algorithm~\ref{alg:dp_fed_dro}. For each sample \(n = 1,\dots,N_g\), define the local robust loss
\[
\begin{aligned}
L_n(M_g,\lambda_g)
:=
\max\!\Big\{
&\ell_{\mathrm{CS}}(M_g;x_n,y_n),\\
&\max_{y\neq y_n}
\bigl(\ell_{\mathrm{CS}}(M_g;x_n,y)-\kappa\lambda_g\bigr)
\Big\}.
\end{aligned}
\]

Also define the local constraint-penalty term
\[
R_g(M_g,\lambda_g;\tau_g)
:=
\frac{1}{2\gamma}\left(
\max\{0,\tau_g+\gamma\bigl(\phi(M_g)-\lambda_g\bigr)\}^{2}
-\tau_g^{2}
\right).
\]

Then, the augmented non-consensus part of the client objective can be written compactly as
\[
h_g(z_g;\tau_g)
:=
\lambda_g r_g
+
\frac{1}{N_g}\sum_{n=1}^{N_g} L_n(M_g,\lambda_g)
+
R_g(M_g,\lambda_g;\tau_g).
\]

Let \(z_g^\star := (M^\star,\lambda_g^\star)\) denote an optimal point of the
exact constrained local DRO reformulation, which in particular satisfies
\(\lambda_g^\star\ge\phi(M^\star)\). Since \(\tau_g\ge 0\) along the trajectory of
Algorithm~\ref{alg:dp_fed_dro}, feasibility implies
$R_g(z_g^\star;\tau_g)\le 0$ for every such \(\tau_g\), with equality when the
constraint is active. Hence, at the comparison point, the augmented
non-consensus client objective is bounded above by the original
non-consensus client objective.

We evaluate convergence using two quantities. The term
\(h_g^{t}(\widetilde z_g^{\,t-1})-h_g^{t}(z_g^\star)\) measures the augmented
objective gap of the perturbed client iterate at round \(t\), while
\(\|\bar{\widetilde M}_g^{\,T}-\bar M^{\,T}\|_F\), with
\(\bar{\widetilde M}_g^{\,T}:=\frac{1}{T}\sum_{t=1}^{T}\widetilde M_g^{\,t}\)
and \(\bar M^{\,T}:=\frac{1}{T}\sum_{t=1}^{T}M^{\,t}\), measures the violation
of the consensus constraint between the averaged local and global models.
Accordingly, the convergence result is stated in terms of the combined metric
\begin{align*}
\frac{1}{T}\sum_{t=1}^{T}\sum_{g=1}^{G}\alpha_g
\bigl(h_g^{t}(\widetilde z_g^{\,t-1})-h_g^{t}(z_g^\star)\bigr)
+\beta\sum_{g=1}^{G}\alpha_g
\|\bar{\widetilde M}_g^{\,T}-\bar M^{\,T}\|_F .
\end{align*}
Since \(h_g\) is convex, though possibly nonsmooth, in \(z_g\) for fixed
\(\tau_g\), the following lemma establishes the key one-step inequality used
to derive this convergence result.

\begin{lemma}[Single-iteration bound]\label{lem:single_iter}
Assume that, for each client $g$, $h_g^{t}$ is convex and that the proximal
step sizes satisfy $\eta_g>0$ and $\nu_g>0$. Then, for any
$z_g=(M_g,\lambda_g)$ and $\mu_g$, the iterates generated by
Algorithm~\ref{alg:dp_fed_dro} satisfy
\begin{align}
&\sum_{g=1}^G \alpha_g \Big( h_g^{t}(\widetilde z_g^{\,t-1})-h_g^{t}(z_g) + \rho\big\langle \widetilde M_g^t-M_g,\ \mu_g^t\big\rangle_F + \rho\big\langle \mu_g-\mu_g^t,\ \widetilde M_g^t-M^t\big\rangle_F \Big) \notag\\
&\le \sum_{g=1}^G \alpha_g \Bigg[ \frac{\eta_g}{2} \left\| \big[h_g^{t\prime}(\widetilde z_g^{\,t-1})\big]_M - \left(\rho+\frac{1}{\eta_g}\right)\xi_g^t \right\|_F^2 + \frac{\nu_g}{2} \left| \big[h_g^{t\prime}(\widetilde z_g^{\,t-1})\big]_\lambda - \frac{1}{\nu_g}\,\zeta_g^t \right|^2 \notag\\
&\qquad - \left(\rho+\frac{1}{\eta_g}\right) \big\langle \xi_g^t,\ M_g-\widetilde M_g^{\,t-1}\big\rangle_F - \frac{1}{\nu_g}\,\zeta_g^t \big(\lambda_g-\widetilde\lambda_g^{\,t-1}\big) \notag\\
&\qquad + \frac{1}{2\eta_g} \Big( \|M_g-\widetilde M_g^{\,t-1}\|_F^2 - \|M_g-\widetilde M_g^t\|_F^2 \Big) + \frac{1}{2\nu_g} \Big( (\lambda_g-\widetilde\lambda_g^{\,t-1})^2 - (\lambda_g-\widetilde\lambda_g^t)^2 \Big) \notag\\
&\qquad + \frac{\rho}{2} \Big( \|M_g-M^{t-1}\|_F^2 - \|M_g-M^{t}\|_F^2 \Big) + \frac{\rho}{2} \Big( \|\mu_g-\mu_g^{t-1}\|_F^2 - \|\mu_g-\mu_g^{t}\|_F^2 \Big) \Bigg].
\label{eq:lemma3}
\end{align}
\end{lemma}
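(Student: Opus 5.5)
This follows the standard template for stochastic or inexact linearized ADMM (as in the DP-ADMM analysis of \citet{huang2019dp}), adapted to the stacked variable $z_g=(M_g,\lambda_g)$ and the perturbed iterates.

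\textbf{Step 1: optimality of the local subproblem.} The surrogate $\widehat{\mathcal L}_{\rho,g}^t$ in \eqref{eq:prox_linearized_lagrangian} is a strongly convex quadratic, so its minimizer is characterized by the first-order conditions. Writing $h_g^{t\prime}$ for the subgradient used in the linearization (the $r_g$ term is absorbed into the $\lambda$-component), these read
\[
\big[h_g^{t\prime}(\widetilde z_g^{\,t-1})\big]_M+\rho\,(M_g^t-M^{t-1}+\mu_g^{t-1})+\tfrac{1}{\eta_g}(M_g^t-\widetilde M_g^{\,t-1})=0,
\qquad
\big[h_g^{t\prime}(\widetilde z_g^{\,t-1})\big]_\lambda+\tfrac{1}{\nu_g}(\lambda_g^t-\widetilde\lambda_g^{\,t-1})=0 .
\]
Next substitute $M_g^t=\widetilde M_g^t-\xi_g^t$ and $\lambda_g^t=\widetilde\lambda_g^t-\zeta_g^t$. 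This rewrites both conditions in terms of the perturbed iterates, with effective gradients $[h']_M-(\rho+1/\eta_g)\xi_g^t$ and $[h']_\lambda-\zeta_g^t/\nu_g$. These are exactly the quantities appearing in the squared-norm terms of \eqref{eq:lemma3}.

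\textbf{Step 2: convexity and a three-point expansion.} By convexity of $h_g^t$,
\[
h_g^t(\widetilde z_g^{\,t-1})-h_g^t(z_g)\le\langle h_g^{t\prime}(\widetilde z_g^{\,t-1}),\widetilde z_g^{\,t-1}-z_g\rangle .
\]
Split $\widetilde z_g^{\,t-1}-z_g=(\widetilde z_g^{\,t-1}-\widetilde z_g^{\,t})+(\widetilde z_g^{\,t}-z_g)$.
\begin{itemize}
\item On the second piece, insert the rewritten optimality condition from Step 1.
\item The proximal inner product $\frac1{\eta_g}\langle \widetilde M_g^t-\widetilde M_g^{t-1},M_g-\widetilde M_g^t\rangle$ is turned into the telescoping difference $\frac1{2\eta_g}(\|M_g-\widetilde M_g^{t-1}\|^2-\|M_g-\widetilde M_g^t\|^2)$ minus $\frac1{2\eta_g}\|\widetilde M_g^t-\widetilde M_g^{t-1}\|^2$, using the polarization identity; the $\lambda$-block is handled the same way.
\item On the first piece, apply Young's inequality
\[
\langle a,\widetilde z^{t-1}-\widetilde z^{t}\rangle\le \tfrac{\eta}{2}\|a\|^2+\tfrac1{2\eta}\|\widetilde z^{t}-\widetilde z^{t-1}\|^2 ,
\]
with $a$ the effective gradient. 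The resulting $\frac1{2\eta_g}\|\cdot\|^2$ cancels the negative term from the polarization identity and produces the $\frac{\eta_g}{2}\|\cdot\|_F^2$ and $\frac{\nu_g}{2}|\cdot|^2$ terms.
\item The noise pieces evaluated against $z_g-\widetilde z_g^{\,t-1}$ give the linear terms $-(\rho+1/\eta_g)\langle\xi_g^t,M_g-\widetilde M_g^{t-1}\rangle$ and $-\zeta_g^t(\lambda_g-\widetilde\lambda_g^{t-1})/\nu_g$.
\end{itemize}

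\textbf{Step 3: the consensus terms.} This is the main obstacle. The leftover is $\rho\langle \widetilde M_g^t-M^{t-1}+\mu_g^{t-1},M_g-\widetilde M_g^t\rangle$, plus the cross terms left over after the substitution $M_g^t=\widetilde M_g^t-\xi_g^t$, which must be tracked consistently. Use the dual update $\mu_g^{t-1}=\mu_g^t-\widetilde M_g^t+M^t$ to rewrite $\widetilde M_g^t-M^{t-1}+\mu_g^{t-1}=\mu_g^t+(M^t-M^{t-1})$. This produces:
\begin{itemize}
\item the term $\rho\langle \widetilde M_g^t-M_g,\mu_g^t\rangle$, which moves to the left-hand side;
\item a term $\rho\langle M^t-M^{t-1},M_g-\widetilde M_g^t\rangle$.
\end{itemize}
Add and subtract $\rho\langle\mu_g-\mu_g^t,\widetilde M_g^t-M^t\rangle=\rho\langle\mu_g-\mu_g^t,\mu_g^t-\mu_g^{t-1}\rangle$, and apply the polarization identity to get $\frac\rho2(\|\mu_g-\mu_g^{t-1}\|^2-\|\mu_g-\mu_g^t\|^2)$ minus $\frac\rho2\|\mu_g^t-\mu_g^{t-1}\|^2$.

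The $M^t-M^{t-1}$ term is then summed over $g$ with weights $\alpha_g$. The server update gives $\sum_g\alpha_g\mu_g^t=0$, so $\sum_g\alpha_g\widetilde M_g^t=M^t$, and the aggregate term becomes a three-point identity in $M$ that yields $\frac\rho2(\|M_g-M^{t-1}\|^2-\|M_g-M^t\|^2)$. The remaining negative squares $-\frac\rho2\|M^t-M^{t-1}\|^2$ and $-\frac\rho2\|\mu_g^t-\mu_g^{t-1}\|^2$ are dropped or used to absorb residual cross terms.

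The delicate point is that this summation argument needs the weighted sum over $g$, which is why the lemma is stated aggregated over clients. If a residual noise cross term in the $\rho$ block does not cancel exactly, I would absorb it into the $\frac{\eta_g}{2}\|\cdot\|^2$ term by completing the square with the coefficient $\rho+1/\eta_g$.
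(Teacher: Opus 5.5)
Your proposal is correct and, apart from one step, it is the paper's argument. The shared steps are: convexity at $\widetilde z_g^{\,t-1}$; the split at $\widetilde z_g^{\,t}$; the first-order conditions rewritten with $M_g^t=\widetilde M_g^t-\xi_g^t$ and $\lambda_g^t=\widetilde\lambda_g^t-\zeta_g^t$, so the effective gradients carry $(\rho+1/\eta_g)\xi_g^t$ and $\zeta_g^t/\nu_g$; the polarization identity on the proximal terms, which is all the paper's appeal to Lemma~5 of \citet{huang2019dp} amounts to; and blockwise Young's inequality on the increment.

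The step where you differ is the cross term $\rho\langle M^t-M^{t-1},M_g-\widetilde M_g^t\rangle_F$. The paper handles it client by client. Expanding squares gives $\frac{\rho}{2}(\|M_g-M^{t-1}\|_F^2-\|M_g-M^t\|_F^2)+\frac{\rho}{2}\|\widetilde M_g^t-M^t\|_F^2-\frac{\rho}{2}\|\widetilde M_g^t-M^{t-1}\|_F^2$, and the paper drops the last square. Since $\widetilde M_g^t-M^t=\mu_g^t-\mu_g^{t-1}$, the middle term cancels exactly against the $-\frac{\rho}{2}\|\mu_g^t-\mu_g^{t-1}\|_F^2$ from the dual polarization. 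Hence the inequality holds for each $g$ separately. Your remark that the weighted sum over clients is what makes the lemma work is therefore not accurate. The weighted sum matters only in the theorem, where $-\rho\langle\mu_g^t,M^t-M\rangle_F$ is added and vanishes on averaging.

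Your aggregated version is also valid. You split $M_g-\widetilde M_g^t=(M_g-M^t)+(M^t-\widetilde M_g^t)$, kill the second piece with $\sum_g\alpha_g\widetilde M_g^t=M^t$, and apply the three-point identity per client to the first. It discards exactly the same amount as the paper, since under that identity $\sum_g\alpha_g\|\widetilde M_g^t-M^{t-1}\|_F^2=\|M^t-M^{t-1}\|_F^2+\sum_g\alpha_g\|\mu_g^t-\mu_g^{t-1}\|_F^2$.

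It does, however, cost a hypothesis. By the server update, $\sum_g\alpha_g\widetilde M_g^t=M^t$ is equivalent to $\sum_g\alpha_g\mu_g^{t-1}=0$, not to the condition on $\mu_g^t$ that you cite. This is automatic for $t\ge 2$. At $t=1$ it requires an initialization such as $\mu_g^0=0$, which the paper imposes only in the theorem.

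Finally, once the effective gradient carries the full coefficient $\rho+1/\eta_g$, no residual noise term appears in the $\rho$ block. Your completing-the-square fallback is therefore never needed.
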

\begin{proof}
The proof is given in Appendix~\ref{app:proof_lem_single_iter}.
\end{proof}

Now, we can leverage Lemma~\ref{lem:single_iter} to obtain the final convergence theorem for Algorithm~\ref{alg:dp_fed_dro}.

\begin{theorem}[Convergence bound]
\label{thm:ergodic_convergence_dp_fed_dro}
Assume that, for each client \(g=1,\dots,G\) and round \(t=1,\dots,T\), the local objective \(h_g^{t}\) is convex in \(z_g\), and that, over the finite training horizon \(T\), its subgradients along the trajectory of Algorithm~\ref{alg:dp_fed_dro} are bounded as \(\|\partial_{M_g}(h_g-R_g)\|_F\le\sqrt{2}\,L\), \(|\partial_{\lambda_g}(h_g-R_g)|\le S:=\max\{r_g,\kappa-r_g\}\), \(\|\partial_{M_g}R_g\|_F\le\sqrt{2}\,C\), and \(|\partial_{\lambda_g}R_g|\le V\), for constants \(L,C,V>0\). Let \(c_M,c_{\lambda,g}>0\) satisfy \(\|M^\star\|_F\le c_M\) and \(|\lambda_g^\star|\le c_{\lambda,g}\) for all \(g\), and let the step sizes be chosen as
\begin{equation}
\eta_g := \frac{c_M}{\sqrt{T\left(2(L+C)^2+\frac{16KPL^2\ln(1.25/\delta_M)}{N_g^2\epsilon_M^2}\right)}},
\qquad
\nu_g := \frac{c_{\lambda,g}}{\sqrt{T\left((S+V)^2+\frac{2\kappa^2\ln(1.25/\delta_\lambda)}{N_g^2\epsilon_\lambda^2}\right)}}.
\label{eq:eta_nu_choice_final}
\end{equation}
Define
\begin{equation}
M_{1,T} := c_M \sum_{g=1}^G \alpha_g \Bigl( 2(L+C)^2 + \frac{16KPL^2\ln(1.25/\delta_M)}{N_g^2\epsilon_M^2} \Bigr)^{1/2} + \sum_{g=1}^G \alpha_g\, c_{\lambda,g} \Bigl( (S+V)^2 + \frac{2\kappa^2\ln(1.25/\delta_\lambda)}{N_g^2\epsilon_\lambda^2} \Bigr)^{1/2},
\label{eq:M1_final}
\end{equation}
and
\begin{equation}
M_2 := \frac{\rho}{2}c_M^2 + \frac{\beta^2}{2\rho}.
\label{eq:M2_final}
\end{equation}
Then, for any \(T\ge 1\) and any \(\beta>0\), we have
\begin{equation}
\mathbb{E}\Biggl[ \frac{1}{T}\sum_{t=1}^{T}\sum_{g=1}^G \alpha_g \Bigl( h_g^{t}(\widetilde z_g^{\,t-1})-h_g^{t}(z_g^\star) \Bigr) + \beta\sum_{g=1}^G \alpha_g \|\bar{\widetilde M}_g^{\,T}-\bar M^{\,T}\|_F \Biggr] \le \frac{M_{1,T}}{\sqrt{T}} + \frac{M_2}{T}.
\label{eq:final_theorem_fixed}
\end{equation}
\end{theorem}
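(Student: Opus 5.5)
The plan is to apply Lemma~\ref{lem:single_iter} at the comparison point \(z_g=z_g^\star=(M^\star,\lambda_g^\star)\) with a suitably chosen \(\mu_g\), sum over \(t=1,\dots,T\), take expectations, and divide by \(T\). Three structural features drive the argument.
\begin{itemize}
\item The proximal and consensus differences on the right-hand side of \eqref{eq:lemma3} telescope, so after summation only the initial terms \(\frac{1}{2\eta_g}\|M^\star-\widetilde M_g^{0}\|_F^2\), \(\frac{1}{2\nu_g}(\lambda_g^\star-\widetilde\lambda_g^{0})^2\), \(\frac{\rho}{2}\|M^\star-M^0\|_F^2\) and \(\frac{\rho}{2}\|\mu_g-\mu_g^0\|_F^2\) remain. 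With zero initialization and \(\|M^\star\|_F\le c_M\), \(|\lambda_g^\star|\le c_{\lambda,g}\), these are at most \(c_M^2/(2\eta_g)\), \(c_{\lambda,g}^2/(2\nu_g)\), \(\rho c_M^2/2\) and \(\rho\|\mu_g\|_F^2/2\).
\item The cross terms \(\langle\xi_g^t,M^\star-\widetilde M_g^{t-1}\rangle_F\) and \(\zeta_g^t(\lambda_g^\star-\widetilde\lambda_g^{t-1})\) have zero mean. This holds because \(\xi_g^t,\zeta_g^t\) are drawn independently of the history \(\mathcal F_{t-1}\), while \(\widetilde M_g^{t-1},\widetilde\lambda_g^{t-1}\) are \(\mathcal F_{t-1}\)-measurable.
\item The squared terms are expanded in expectation, again using independence to kill the cross products. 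By the subgradient assumptions, \(\mathbb E\|[h_g^{t\prime}]_M-(\rho+1/\eta_g)\xi_g^t\|_F^2\le 2(L+C)^2+(\rho+1/\eta_g)^2KP\sigma_{M,g}^2\). Substituting \(\sigma_{M,g}\) from Theorem~\ref{thm:dp_client_update} cancels the factor \((\rho+1/\eta_g)^2\) exactly, leaving \(16KPL^2\ln(1.25/\delta_M)/(N_g^2\epsilon_M^2)\). Likewise the \(\lambda\)-term gives \((S+V)^2+\sigma_{\lambda,g}^2/\nu_g^2=(S+V)^2+2\kappa^2\ln(1.25/\delta_\lambda)/(N_g^2\epsilon_\lambda^2)\).
\end{itemize}
Each client's error therefore has the form \(\frac{T\eta_g}{2}A_g+\frac{c_M^2}{2\eta_g}\) plus an analogous \(\lambda\)-part. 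Taking \(\eta_g,\nu_g\) as in \eqref{eq:eta_nu_choice_final} balances these terms at \(c_M\sqrt{TA_g}\) and \(c_{\lambda,g}\sqrt{TB_g}\). Dividing by \(T\) yields \(M_{1,T}/\sqrt T\).

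The main obstacle is converting the inner-product terms on the left of \eqref{eq:lemma3} into the consensus penalty \(\beta\sum_g\alpha_g\|\bar{\widetilde M}_g^T-\bar M^T\|_F\). I would use the identity \(\widetilde M_g^t-M^t=\mu_g^t-\mu_g^{t-1}\) from the dual update. I would also use primal feasibility \(M_g^\star=M^\star\) of the comparison point, together with the server step \eqref{eq:global_update}: since \(\sum_g\alpha_g\mu_g^t=0\) for all \(t\ge1\), the term \(\rho\sum_g\alpha_g\langle \widetilde M_g^t-M^\star,\mu_g^t\rangle_F\) can be rewritten as \(\rho\sum_g\alpha_g\langle \widetilde M_g^t-M^t,\mu_g^t\rangle_F\). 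Combining it with \(\rho\langle\mu_g-\mu_g^t,\widetilde M_g^t-M^t\rangle_F\) then leaves \(\rho\langle\mu_g,\widetilde M_g^t-M^t\rangle_F\), whose sum over \(t\) is \(\rho T\langle\mu_g,\bar{\widetilde M}_g^T-\bar M^T\rangle_F\). I expect this bookkeeping, which must be verified against the exact form of the lemma, to be the delicate step.

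With the left side in that form, I would choose \(\mu_g=\frac{\beta}{\rho}\,(\bar{\widetilde M}_g^T-\bar M^T)/\|\bar{\widetilde M}_g^T-\bar M^T\|_F\), a standard choice since Lemma~\ref{lem:single_iter} holds for arbitrary \(\mu_g\). This turns the left side into \(T\beta\|\bar{\widetilde M}_g^T-\bar M^T\|_F\) and makes the leftover initial dual term \(\frac{\rho}{2}\|\mu_g\|_F^2=\beta^2/(2\rho)\). Adding \(\rho c_M^2/2\) gives \(M_2\), which becomes \(M_2/T\) after division.

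Because this \(\mu_g\) depends on the whole trajectory, the choice must be made pathwise: the inequality holds for every \(\mu_g\), so it can be specialized before taking expectations. The zero-mean argument for the noise cross terms, however, must involve only the \(z_g^\star\)-terms, which are deterministic. This is consistent with the lemma's structure, since \(\mu_g\) enters only through the telescoping terms, and I would confirm this explicitly.
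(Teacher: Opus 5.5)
Your proposal is correct and follows essentially the same route as the paper. You apply Lemma~\ref{lem:single_iter} at $(M^\star,\lambda_g^\star)$, telescope under zero initialization, remove the noise cross terms by independence, and bound the squared terms with the calibrated variances before balancing via $\eta_g,\nu_g$. You then use $\sum_g\alpha_g\mu_g^t=0$ to reduce the dual terms to $\rho\langle\mu_g,\bar{\widetilde M}_g^T-\bar M^T\rangle_F$ and choose $\mu_g$ pathwise on the ball of radius $\beta/\rho$. The paper does the same dual bookkeeping through an auxiliary operator $H$ and a monotonicity step, which is actually an equality here because the per-round term is affine in $v_g^t$; your direct identity reaches the same expression more transparently.
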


\begin{proof}
The proof is given in Appendix~\ref{app:proof_thm3}.
\end{proof}

Theorem~\ref{thm:ergodic_convergence_dp_fed_dro} establishes a finite-horizon
convergence bound for the proposed differentially private federated
multiclass DRO algorithm. For the prescribed training horizon, the leading term
scales as \(O(M_{1,T}/\sqrt{T})\), with an additional lower-order term of order
\(O(T^{-1})\). The result also makes explicit the inherent privacy–utility tradeof: stronger privacy guarantees, induced by smaller values of \(\epsilon_M,\epsilon_\lambda\) and \(\delta_M,\delta_\lambda\), require injecting larger Gaussian noise, which increases the constants in the convergence bound and may reduce utility. Notice also that the privacy-dependent terms in \eqref{eq:M1_final} decrease as the local sample size \(N_g\) grows, so the utility cost of privacy diminishes for clients that hold more data.

\begin{figure*}[t]
    \centering

    \begin{minipage}[t]{0.192\textwidth}
        \centering
        \includegraphics[width=\linewidth,trim=4 6 4 6,clip]{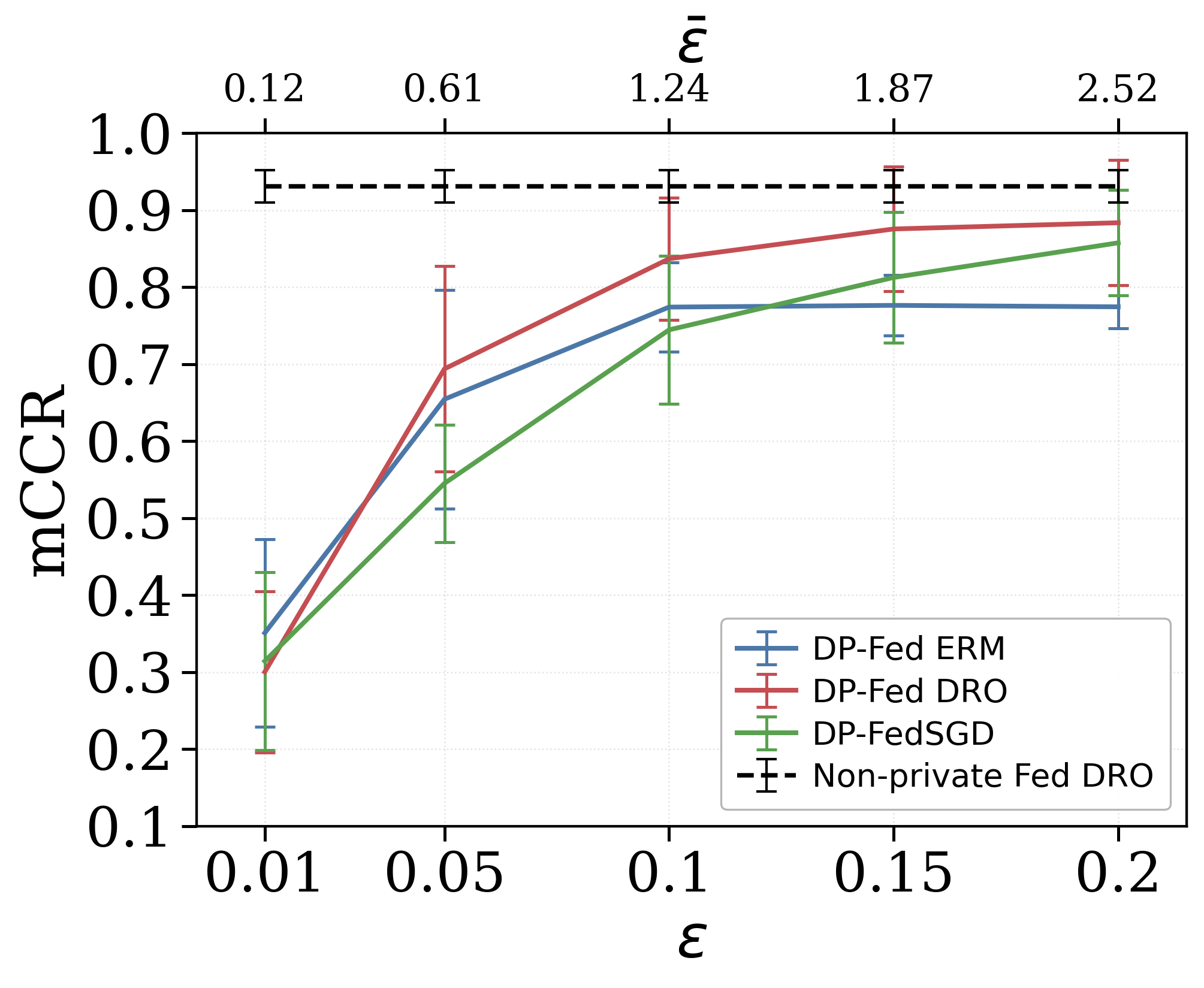}
        \vspace{2pt}
        \footnotesize $\delta = 1 \times 10^{-8}$
    \end{minipage}\hfill
    \begin{minipage}[t]{0.192\textwidth}
        \centering
        \includegraphics[width=\linewidth,trim=4 6 4 6,clip]{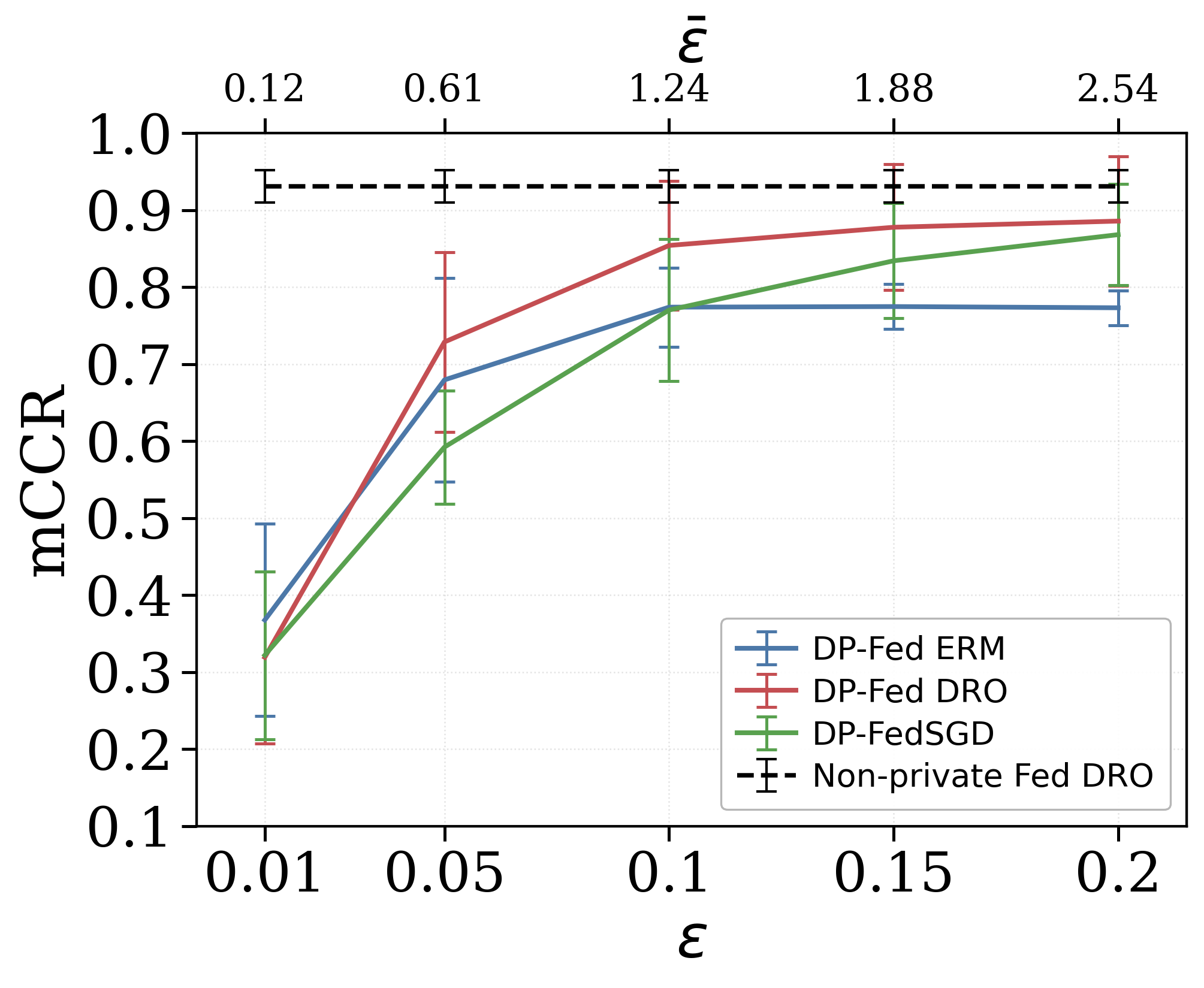}
        \vspace{2pt}
        \footnotesize $\delta = 1 \times 10^{-6}$
    \end{minipage}\hfill
    \begin{minipage}[t]{0.192\textwidth}
        \centering
        \includegraphics[width=\linewidth,trim=4 6 4 6,clip]{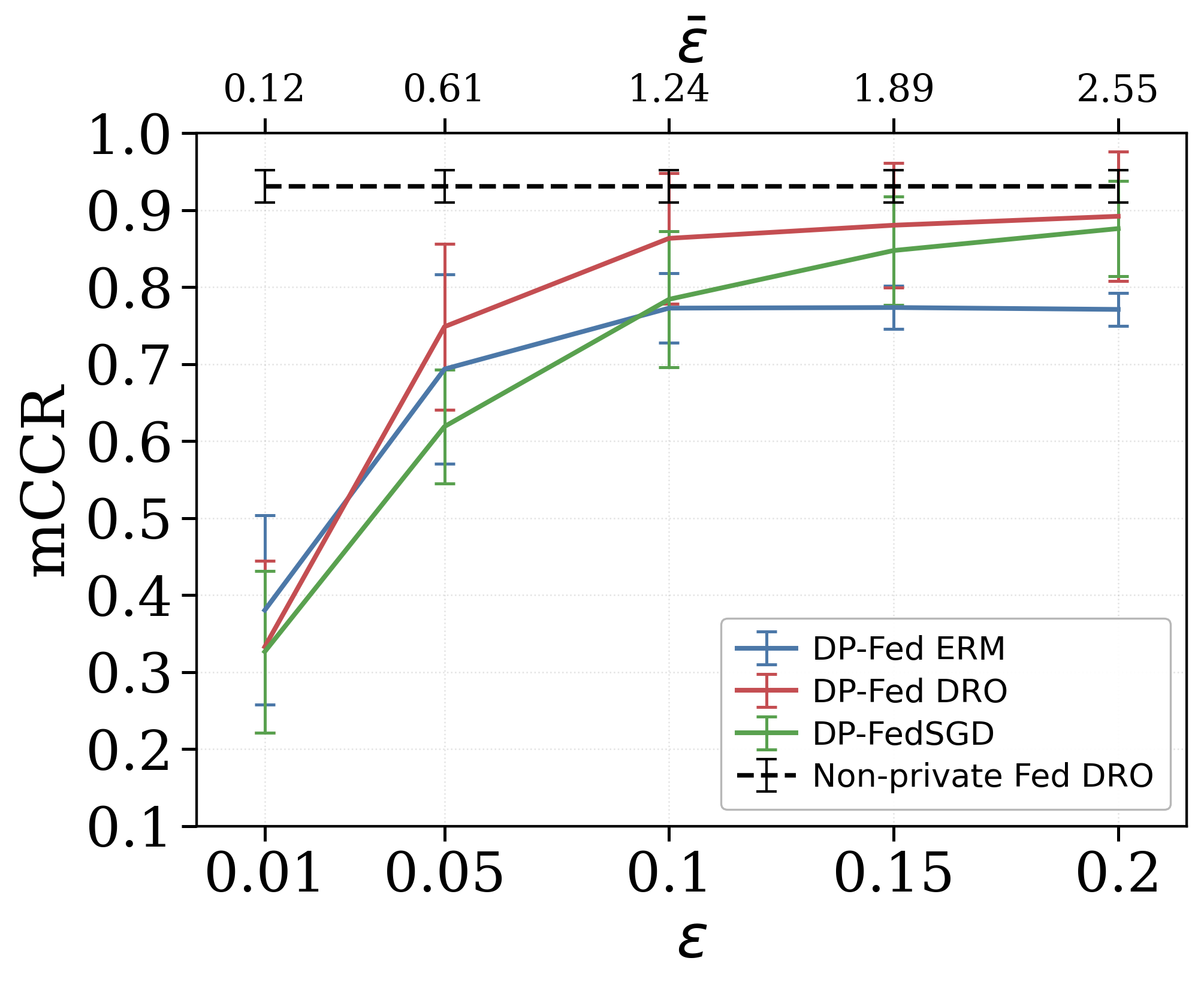}
        \vspace{2pt}
        \footnotesize $\delta = 1 \times 10^{-5}$
    \end{minipage}\hfill
    \begin{minipage}[t]{0.192\textwidth}
        \centering
        \includegraphics[width=\linewidth,trim=4 6 4 6,clip]{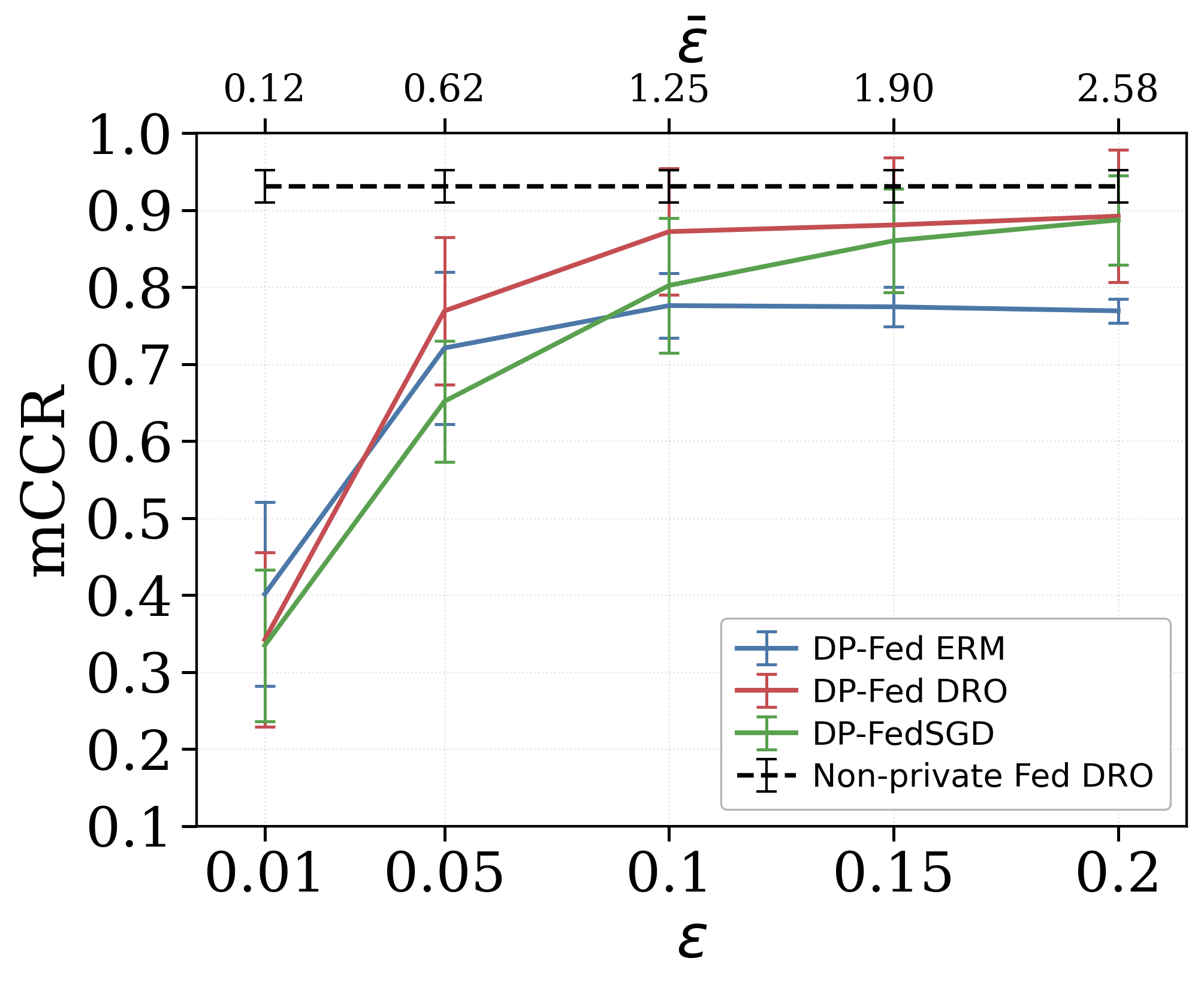}
        \vspace{2pt}
        \footnotesize $\delta = 1 \times 10^{-4}$
    \end{minipage}\hfill
    \begin{minipage}[t]{0.192\textwidth}
        \centering
        \includegraphics[width=\linewidth,trim=4 6 4 6,clip]{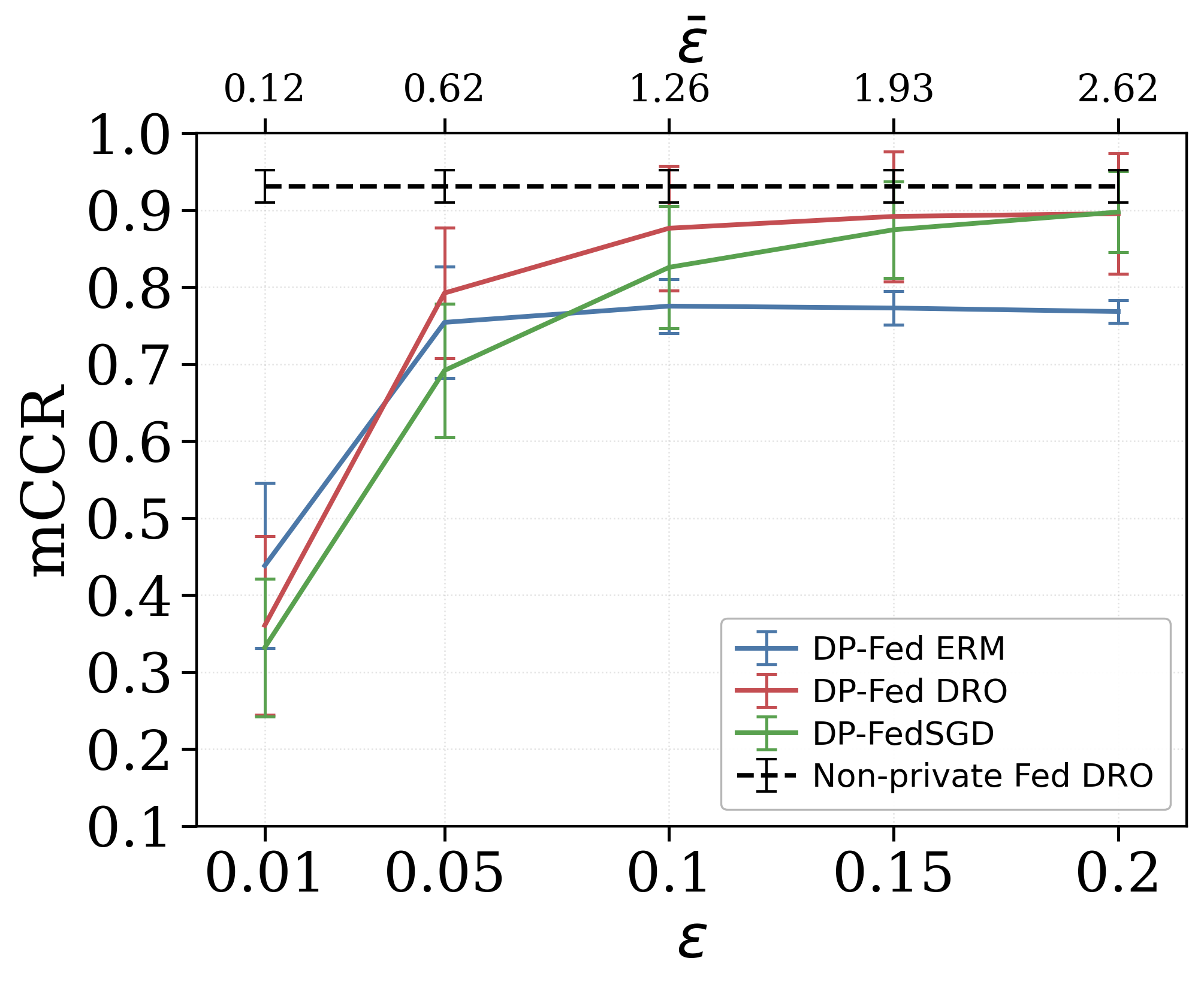}
        \vspace{2pt}
        \footnotesize $\delta = 1 \times 10^{-3}$
    \end{minipage}

    \caption{mCCR of the proposed DP-Fed DRO method as a function of
    \(\epsilon\) across different values of \(\delta\), compared with
    DP-Fed ERM, DP-Fed SGD, and non-private Fed DRO.}
    \label{fig:privacy_tradeoff}
\end{figure*}

\begin{figure}[t]
    \centering
    \begin{minipage}[t]{0.24\textwidth}
        \centering
        \includegraphics[width=\linewidth]{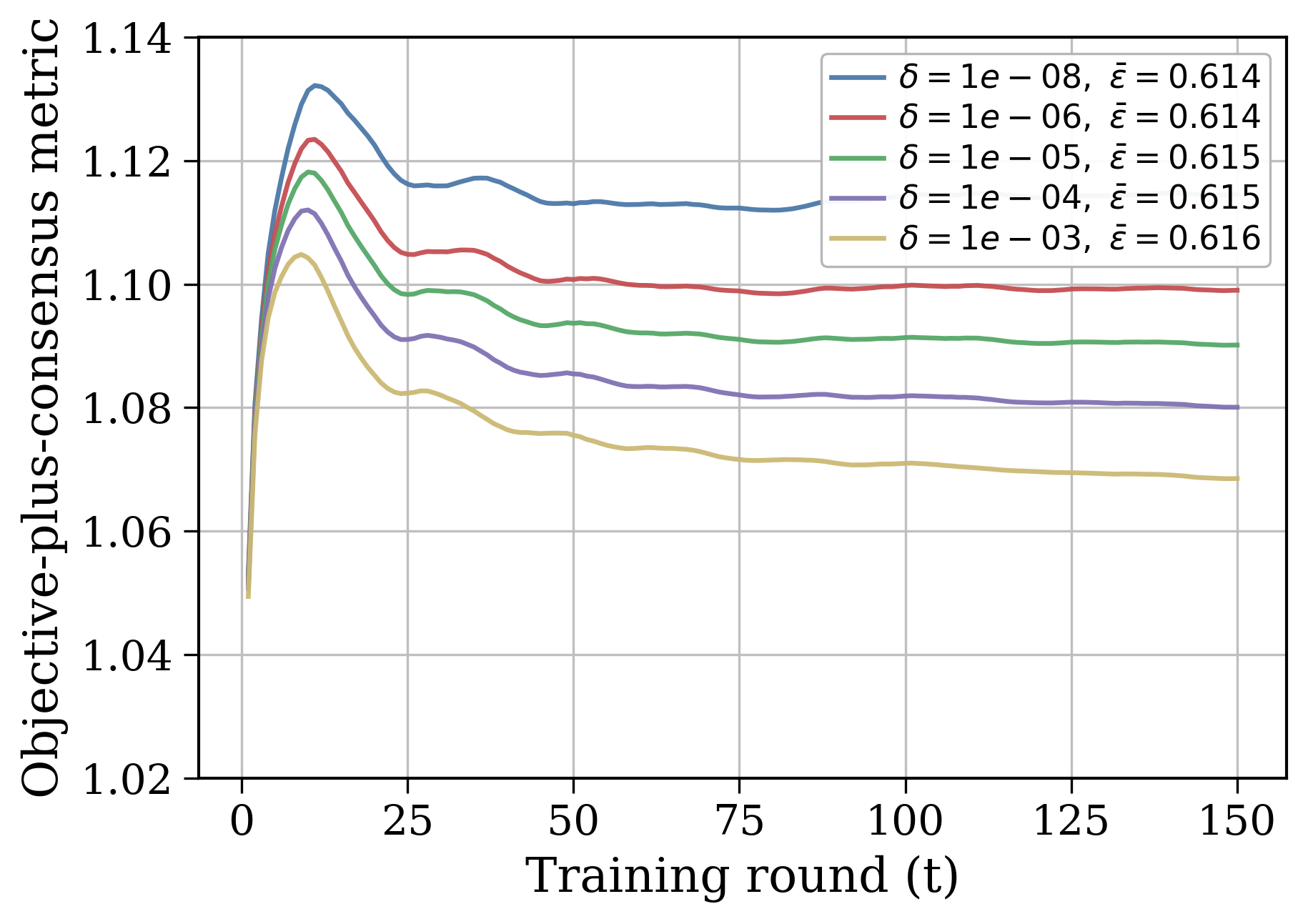}
        
        \vspace{2pt}
        \footnotesize $\epsilon = \text{0.05}$
    \end{minipage}
    \hfill
    \begin{minipage}[t]{0.24\textwidth}
        \centering
        \includegraphics[width=\linewidth]{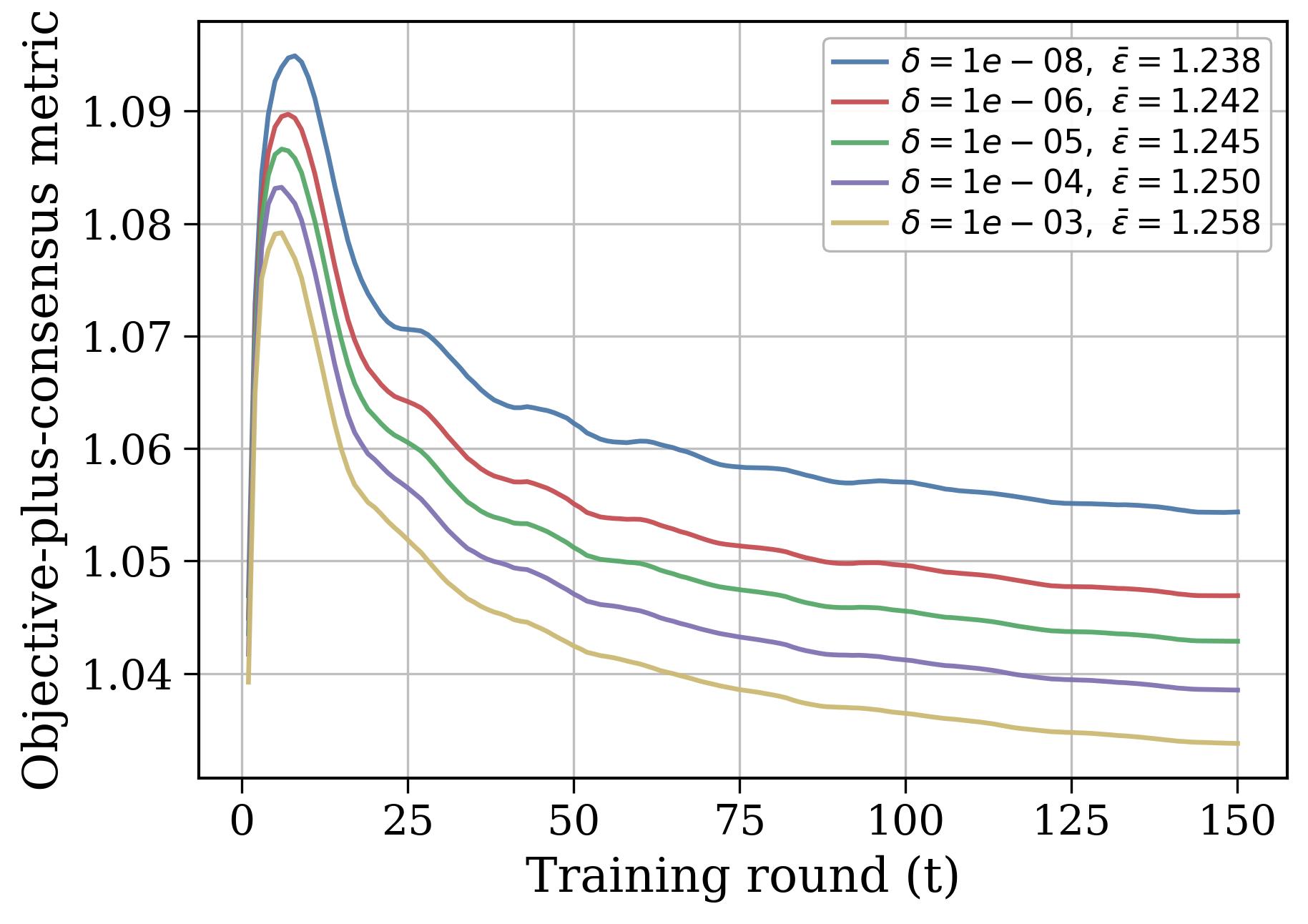}
        
        \vspace{2pt}
        \footnotesize $\epsilon = \text{0.1}$
    \end{minipage}
    \hfill
    \begin{minipage}[t]{0.24\textwidth}
        \centering
        \includegraphics[width=\linewidth]{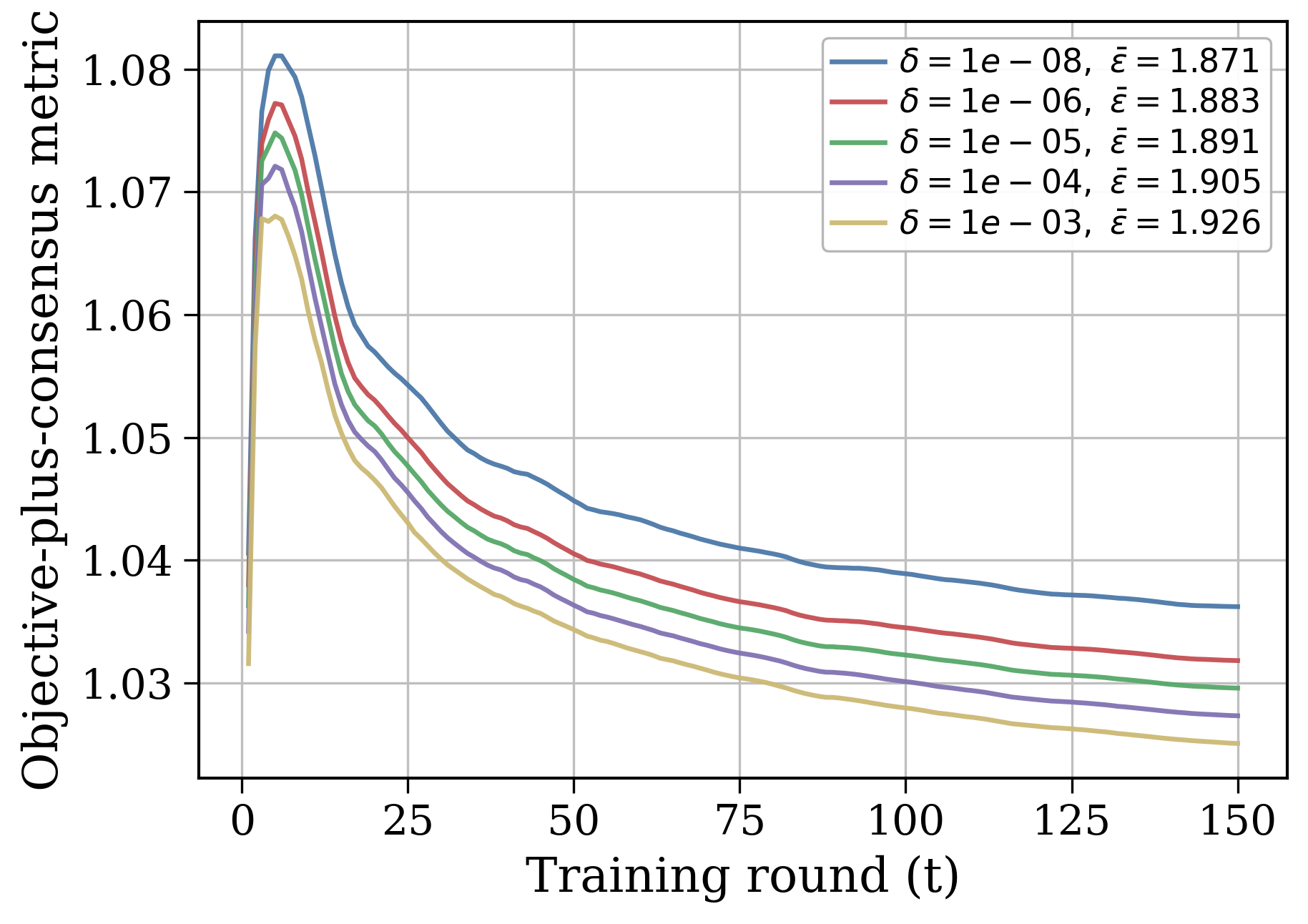}
        
        \vspace{2pt}
        \footnotesize $\epsilon = \text{0.15}$
    \end{minipage}
    \hfill
    \begin{minipage}[t]{0.24\textwidth}
        \centering
        \includegraphics[width=\linewidth]{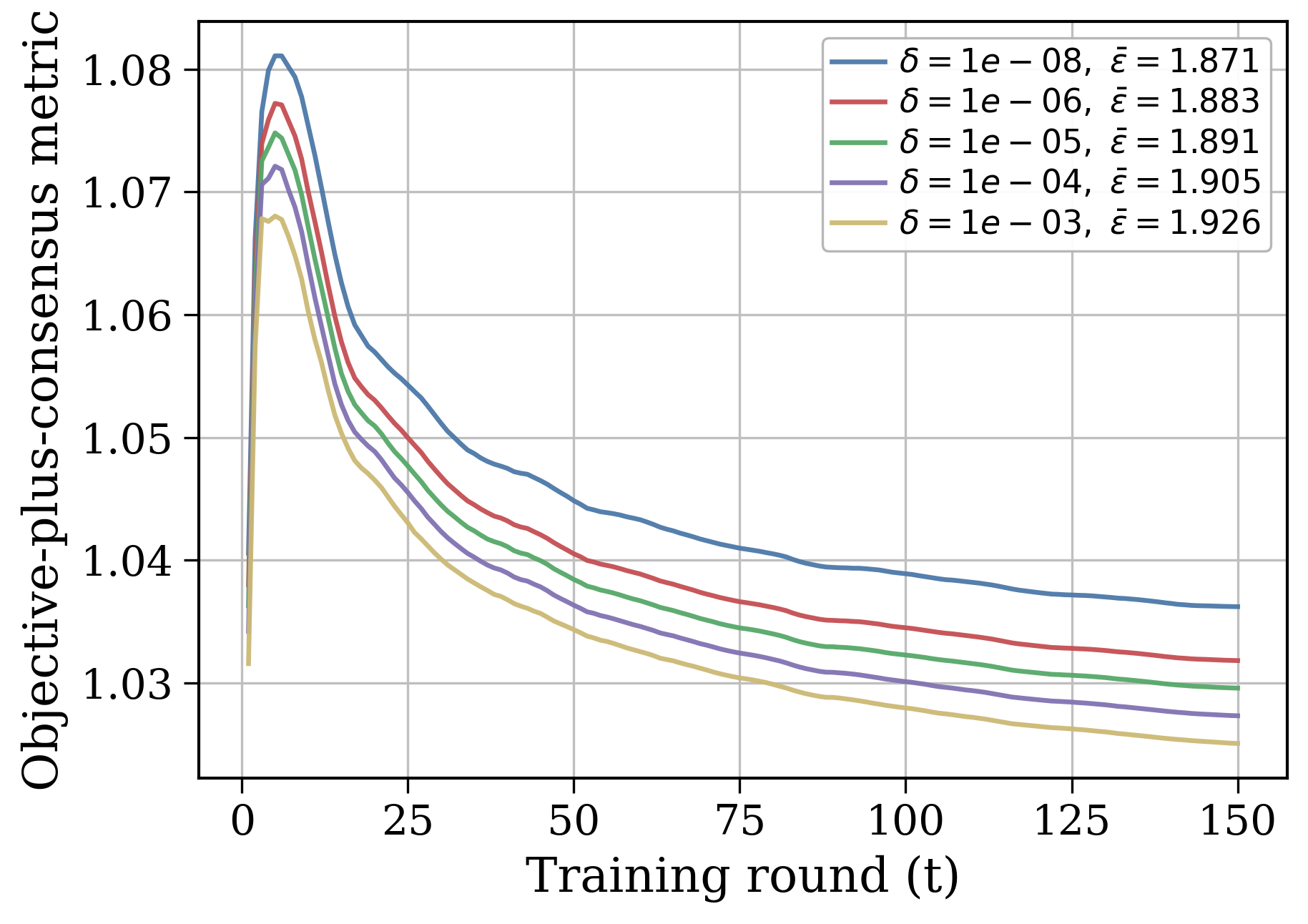}
        
        \vspace{2pt}
        \footnotesize $\epsilon = \text{0.2}$
    \end{minipage}
    \caption{Convergence of the DP-Fed DRO objective and consensus metric over training rounds for different per-round $\epsilon$ and $\delta$ values.}
    \label{fig:epsilon_convergence}
\end{figure}

\section{Experimental Evaluation}

\label{sec:experiments}

In this section, we evaluate the performance, sensitivity, and scalability of the proposed differentially private federated multiclass DRO algorithm. We first study the sensitivity of the algorithm to the privacy parameters and its convergence behavior on simulated industrial fault diagnosis data. We next benchmark the method against private and non-private baselines on five UCI multiclass datasets. Finally, we examine scalability with respect to the number of clients and the number of classes. Unless otherwise stated, each experiment is run for $T = 150$ training rounds and repeated over 10 random seeds, each redrawing the data partition and all injected noise. We report the mean performance across seeds, and all error bars represent one standard deviation. We use uniform client weights, that is, $\alpha_g = 1/G$ for all $g=1,\dots,G$, and all clients share the same local hyperparameter settings. Across all experiments, we compare the proposed method against several relevant baselines. These include a differentially private federated ERM model, denoted DP-Fed ERM, based on~\citet{huang2019dp} and instantiated with the Crammer--Singer multiclass hinge loss and the Frobenius regularizer $R(M)=\frac{1}{2}\|M\|_F^2$, as well as a federated adaptation of differentially private stochastic gradient descent~\citep{abadi2016deep}, denoted DP-Fed SGD. We also include the non-private federated multiclass DRO model described in Algorithm~\ref{alg:fed_dro} in the sensitivity analysis. In the benchmarking experiments, we further compare against an OvA federated DRO formulation, which trains independent binary DRO classifiers using the binary hinge loss, and a centralized non-private benchmark.

\subsection{Simulated Industrial Fault Diagnosis Data Experiments}
\label{sec:pump}

We begin by evaluating the proposed method on simulated industrial fault diagnosis data generated from degrading pumps. In the first experiment, we study the sensitivity of the algorithm to the per-round differential privacy hyperparameters $\epsilon$ and $\delta$ through the mean correct classification rate (mCCR). The per-round budget is split as $\epsilon_M^2=0.95\,\epsilon^2$ and $\epsilon_\lambda^2=0.05\,\epsilon^2$, with $\delta_M=\delta_\lambda=\delta$. The corresponding cumulative budget $\bar{\epsilon}$ from Theorem~\ref{thm:composition}, with $\bar{\delta}=\delta$, is shown on the top axis of Figure.~\ref{fig:privacy_tradeoff}. The second experiment examines how these hyperparameters affect convergence. At each round $t$, we track $\frac{1}{t}\sum_{s=1}^{t}\sum_{g=1}^{G}\alpha_{g}h_{g}^{s}(\widetilde{z}_{g}^{\,s-1})+\beta\sum_{g=1}^{G}\alpha_{g}\|\bar{\widetilde{M}}_{g}^{\,t}-\bar{M}^{\,t}\|_{F}$, the left-hand side of Theorem~\ref{thm:ergodic_convergence_dp_fed_dro} without the benchmark term $h_g^{s}(z_g^\star)$. This metric captures both the augmented objective at the privatized iterates and the violation of the consensus constraint.

\textbf{Dataset.} We use simulation-generated data from a physics-based pump model \citep{mathworks_pump_fault_example}: one healthy class and three fault classes (cylinder leaks, locked inlet, increased bearing friction), so \(K=4\). We set \(G=4\), \(P=13\), \(N=1600\) training samples, and \(N_{\text{Test}}=400\). Each client holds all four classes but most of its samples come from a single dominant class, and all clients are equally sized. Gaussian noise is added to both the training and test features, twice as strongly on the training side, and $15\%$ of training labels are flipped at random. Finally, a constant offset is applied to five of the test features, displacing the deployment distribution. This is the type of shift a Wasserstein ambiguity set is designed to hedge against. All perturbations are applied before preprocessing, and this single setting is used in all pump experiments. We then apply min-max normalization to each feature column, with statistics computed on the training data, so that training values lie in \([0,1]\), followed by per-sample \(\ell_2\) normalization so that each sample has \(\ell_2\)-norm at most \(1\).

\textbf{Hyperparameters.} We set \(\rho=0.8\) for the ADMM-based models and \(\gamma=0.1\) for the norm-constraint penalty. The label transportation cost and Wasserstein ball radius are \(\kappa=0.5\) and \(r=0.1\) for DP-Fed DRO, and \(\kappa=2\) and \(r=10^{-2}\) for the non-private Fed DRO. DP-Fed ERM uses \(\lambda=10^{-4}\), and DP-Fed SGD uses \(\eta_k=8/\sqrt{k}\) and \(\lambda=10^{-6}\). We vary \(\epsilon \in \{0.01,0.05,0.1,0.15,0.2\}\) and \(\delta \in \{10^{-8},10^{-6},10^{-5},10^{-4},10^{-3}\}\), and set \(\beta=0.4\) in the convergence experiment. Throughout we use the \(\ell_{\infty}\)-norm. The step-size constants are assumed to be obtained a priori from a proxy dataset drawn from the same generative process: \(c_w=\|M^\star\|_F=7.56\) from a non-private solve for DP-Fed ERM, and \(c_M=2\), \(c_{\lambda,g}=6\) by cross-validation for DP-Fed DRO, which satisfy the bounds required in Theorem~\ref{thm:ergodic_convergence_dp_fed_dro}. Preprocessing gives \(L=1\), and we set \(C=0.4\) and \(V=1\), so that \(S=\max\{r,\kappa-r\}=0.4\).

\textbf{Results.} For this dataset, we examine the privacy--utility tradeoff and convergence behavior. Figure~\ref{fig:privacy_tradeoff} shows the effect of the privacy hyperparameters on model performance, with the cumulative budget $\bar{\epsilon}$ on the top axis. At the smallest per-round budget, $\epsilon=0.01$, the injected noise dominates the signal and all private models perform poorly with large variability across seeds. As $\epsilon$ increases, all three private models improve, since larger values of $\epsilon$ correspond to weaker privacy guarantees and lower injected noise, and DP-Fed DRO and DP-Fed SGD approach the non-private Fed DRO model. DP-Fed ERM, in contrast, saturates early and does not benefit from larger budgets. DP-Fed DRO is the best private method at every budget where the models are informative, with the clearest margin at moderate budgets, $\epsilon\in\{0.05,0.1\}$, where the noise is significant but does not overwhelm the signal. DP-Fed DRO is also markedly more robust to the choice of $\delta$ than DP-Fed SGD, whose performance degrades noticeably as $\delta$ tightens. As a result, the gap between the two is largest at the tightest values of $\delta$. This reflects the ability of the robust formulation to absorb the larger noise level induced by a smaller $\delta$ with little loss in performance.

The convergence behavior of DP-Fed DRO is shown in Figure~\ref{fig:epsilon_convergence} for the privacy budgets of practical interest, $\epsilon\in\{0.05,0.1,0.15,0.2\}$. In all cases, the metric rises during the first few rounds and then decreases steadily for the remainder of training. As $\epsilon$ increases, the curves settle at lower values and the initial peak becomes smaller. This is expected, since larger values of $\epsilon$ require less injected noise, which makes consensus easier to reach. Similarly, for a fixed $\epsilon$, smaller values of $\delta$ settle slightly higher, since they require more noise. This effect of $\delta$ is most visible at the smallest budget and fades as $\epsilon$ grows, with the curves nearly coinciding at $\epsilon=0.2$. This agrees with Theorem~\ref{thm:ergodic_convergence_dp_fed_dro}, where \(\delta\) only affects the noise term of the bound, which becomes small as \(\epsilon\) increases. Overall, the curves exhibit the same decreasing trend across all tested settings, indicating that the method retains stable convergence behavior under all considered privacy levels.

\begin{table}[t]
\caption{Comparison of Macro-F1 scores across five multiclass benchmark datasets, namely Dermatology~\citep{uci_dermatology}, Iris~\citep{uci_iris}, Wine~\citep{uci_wine}, Palmer Penguins~\citep{palmer_penguins_data}, and Seeds~\citep{uci_seeds}.}
\label{tab:uci_main_results}
\centering
\setlength{\tabcolsep}{3.5pt}
\begin{tabular}{lccccc}
\toprule
Model & Dermatology & Iris & Wine & Penguins & Seeds \\
\midrule
Centralized DRO 
& $0.905 \pm 0.070$ 
& $0.919 \pm 0.051$ 
& $0.943 \pm 0.032$ 
& $0.996 \pm 0.008$ 
& $0.969 \pm 0.023$ \\

\addlinespace[0.25em]
\cdashline{1-6}
\addlinespace[0.35em]

DP-Fed DRO  
& $0.505 \pm 0.103$ 
& $\mathbf{0.564 \pm 0.077}$ 
& $\mathbf{0.640 \pm 0.148}$ 
& $0.689 \pm 0.142$ 
& $\mathbf{0.733 \pm 0.144}$ \\

DP-Fed ERM  
& $0.527 \pm 0.104$ 
& $0.519 \pm 0.146$ 
& $0.590 \pm 0.113$ 
& $\mathbf{0.774 \pm 0.113}$ 
& $0.620 \pm 0.111$ \\

DP-Fed SGD
& $\mathbf{0.599 \pm 0.131}$
& $0.559 \pm 0.065$
& $0.606 \pm 0.121$
& $0.637 \pm 0.141$
& $0.690 \pm 0.145$ \\

\addlinespace[0.25em]
\cdashline{1-6}
\addlinespace[0.35em]

Non-Private Fed DRO 
& $0.919 \pm 0.044$ 
& $\mathbf{0.943 \pm 0.042}$ 
& $0.924 \pm 0.045$ 
& $\mathbf{0.993 \pm 0.013}$ 
& $\mathbf{0.981 \pm 0.015}$ \\

Non-Private Fed OvA DRO
& $\mathbf{0.957 \pm 0.017}$ 
& $0.862 \pm 0.082$ 
& $\mathbf{0.932 \pm 0.038}$ 
& $0.992 \pm 0.010$ 
& $0.979 \pm 0.014$ \\
\bottomrule
\end{tabular}
\end{table}

\subsection{UCI Data Experiment}

In this benchmarking experiment, we use the Macro-F1 score as the primary evaluation metric, since it weights all classes equally and is therefore better suited to the class-imbalanced client splits used in this experiment.

\textbf{Datasets.} We use five popular multiclass benchmark datasets: Dermatology with 6 classes, and Iris, Wine, Palmer Penguins, and Seeds with 3 classes each. For all datasets, we use an 80\%/20\% train-test split and set \(G=4\) clients. Features are min-max scaled using statistics computed on the training split, and each sample is normalized to have \(\ell_2\)-norm at most \(1\). The training samples are distributed across clients using a controlled class-imbalanced split, where each class is unevenly allocated across clients while ensuring that all clients retain samples from every class. To account for label uncertainty, 10\% of the training labels are flipped at random, with the test labels left clean; all results in Table~\ref{tab:uci_main_results} are reported in this setting.

\textbf{Hyperparameters.}
We fix the differential privacy parameters to $\epsilon=0.5$ and $\delta=10^{-6}$ across all datasets, with the per-round budget split as in Section~\ref{sec:pump}, and train for $T=50$ rounds, which corresponds to a cumulative budget of $\bar{\epsilon}\approx3.7$. All remaining hyperparameters are selected by 5-fold cross-validation on the training data. For the centralized and non-private DRO baselines, we select the label transportation cost $\kappa$ and the local Wasserstein ball radius $r$ over $\kappa \in \{0.25,0.5,0.75,1.0\}$ and $r \in \{10^{-5},10^{-4},10^{-3},10^{-2}\}$. For DP-Fed DRO these are selected over $\kappa \in \{0.5, 1, 2\}$ and seven values of $r$ spanning $0$ to $0.06$, jointly with the step-size constants $c_M \in \{0.5, 1, 2\}$ and $c_{\lambda} \in \{0.5, 1, 2, 6\}$. The regularization parameter for the DP-Fed ERM baseline is selected over $\lambda \in \{10^{-8},10^{-6},10^{-4}\}$, with its step-size constant $c_w$ derived from the norm of the non-private solution. For the DP-Fed SGD baseline, the initial learning rate is selected over $\eta \in \{0.05,0.1,0.2\}$, with the decaying step size $\eta_k=\eta/\sqrt{k}$, while the regularization parameter is fixed at $\lambda=10^{-6}$. We use the $\ell_{\infty}$-norm throughout and set $\gamma=0.1$ for the DRO penalty term. The remaining constants are fixed at $C=0.4$, $V=1$ and $L=1$, and $S$ follows from the selected pair as $S=\max\{r,\kappa-r\}$.

\textbf{Results.} Table~\ref{tab:uci_main_results} summarizes the Macro-F1 scores on the five multiclass benchmark datasets. Among the private methods, DP-Fed DRO achieves the best performance on Iris, Wine, and Seeds, with the clearest margins on Wine and Seeds, and ranks second on Palmer Penguins, giving it the strongest performance overall across the five datasets. DP-Fed ERM performs best on Palmer Penguins and DP-Fed SGD on Dermatology, so no single private baseline is consistently competitive, whereas DP-Fed DRO remains among the top private methods on most datasets. Notably, the three datasets on which DP-Fed DRO performs best are also the smallest, where each client holds the fewest samples. Since the sensitivity of the local updates scales inversely with the local sample size, these are the settings with the largest injected noise, suggesting that the robust formulation is most beneficial when the privacy-induced perturbations are most severe. Compared with the non-private models, privacy incurs a substantial cost on all datasets, most notably on Iris and Dermatology. This is expected under such strict privacy conditions, where each client holds only a few dozen samples and therefore requires large noise to protect each one. The non-private results also show that the native multiclass and OvA DRO formulations have mixed behavior across real datasets, with native multiclass Fed DRO performing better on Iris, OvA Fed DRO performing better on Dermatology, and both performing similarly on the remaining datasets. We therefore use a controlled synthetic study, presented in Appendix~\ref{app:synthetic_overlap}, to identify when the native multiclass formulation is most useful relative to OvA. The results show that its advantage is most evident when related fault classes have correlated features and partial overlap, so that the classes are difficult to separate but not completely inseparable.

\subsection{Scalability Experiments}

This experiment evaluates the scalability of the proposed algorithm with respect to both the number of clients and the number of classes. For client scalability, we consider two regimes. In the first, the number of clients increases while the total sample size is fixed, so that each client has fewer local samples as the federation grows. In the second, the number of clients and the total sample size increase together, so that the number of samples per client remains fixed. These two settings allow us to distinguish the effect of adding more clients from the effect of reducing the number of samples assigned to each client. To compare all models under a common stopping rule, we measure scalability using the runtime required to attain the peak mCCR within a prescribed tolerance.

\textbf{Dataset.}
To control the data generation process, we use the \texttt{make\_classification} function from the Scikit-learn Python package~\citep{pedregosa2011scikit}. Across all scalability experiments, we use an 80/20 train-test split. In the client scalability experiments, we fix \(K=4\) classes and \(P=10\) features, and vary the number of clients as \(G \in \{5,10,15,20,25,30\}\). The fixed-total-sample setting uses \(N=3000\) samples in total, while the fixed-local-sample setting uses 100 samples per client, in both cases before the train-test split. In the class scalability experiment, we fix \(G=4\) clients and \(P=10\) features, and vary the number of classes as \(K \in \{3,5,7,9\}\). Each class is generated from a Gaussian distribution with standard deviation \(1\), with class means located at the vertices of a \(P\)-dimensional hypercube with side length \(5\). We apply the same min-max and \(\ell_2\) normalization as in Section~\ref{sec:pump}. The samples are balanced across classes, and no label flipping is introduced.

\begin{figure*}[t]
    \centering

    \begin{tabular}{m{0.27\textwidth} m{0.27\textwidth} m{0.27\textwidth} m{0.12\textwidth}}
        \centering
        \includegraphics[width=\linewidth]{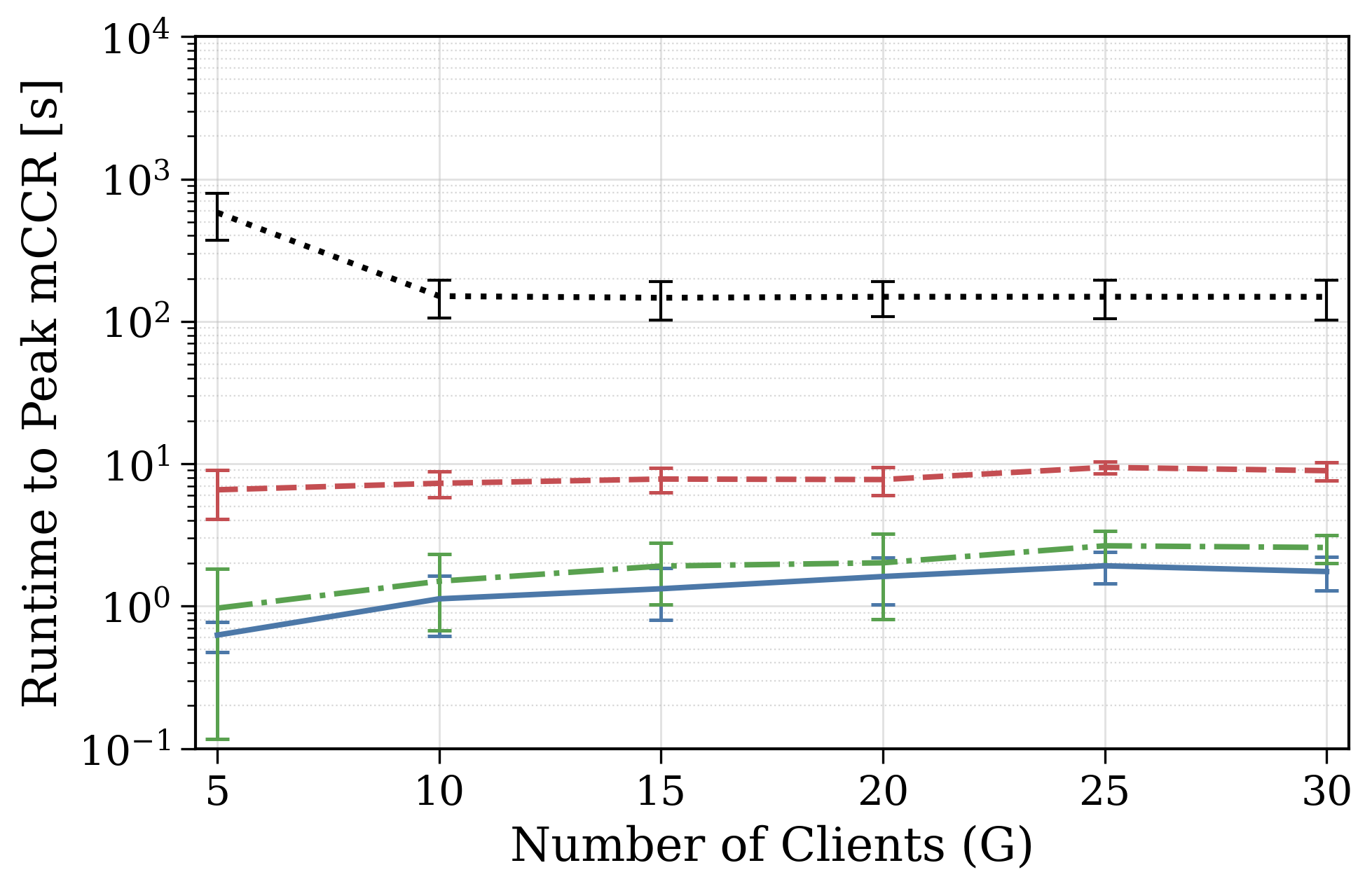}
        &
        \centering
        \includegraphics[width=\linewidth]{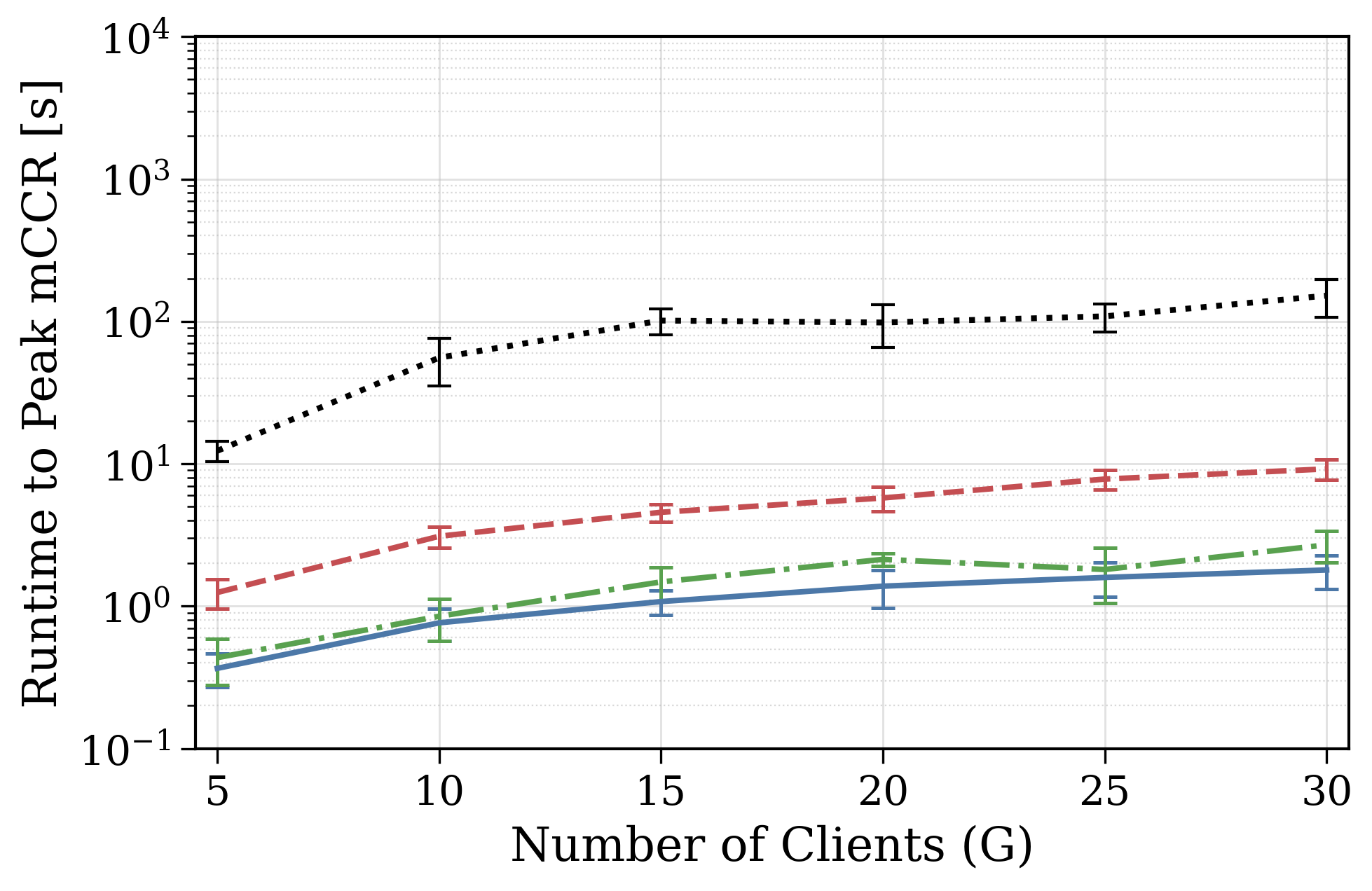}
        &
        \centering
        \includegraphics[width=\linewidth]{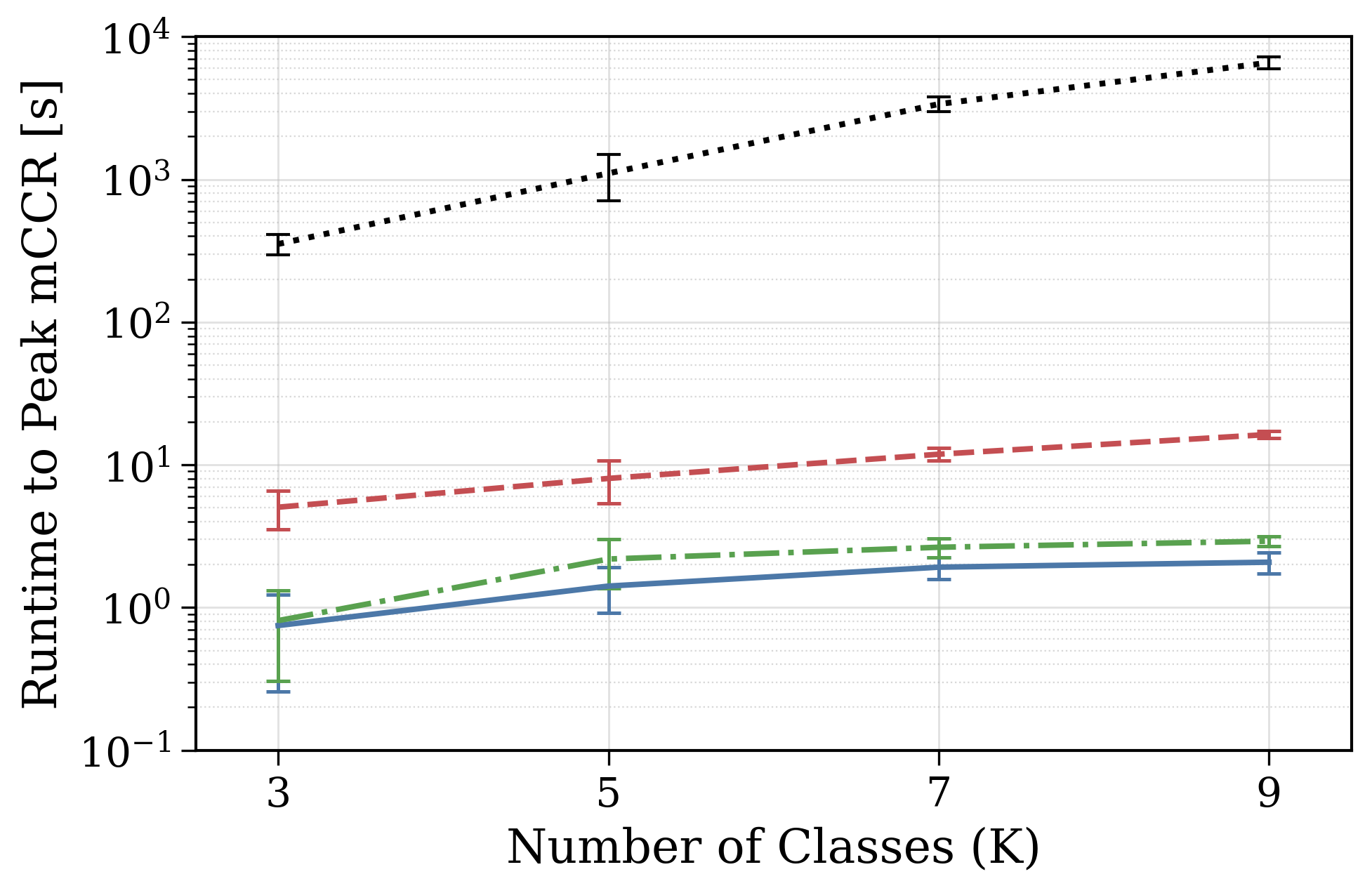}
        &
        \centering
        \includegraphics[width=0.95\linewidth]{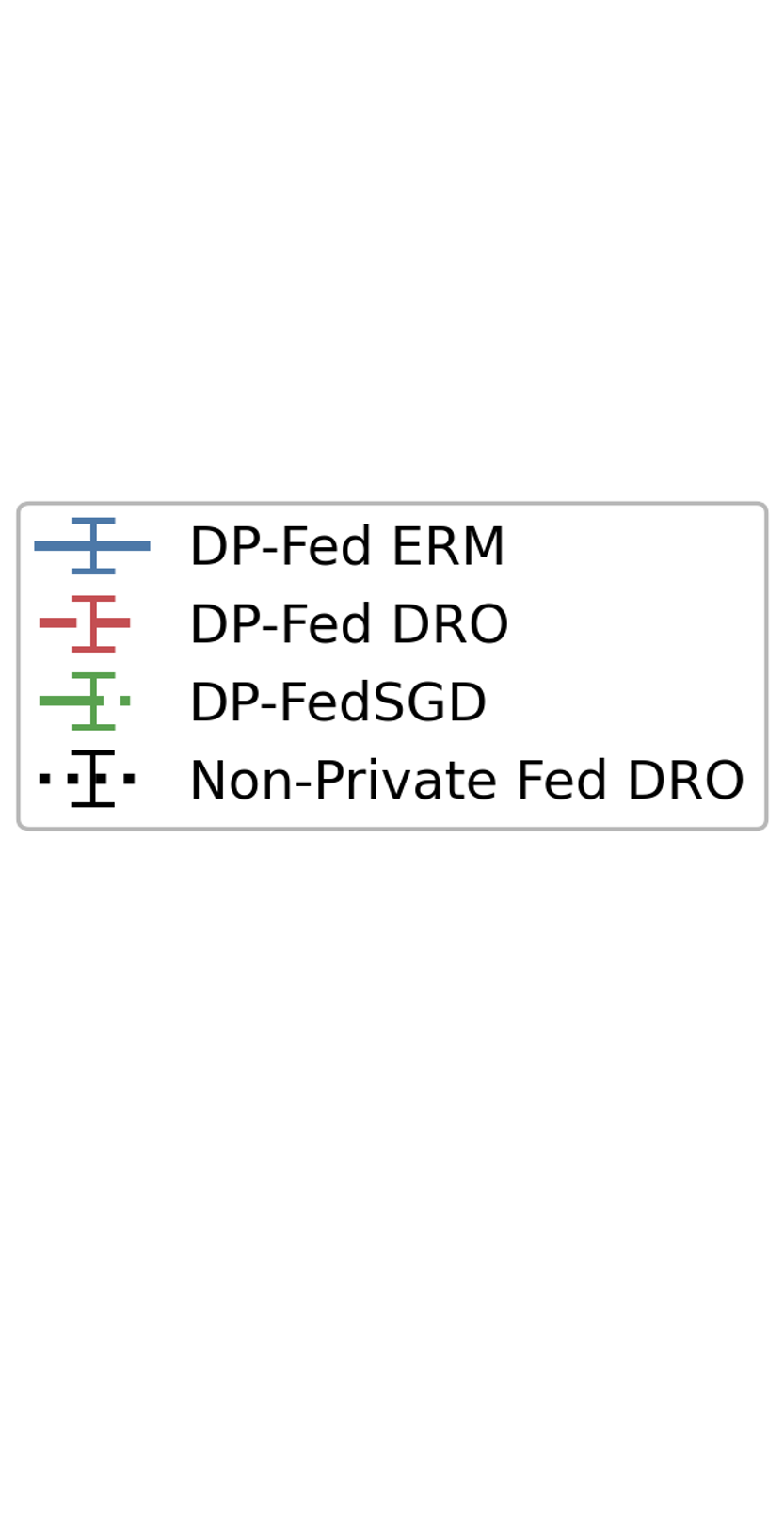}
        \tabularnewline

        \centering \footnotesize (a) Fixed total sample size
        &
        \centering \footnotesize (b) Fixed samples per client
        &
        \centering \footnotesize (c) Increasing number of classes
        &
        \tabularnewline
    \end{tabular}

    \caption{Log-scale runtime to peak mCCR for the tested federated models across the client- and class-scalability settings.}
    \label{fig:scalability_summary}
\end{figure*}

\textbf{Hyperparameters.}
Across all scalability experiments, we train for $T = 120$ rounds and fix the privacy parameters to $\epsilon = 0.2$ and $\delta = 10^{-8}$ per round, with the per-round budget split as in Section~\ref{sec:pump}, corresponding to a cumulative budget of $\bar\epsilon \approx 2.2$. The ADMM-based federated models use a global consensus parameter of \(\rho=0.5\). The DP-Fed DRO client problems use the label transportation cost \(\kappa=0.5\), the Wasserstein ambiguity radius \(r=0.02\), and the penalty parameter \(\gamma=0.1\), while the non-private DRO models use \(\kappa=0.25\) and \(r=10^{-5}\). The DP-Fed ERM baseline uses the regularization parameter \(\lambda=10^{-4}\), with \(c_w\) set on each configuration to the Frobenius norm of the non-private ERM solution, while DP-Fed SGD uses the decaying learning rate \(\eta_k=0.1/\sqrt{k}\) with the same regularization parameter. We use the \(\ell_{\infty}\)-norm throughout. The step-size constants are set to \(c_M=1\) and \(c_{\lambda,g}=1\) for all clients. The remaining constants are set to \(C=0.4\), \(V=1\), and \(L=1\), so that \(S=\max\{r,\kappa-r\}=0.48\).

\textbf{Results.}
The results are presented in Figure~\ref{fig:scalability_summary}. In the fixed-total-sample setting, the runtime required to reach peak mCCR remains relatively stable for the private models as the number of clients increases. Thus, adding clients does not substantially increase the runtime when the total dataset size is fixed. For the Non-Private Fed DRO baseline, the runtime decreases initially and then stabilizes, which is consistent with the fact that each client receives fewer local samples as \(G\) increases and the local DRO subproblems become smaller. Across all values of \(G\), Non-Private Fed DRO remains the most expensive method by at least an order of magnitude, because each round requires solving local DRO optimization subproblems, whereas the private models rely on simpler local updates. Among the private models, DP-Fed DRO is slower than DP-Fed ERM and DP-Fed SGD by a roughly constant factor, reflecting the additional terms introduced by the DRO formulation.

In the fixed-samples-per-client setting, the runtime generally increases with the number of clients. Unlike the fixed-total-sample setting, increasing \(G\) also increases the total sample size, so the models must process more data while coordinating across more local clients. The same relative ordering is observed: Non-Private Fed DRO has the largest runtime, while the three private models remain substantially faster and within an order of magnitude of one another. These results indicate that the proposed DP-Fed DRO method scales similarly to the private ERM and SGD baselines with respect to the number of clients, while maintaining the robustness benefits of the DRO formulation.

For scalability with respect to the number of classes, the runtime increases with \(K\) for all methods, especially for the Non-Private Fed DRO baseline. This behavior is expected because, when the feature dimension is fixed, increasing the number of classes makes the multiclass problem harder by requiring the models to separate more groups in the same feature space. The private models show a much milder increase. DP-Fed DRO remains slower than DP-Fed ERM and DP-Fed SGD by a roughly constant factor across the tested values of \(K\), so its relative overhead does not grow with the number of classes, and it stays orders of magnitude below the non-private model. This suggests that the additional cost of the private DRO formulation remains manageable as the number of classes grows.

\section{Conclusion}
\label{sec:conclusion}

In this paper, we address the problem of federated multiclass classification under feature and label uncertainty, with formal privacy guarantees for the local data. To this end, we develop a differentially private federated distributionally robust multiclass classification framework that combines Wasserstein-based robustness with sample-level differential privacy across all training rounds. We further derive a training algorithm that privatizes both the local model and the local DRO variable, and establish its convergence at an \(O(1/\sqrt{T})\) rate. On simulated industrial fault diagnosis data with distribution shift, DP-Fed DRO is the strongest private method at moderate privacy budgets, outperforms DP-Fed ERM at every budget where the private models are informative, and is the most robust to the choice of \(\delta\). On the benchmark datasets, DP-Fed DRO achieves the best private performance on most datasets, while the runtime experiments show that it scales with the number of clients and classes similarly to the private baselines. Comparisons with non-private native multiclass and OvA DRO show complementary behavior on real datasets, while the controlled synthetic study identifies correlated features and overlapping fault classes as the setting in which the native multiclass formulation provides its clearest advantage. Overall, these results show that robustness and privacy can be incorporated simultaneously in federated multiclass learning. Future work could investigate how to jointly optimize each client's communication schedule and its privacy budget allocation across training rounds, so that clients communicate only in selected rounds and spend their budget at different rates over training, under a fixed per-client privacy budget.

%\subsubsection*{Acknowledgments}
%Use unnumbered third level headings for the acknowledgments. All
%acknowledgments, including those to funding agencies, go at the end of the paper.
%Only add this information once your submission is accepted and deanonymized. 

\bibliography{main}
\bibliographystyle{tmlr}
\section*{Appendix}
\appendix
\section{Proofs}
\label{app:proofs}

\subsection{Proof of Proposition~\ref{prop:client_update}}
\label{app:proof_prop_client}
\begin{proof}
Given the updated global model \(M^\star\), client \(g\) updates its local model by minimizing the augmented local Lagrangian with respect to \((M_g,\lambda_g)\). By the same problem separability argument used in Proposition~1 of \citet{ibrahim2024federated}, this update is obtained from
\begin{align}
\inf_{M_g}\; \mathcal L_{\rho,g}(M_g,M^\star,\mu_g)
&= \inf_{M_g}\; \sup_{\mathbb Q_g\in \mathcal A^{(g)}_{r_g,1,d}(\Xi)} \mathbb E^{\mathbb Q_g}\!\left[\ell_{\mathrm{CS}}(M_g;\xi)\right] + \frac{\rho}{2}\|M_g-M^\star+\mu_g\|_F^2
\label{eq:prop41_pf_1}\\
&= \inf_{M_g,\lambda_g\ge 0} \bigg\{ \lambda_g r_g + \frac{1}{N_g}\sum_{n=1}^{N_g} \sup_{\xi\in\Xi} \bigl( \ell_{\mathrm{CS}}(M_g;\xi)-\lambda_g d(\xi,\hat\xi_n) \bigr) \notag\\
&\qquad + \frac{\rho}{2}\|M_g-M^\star+\mu_g\|_F^2 \bigg\}.
\label{eq:prop41_pf_2}
\end{align}

Introducing slack variables \(\{s_n\}_{n=1}^{N_g}\), we obtain
\begin{equation}
\label{eq:prop41_pf_3}
\begin{aligned}
&\sup_{\mathbb Q_g\in\mathcal A^{(g)}_{r_g,1,d}(\Xi)} \mathbb E^{\mathbb Q_g}\!\left[\ell_{\mathrm{CS}}(M_g;\xi)\right] \\
&\qquad = \begin{aligned}[t]
\inf_{\lambda_g,\{s_n\}} \quad & \lambda_g r_g + \frac{1}{N_g}\sum_{n=1}^{N_g}s_n \\
\text{s.t.}\quad & \sup_{\xi\in\Xi} \left( \ell_{\mathrm{CS}}(M_g;\xi) - \lambda_g d(\xi,\hat\xi_n) \right) \le s_n, \quad \forall n\in[N_g], \quad \lambda_g\ge 0.
\end{aligned}
\end{aligned}
\end{equation}

Since the inner supremum in \eqref{eq:prop41_pf_3} has the same Wasserstein robust structure as in the centralized multiclass DRO formulation, Theorem~1 in \citet{ibrahim2024wasserstein} applies here with \(M\) replaced by \(M_g\). Therefore, the constraints in \eqref{eq:prop41_pf_3} are equivalently rewritten as the following:
\begin{align}
\ell_{\mathrm{CS}}(M_g;x_n,y_n) &\le s_n,
\qquad \forall n\in[N_g],
\label{eq:prop41_pf_4}
\\
\ell_{\mathrm{CS}}(M_g;x_n,y)-\kappa\lambda_g &\le s_n,
\qquad
\forall n\in[N_g],\ \forall y\in\mathcal Y,\ y\neq y_n,
\label{eq:prop41_pf_5}
\\
\lambda_g &\ge \|(v_{(i)}-v_{(j)})^\top M_g\|_*,
\quad i,j\in[K].
\label{eq:prop41_pf_6}
\end{align}

Substituting \eqref{eq:prop41_pf_4}--\eqref{eq:prop41_pf_6}
into the clientwise augmented Lagrangian subproblem yields the
optimization problem stated in Proposition~\ref{prop:client_update}.

\end{proof}

\subsection{Proof of Lemma~\ref{lem:sensitivity_Mg}}
\label{app:proof_lem_sens_M}
\begin{proof}
Consider the proximal linearized client subproblem in
\eqref{eq:prox_linearized_lagrangian}. To make the dependence
on the local dataset explicit, denote the nonsmooth term in
\eqref{eq:def_f_local} by $f(M_g,\lambda_g;D_g).$
The first-order optimality condition for the update \(M_g^t(D_g)\)
with respect to \(M_g\) is
\begin{equation}
0 \in \partial_{M_g} f\!\left( \widetilde M_g^{t-1}, \widetilde{\lambda}_g^{t-1}; D_g \right) + \rho\left( M_g^t(D_g)-M^{t-1}+\mu_g^{t-1} \right) + \frac{1}{\eta_g} \left( M_g^t(D_g)-\widetilde M_g^{t-1} \right).
\label{eq:sens_pf_optimality}
\end{equation}
Equivalently,
\begin{equation}
\left( \rho+\frac{1}{\eta_g} \right) M_g^t(D_g) = - \partial_{M_g} f\!\left( \widetilde M_g^{t-1}, \widetilde{\lambda}_g^{t-1}; D_g \right) + \rho\left( M^{t-1}-\mu_g^{t-1} \right) + \frac{1}{\eta_g} \widetilde M_g^{t-1}.
\label{eq:sens_pf_update}
\end{equation}

Let \(D_g\sim D_g'\) be adjacent local datasets differing in
exactly one sample, indexed by \(j\). We condition on the
algorithmic state available at the beginning of iteration
\(t\), including the previous perturbed variables and server
broadcasts. Thus, \(M^{t-1}\), \(\mu_g^{t-1}\),
\(\widetilde M_g^{t-1}\),
\(\widetilde{\lambda}_g^{t-1}\), and
\(\tau_g^{t-1}\) are fixed when comparing the current update
under \(D_g\) and \(D_g'\). In particular, \(\tau_g^{t-1}\)
is computed from the previous perturbed variables according to
Algorithm~\ref{alg:dp_fed_dro}.

Under this fixed algorithmic state, the quadratic terms and the
linearized Bertsekas penalty contribution are identical for
the two adjacent datasets. Moreover, the robust-loss
contributions associated with all unchanged samples are
identical. Hence, the only difference in
\eqref{eq:sens_pf_update} arises from the single changed sample
in the empirical robust-loss average. Therefore,
\begin{equation}
M_g^t(D_g)-M_g^t(D_g') = -\left(\rho+\frac{1}{\eta_g}\right)^{-1} \frac{1}{N_g} \Bigl( \partial_{M_g}\ell_j^{(g)}(D_g) - \partial_{M_g}\ell_j^{(g)}(D_g') \Bigr),
\label{eq:sens_pf_difference}
\end{equation}
where
\(\partial_{M_g}\ell_j^{(g)}(D_g)\) and
\(\partial_{M_g}\ell_j^{(g)}(D_g')\)
denote selected subgradients of the robust-loss contribution
associated with the differing sample under \(D_g\) and \(D_g'\),
respectively.

For the multiclass Crammer--Singer loss, any active \(M_g\)-subgradient has the form
\begin{equation}
\left(v_{(c)}-y\right)x^\top
\label{eq:sens_pf_subgradient_form}
\end{equation}
for an active class \(c\neq y\). At points where multiple
classes are active, the subdifferential is the convex hull of
such matrices.

Using $\|ab^\top\|_F=\|a\|_2\|b\|_2,$
the one-hot encoding property
$\|v_{(c)}-y\|_2=\sqrt{2},$
and the bounded-feature assumption \(\|x\|_2\leq L\), we obtain
\begin{equation}
\left\|
\partial_{M_g}\ell_j^{(g)}(D_g)
\right\|_F
\leq \sqrt{2}L,
\qquad
\left\|
\partial_{M_g}\ell_j^{(g)}(D_g')
\right\|_F
\leq \sqrt{2}L.
\label{eq:sens_pf_subgradient_bound}
\end{equation}

Applying the triangle inequality to
\eqref{eq:sens_pf_difference} and then using
\eqref{eq:sens_pf_subgradient_bound} yields
\begin{align}
\left\| M_g^t(D_g)-M_g^t(D_g') \right\|_F
&\leq \left( \rho+\frac{1}{\eta_g} \right)^{-1} \frac{1}{N_g} \Bigl( \left\| \partial_{M_g}\ell_j^{(g)}(D_g) \right\|_F + \left\| \partial_{M_g}\ell_j^{(g)}(D_g') \right\|_F \Bigr) \notag\\
&\leq \frac{2\sqrt{2}L}{N_g\left(\rho+1/\eta_g\right)}.
\label{eq:sens_pf_final}
\end{align}
Taking the maximum over all adjacent datasets
\(D_g\sim D_g'\) yields the desired sensitivity bound.
\end{proof}

\subsection{Proof of Lemma~\ref{lem:sensitivity_lambda}}
\label{app:proof_lem_sens_lambda}
\begin{proof}
As in Lemma~\ref{lem:sensitivity_Mg}, condition on the algorithmic
state available at the beginning of iteration \(t\). The first-order
optimality condition for the \(\lambda_g\)-update gives
\[
\lambda_g^t(D_g)
=
\widetilde{\lambda}_g^{t-1}
-
\nu_g
\left(
r_g+
\partial_{\lambda_g}
f\!\left(
\widetilde M_g^{t-1},
\widetilde{\lambda}_g^{t-1};
D_g
\right)
\right).
\]

Let \(D_g\sim D_g'\) differ in exactly one sample, indexed by
\(j\). Subtracting the two updates,
\begin{equation}
\left| \lambda_g^t(D_g)-\lambda_g^t(D_g') \right| = \nu_g \left| \partial_{\lambda_g} f\!\left( \widetilde M_g^{t-1}, \widetilde{\lambda}_g^{t-1}; D_g \right) - \partial_{\lambda_g} f\!\left( \widetilde M_g^{t-1}, \widetilde{\lambda}_g^{t-1}; D_g' \right) \right|.
\label{eq:sens_lambda_difference}
\end{equation}

Under the fixed algorithmic state, the Bertsekas penalty
contribution and the robust-loss contributions associated with
all unchanged samples are identical under \(D_g\) and \(D_g'\).
Hence, only the differing sample contributes, so that
\begin{align}
\left|
\lambda_g^t(D_g)-\lambda_g^t(D_g')
\right|
&\le
\frac{\nu_g}{N_g}
\left|
\partial_{\lambda_g}\ell_j^{(g)}(D_g)
-
\partial_{\lambda_g}\ell_j^{(g)}(D_g')
\right|.
\end{align}

For each sample, the robust-loss subgradient with respect to
\(\lambda_g\) satisfies
\[
\partial_{\lambda_g}
\ell_n^{(g)}(M_g,\lambda_g)
\in[-\kappa,0].
\]
Therefore,
\[
\left|
\partial_{\lambda_g}\ell_j^{(g)}(D_g)
-
\partial_{\lambda_g}\ell_j^{(g)}(D_g')
\right|
\le
\kappa,
\]
which yields
\[
\left|
\lambda_g^t(D_g)-\lambda_g^t(D_g')
\right|
\le
\frac{\kappa\nu_g}{N_g}.
\]
Taking the maximum over all adjacent datasets gives the result.
\end{proof}

\subsection{Proof of Lemma~\ref{lem:single_iter}}
\label{app:proof_lem_single_iter}
\begin{proof}
For any client \(g\), let \(h_g^t(\cdot):=h_g(\cdot;\tau_g^{t-1})\). Since \(h_g^t\) is convex in \(z_g\), we have the subgradient inequality
\begin{equation}
h_g^t(\widetilde z_g^{\,t-1})-h_g^t(z_g)\le\big\langle h_g^{t\prime}(\widetilde z_g^{\,t-1}),\,\widetilde z_g^{\,t-1}-z_g\big\rangle .
\end{equation}

Adding the scaled-dual consensus term to both sides yields

\begin{equation}
h_g^t(\widetilde z_g^{\,t-1})-h_g^t(z_g)
+\rho\langle \widetilde M_g^t-M_g,\mu_g^t\rangle_F
\le
\big\langle h_g^{t\prime}(\widetilde z_g^{\,t-1}),
\,\widetilde z_g^{\,t-1}-z_g \big\rangle
+\rho\langle \widetilde M_g^t-M_g,\mu_g^t\rangle_F .
\label{eq:lem1_app_2}
\end{equation}

Define the padded dual variable $\bar\mu_g^t := (\mu_g^t,0)$.
Expanding $\widetilde z_g^{\,t-1}-z_g
=
(\widetilde z_g^{\,t-1}-\widetilde z_g^t)+(\widetilde z_g^t-z_g),$
we obtain
\begin{equation}
\left\langle h_g^{t\prime}(\widetilde z_g^{\,t-1}),
\,\widetilde z_g^{\,t-1}-z_g \right\rangle
+\rho\langle \widetilde M_g^t-M_g,\mu_g^t\rangle_F
=
\left\langle h_g^{t\prime}(\widetilde z_g^{\,t-1}),
\,\widetilde z_g^{\,t-1}-\widetilde z_g^t \right\rangle
+\left\langle h_g^{t\prime}(\widetilde z_g^{\,t-1})+\rho\bar\mu_g^t,
\,\widetilde z_g^t-z_g \right\rangle .
\label{eq:lem1_app_3}
\end{equation}

Define also the padded noise variables $\bar\xi_g^t := (\xi_g^t,0)$ and $\bar\zeta_g^t := (0,\zeta_g^t).$
For coefficients \(c_k\) and \(c_\ell\) specified below, adding and subtracting
\(c_k\bar\xi_g^t+c_\ell\bar\zeta_g^t\) in both inner products gives
\begin{multline}
\left\langle h_g^{t\prime}(\widetilde z_g^{\,t-1}),
\,\widetilde z_g^{\,t-1}-\widetilde z_g^t \right\rangle
+\left\langle h_g^{t\prime}(\widetilde z_g^{\,t-1})+\rho\bar\mu_g^t,
\,\widetilde z_g^t-z_g \right\rangle
\\
=
\left\langle
h_g^{t\prime}(\widetilde z_g^{\,t-1})
-c_k\bar\xi_g^t
-c_\ell\bar\zeta_g^t,\,
\widetilde z_g^{\,t-1}-\widetilde z_g^t
\right\rangle
\\
\qquad
-c_k\langle \xi_g^t,\,
M_g-\widetilde M_g^{\,t-1}\rangle_F
-c_\ell\zeta_g^t
\bigl(\lambda_g-\widetilde\lambda_g^{\,t-1}\bigr)
\\
\qquad
+\left\langle
h_g^{t\prime}(\widetilde z_g^{\,t-1})
+\rho\bar\mu_g^t
-c_k\bar\xi_g^t
-c_\ell\bar\zeta_g^t,\,
\widetilde z_g^t-z_g
\right\rangle .
\label{eq:lem1_app_4}
\end{multline}

Using the scaled-dual update
\(\bar\mu_g^t=\bar\mu_g^{t-1}+(\widetilde M_g^t-M^t,0)\)
and the decomposition
\[
(\widetilde M_g^t-M^t,0)
=
(\widetilde M_g^t-M^{t-1},0)
+
(M^{t-1}-M^t,0),
\]
we obtain
\begin{multline}
\left\langle
h_g^{t\prime}(\widetilde z_g^{\,t-1})
-c_k\bar\xi_g^t
-c_\ell\bar\zeta_g^t,\,
\widetilde z_g^{\,t-1}-\widetilde z_g^t
\right\rangle
\\
-c_k\langle \xi_g^t,\,
M_g-\widetilde M_g^{\,t-1}\rangle_F
-c_\ell\zeta_g^t
\bigl(\lambda_g-\widetilde\lambda_g^{\,t-1}\bigr)
\\
+\left\langle
h_g^{t\prime}(\widetilde z_g^{\,t-1})
+\rho\bar\mu_g^t
-c_k\bar\xi_g^t
-c_\ell\bar\zeta_g^t,\,
\widetilde z_g^t-z_g
\right\rangle
\\
=
\left\langle
h_g^{t\prime}(\widetilde z_g^{\,t-1})
-c_k\bar\xi_g^t
-c_\ell\bar\zeta_g^t,\,
\widetilde z_g^{\,t-1}-\widetilde z_g^t
\right\rangle
\\
-c_k\langle \xi_g^t,\,
M_g-\widetilde M_g^{\,t-1}\rangle_F
-c_\ell\zeta_g^t
\bigl(\lambda_g-\widetilde\lambda_g^{\,t-1}\bigr)
\\
+\left\langle
h_g^{t\prime}(\widetilde z_g^{\,t-1})
+\rho \bar\mu_g^{\,t-1}
+\rho(\widetilde M_g^t-M^{t-1},0)
-c_k\bar\xi_g^t
-c_\ell\bar\zeta_g^t,\,
\widetilde z_g^t-z_g
\right\rangle
\\
+\rho\langle
\widetilde M_g^t-M_g,\,
M^{t-1}-M^t
\rangle_F .
\label{eq:lem1_app_5}
\end{multline}

We now upper bound the last term in \eqref{eq:lem1_app_5} by rewriting the inner product as a difference of squared Frobenius norms.  
Using the identity
\begin{equation}
\|A-C\|_F^2 = \|A-B\|_F^2+\|B-C\|_F^2+2\langle A-B,\;B-C\rangle_F,
\label{eq:basic_frob_expand}
\end{equation}
we first apply \eqref{eq:basic_frob_expand} with $
(A,B,C)=(M_g,M^t,M^{t-1}), $
which gives
\begin{equation}
\|M_g-M^{t-1}\|_F^2
=
\|M_g-M^t\|_F^2+\|M^t-M^{t-1}\|_F^2
+2\langle M_g-M^t,\;M^t-M^{t-1}\rangle_F.
\label{eq:lem1_app_sq1}
\end{equation}
Next, apply \eqref{eq:basic_frob_expand} with
$(A,B,C)=(\widetilde M_g^t,M^t,M^{t-1}),$
which yields
\begin{equation}
\|\widetilde M_g^t-M^{t-1}\|_F^2
=
\|\widetilde M_g^t-M^t\|_F^2+\|M^t-M^{t-1}\|_F^2
+2\langle \widetilde M_g^t-M^t,\;M^t-M^{t-1}\rangle_F.
\label{eq:lem1_app_sq2}
\end{equation}
Subtracting \eqref{eq:lem1_app_sq1} from \eqref{eq:lem1_app_sq2}, the common term
\(\|M^t-M^{t-1}\|_F^2\) cancels, and we obtain
\begin{equation}
2\langle \widetilde M_g^t-M_g,\;M^{t-1}-M^t\rangle_F
=
\|M_g-M^{t-1}\|_F^2-\|M_g-M^t\|_F^2
+\|\widetilde M_g^t-M^t\|_F^2-\|\widetilde M_g^t-M^{t-1}\|_F^2.
\label{eq:lem1_app_sq3}
\end{equation}
Multiplying by \(\rho/2\) and dropping the final nonpositive term gives
\begin{equation}
\rho\langle \widetilde M_g^t-M_g,\;M^{t-1}-M^t\rangle_F
\le
\frac{\rho}{2}
\Big(
\|M_g-M^{t-1}\|_F^2-\|M_g-M^t\|_F^2
\Big)
+\frac{\rho}{2}\|\widetilde M_g^t-M^t\|_F^2.
\end{equation}
Finally, using the scaled dual update
\[
\mu_g^t-\mu_g^{t-1}=\widetilde M_g^t-M^t,
\]
we conclude that
\begin{equation}
\rho\langle \widetilde M_g^t-M_g,\;M^{t-1}-M^t\rangle_F
\le
\frac{\rho}{2}
\Big(
\|M_g-M^{t-1}\|_F^2-\|M_g-M^t\|_F^2
\Big)
+\frac{\rho}{2}\|\mu_g^t-\mu_g^{t-1}\|_F^2 .
\label{eq:lem1_app_6}
\end{equation}

Set $c_k:=\rho+\frac{1}{\eta_g},$ and $c_\ell:=\frac{1}{\nu_g}.$
We now split the second-to-last term in \eqref{eq:lem1_app_5}
into its \(M\)- and \(\lambda\)-components. Since $\bar\mu_g^{\,t-1}=(\mu_g^{t-1},0),\bar\xi_g^t=(\xi_g^t,0), $ and $ \bar\zeta_g^t=(0,\zeta_g^t),$
we have
\begin{multline}
\left\langle
h_g^{t\prime}(\widetilde z_g^{\,t-1})
+\rho \bar\mu_g^{\,t-1}
+\rho(\widetilde M_g^t-M^{t-1},0)
-c_k\bar\xi_g^t
-c_\ell\bar\zeta_g^t,\,
\widetilde z_g^t-z_g
\right\rangle
\\
=
\left\langle
[h_g^{t\prime}(\widetilde z_g^{\,t-1})]_M
+\rho(\widetilde M_g^t-M^{t-1}+\mu_g^{t-1})
-\left(\rho+\frac{1}{\eta_g}\right)\xi_g^t,\,
\widetilde M_g^t-M_g
\right\rangle_F
\\
\qquad
+
\left(
[h_g^{t\prime}(\widetilde z_g^{\,t-1})]_{\lambda}
-\frac{1}{\nu_g}\zeta_g^t
\right)
(\widetilde\lambda_g^t-\lambda_g).
\label{eq:lem1_app_7split}
\end{multline}

Applying Lemma~5 of \citet{huang2019dp} to the \(M\)-block term in \eqref{eq:lem1_app_7split}, with $D(X,Y)=\frac12\|X-Y\|_F^2 $ and $
s=\frac{1}{\eta_g},$
yields
\begin{multline}
\left\langle
[h_g^{t\prime}(\widetilde z_g^{\,t-1})]_M
+\rho(\widetilde M_g^t-M^{t-1}+\mu_g^{t-1})
-\left(\rho+\frac{1}{\eta_g}\right)\xi_g^t,\,
\widetilde M_g^t-M_g
\right\rangle_F
\\
\le
\frac{1}{2\eta_g}
\Big(
\|M_g-\widetilde M_g^{\,t-1}\|_F^2
-\|M_g-\widetilde M_g^t\|_F^2
-\|\widetilde M_g^t-\widetilde M_g^{\,t-1}\|_F^2
\Big).
\label{eq:lem1_app_7M}
\end{multline}

Applying Lemma~5 of \citet{huang2019dp} to the \(\lambda\)-block, with
$D(x,y)=\frac12(x-y)^2$ and
$s=\frac{1}{\nu_g},$
yields
\begin{equation}
\left(
[h_g^{t\prime}(\widetilde z_g^{\,t-1})]_{\lambda}
-\frac{1}{\nu_g}\zeta_g^t
\right)
(\widetilde\lambda_g^t-\lambda_g)
\le
\frac{1}{2\nu_g}
\Big(
(\lambda_g-\widetilde\lambda_g^{t-1})^2
-(\lambda_g-\widetilde\lambda_g^t)^2
-(\widetilde\lambda_g^t-\widetilde\lambda_g^{t-1})^2
\Big).
\label{eq:lem1_app_7lambda}
\end{equation}

Combining \eqref{eq:lem1_app_7split}, \eqref{eq:lem1_app_7M},
and \eqref{eq:lem1_app_7lambda}, we obtain
\begin{multline}
\left\langle
h_g^{t\prime}(\widetilde z_g^{\,t-1})
+\rho \bar\mu_g^{\,t-1}
+\rho(\widetilde M_g^t-M^{t-1},0)
-\left(\rho+\frac{1}{\eta_g}\right)\bar\xi_g^t
-\frac{1}{\nu_g}\bar\zeta_g^t,\,
\widetilde z_g^t-z_g
\right\rangle
\\
\le
\frac{1}{2\eta_g}
\Big(
\|M_g-\widetilde M_g^{\,t-1}\|_F^2
-\|M_g-\widetilde M_g^t\|_F^2
-\|\widetilde M_g^t-\widetilde M_g^{\,t-1}\|_F^2
\Big)
\\
\qquad
+
\frac{1}{2\nu_g}
\Big(
(\lambda_g-\widetilde\lambda_g^{\,t-1})^2
-(\lambda_g-\widetilde\lambda_g^t)^2
-(\widetilde\lambda_g^t-\widetilde\lambda_g^{\,t-1})^2
\Big).
\label{eq:lem1_app_7full}
\end{multline}

Then, using the block decomposition
\begin{equation}
\begin{aligned}
&\left\langle
h_g^{t\prime}(\widetilde z_g^{\,t-1})
-\left(\rho+\frac{1}{\eta_g}\right)\bar\xi_g^t
-\frac{1}{\nu_g}\bar\zeta_g^t,\,
\widetilde z_g^{\,t-1}-\widetilde z_g^t
\right\rangle
\\
&=
\left\langle
[h_g^{t\prime}(\widetilde z_g^{\,t-1})]_M
-\left(\rho+\frac{1}{\eta_g}\right)\xi_g^t,\,
\widetilde M_g^{\,t-1}-\widetilde M_g^t
\right\rangle_F
\\
&\qquad
+
\left(
[h_g^{t\prime}(\widetilde z_g^{\,t-1})]_\lambda
-\frac{1}{\nu_g}\zeta_g^t
\right)
(\widetilde\lambda_g^{\,t-1}-\widetilde\lambda_g^t),
\end{aligned}
\label{eq:lem1_app_8split}
\end{equation}
and applying Cauchy--Schwarz and Young's inequality blockwise, we obtain
\begin{multline}
\left\langle
h_g^{t\prime}(\widetilde z_g^{\,t-1})
-\left(\rho+\frac{1}{\eta_g}\right)\bar\xi_g^t
-\frac{1}{\nu_g}\bar\zeta_g^t,\,
\widetilde z_g^{\,t-1}-\widetilde z_g^t
\right\rangle
\\
\le
\frac{\eta_g}{2}
\left\|
[h_g^{t\prime}(\widetilde z_g^{\,t-1})]_M
-\left(\rho+\frac{1}{\eta_g}\right)\xi_g^t
\right\|_F^2
+
\frac{1}{2\eta_g}
\|\widetilde M_g^t-\widetilde M_g^{\,t-1}\|_F^2
\\
\qquad
+
\frac{\nu_g}{2}
\left|
[h_g^{t\prime}(\widetilde z_g^{\,t-1})]_\lambda
-\frac{1}{\nu_g}\zeta_g^t
\right|^2
+
\frac{1}{2\nu_g}
(\widetilde\lambda_g^t-\widetilde\lambda_g^{\,t-1})^2 .
\label{eq:lem1_app_8full}
\end{multline}

Combining the previous three upper bounds, we obtain
\begin{multline}
h_g^t(\widetilde z_g^{\,t-1})-h_g^t(z_g)
+\rho\langle \widetilde M_g^t-M_g,\mu_g^t\rangle_F
\\
\le
\frac{\eta_g}{2}
\left\|
[h_g^{t\prime}(\widetilde z_g^{\,t-1})]_M
-\left(\rho+\frac{1}{\eta_g}\right)\xi_g^t
\right\|_F^2
\\
\qquad
+\frac{\nu_g}{2}
\left|
[h_g^{t\prime}(\widetilde z_g^{\,t-1})]_\lambda
-\frac{1}{\nu_g}\zeta_g^t
\right|^2
\\
\qquad
-\left(\rho+\frac{1}{\eta_g}\right)
\langle \xi_g^t,\,
M_g-\widetilde M_g^{\,t-1}\rangle_F
\\
\qquad
-\frac{1}{\nu_g}\zeta_g^t
\bigl(\lambda_g-\widetilde\lambda_g^{\,t-1}\bigr)
\\
\qquad
+\frac{1}{2\eta_g}
\Big(
\|M_g-\widetilde M_g^{\,t-1}\|_F^2
-\|M_g-\widetilde M_g^t\|_F^2
\Big)
\\
\qquad
+\frac{1}{2\nu_g}
\Big(
(\lambda_g-\widetilde\lambda_g^{\,t-1})^2
-(\lambda_g-\widetilde\lambda_g^t)^2
\Big)
\\
\qquad
+\frac{\rho}{2}
\Big(
\|M_g-M^{t-1}\|_F^2
-\|M_g-M^{t}\|_F^2
\Big)
+\frac{\rho}{2}
\|\mu_g^t-\mu_g^{t-1}\|_F^2 .
\label{eq:lem1_app_9}
\end{multline}

For each client \(g\), add the terms
\[
-\rho\langle \mu_g^t,\;M^t-M\rangle_F
\qquad\text{and}\qquad
\rho\langle \mu_g-\mu_g^t,\;\widetilde M_g^t-M^t\rangle_F
\]
to both sides of \eqref{eq:lem1_app_9}. Then, weighting by \(\alpha_g\) and summing over \(g=1,\dots,G\), we arrive at
\begin{align}
\sum_{g=1}^G \alpha_g
&\Big(
h_g^t(\widetilde z_g^{\,t-1})-h_g^t(z_g)
+\rho\langle \widetilde M_g^t-M_g,\mu_g^t\rangle_F
\notag\\
&\qquad
-\rho\langle \mu_g^t,\,
M^t-M\rangle_F
+\rho\langle \mu_g-\mu_g^t,\,
\widetilde M_g^t-M^t\rangle_F
\Big)
\notag\\
&\le
\sum_{g=1}^G \alpha_g
\Bigg[
\frac{\eta_g}{2}
\left\|
[h_g^{t\prime}(\widetilde z_g^{\,t-1})]_M
-\left(\rho+\frac{1}{\eta_g}\right)\xi_g^t
\right\|_F^2
\notag\\
&\qquad
+\frac{\nu_g}{2}
\left|
[h_g^{t\prime}(\widetilde z_g^{\,t-1})]_\lambda
-\frac{1}{\nu_g}\zeta_g^t
\right|^2
\notag\\
&\qquad
-\left(\rho+\frac{1}{\eta_g}\right)
\langle \xi_g^t,\,
M_g-\widetilde M_g^{\,t-1}\rangle_F
\notag\\
&\qquad
-\frac{1}{\nu_g}\zeta_g^t
\bigl(\lambda_g-\widetilde\lambda_g^{\,t-1}\bigr)
\notag\\
&\qquad
+\frac{1}{2\eta_g}
\Big(
\|M_g-\widetilde M_g^{\,t-1}\|_F^2
-\|M_g-\widetilde M_g^t\|_F^2
\Big)
\notag\\
&\qquad
+\frac{1}{2\nu_g}
\Big(
(\lambda_g-\widetilde\lambda_g^{\,t-1})^2
-(\lambda_g-\widetilde\lambda_g^t)^2
\Big)
\notag\\
&\qquad
+\frac{\rho}{2}
\Big(
\|M_g-M^{t-1}\|_F^2
-\|M_g-M^{t}\|_F^2
\Big)
+\frac{\rho}{2}\|\mu_g^t-\mu_g^{t-1}\|_F^2
\notag\\
&\qquad
-\rho\langle \mu_g^t,\,
M^t-M\rangle_F
+\rho\langle \mu_g-\mu_g^t,\,
\widetilde M_g^t-M^t\rangle_F
\Bigg].
\label{eq:lem1_app_10}
\end{align}

Next, using the dual update and the server aggregation rule, we obtain the identities in \eqref{eq:lem1_app_11} and \eqref{eq:lem1_app_12}. In particular, \eqref{eq:lem1_app_11} follows from the fact that the weighted dual average is zero, while \eqref{eq:lem1_app_12} follows by substituting
$\widetilde M_g^t-M^t=\mu_g^t-\mu_g^{t-1}$
and expanding the squared Frobenius norm:
\begin{equation}
\sum_{g=1}^G \alpha_g \langle M^t-M,\mu_g^t\rangle_F
=
\left\langle
M^t-M,\,
\sum_{g=1}^G \alpha_g\mu_g^t
\right\rangle_F
=0 ,
\label{eq:lem1_app_11}
\end{equation}
\begin{equation}
\langle \mu_g-\mu_g^t,\,
\widetilde M_g^t-M^t\rangle_F
=
\langle \mu_g-\mu_g^t,\,
\mu_g^t-\mu_g^{t-1}\rangle_F
=
\frac{1}{2}
\Big(
\|\mu_g-\mu_g^{t-1}\|_F^2
-\|\mu_g-\mu_g^t\|_F^2
-\|\mu_g^t-\mu_g^{t-1}\|_F^2
\Big).
\label{eq:lem1_app_12}
\end{equation}
Substituting \eqref{eq:lem1_app_11} and \eqref{eq:lem1_app_12}
into \eqref{eq:lem1_app_10}, the term
$\frac{\rho}{2}\|\mu_g^t-\mu_g^{t-1}\|_F^2$
cancels with the last term in \eqref{eq:lem1_app_12}, and we obtain
\begin{multline}
\sum_{g=1}^G \alpha_g
\Big(
h_g^t(\widetilde z_g^{\,t-1})-h_g^t(z_g)
+\rho\langle \widetilde M_g^t-M_g,\mu_g^t\rangle_F
\\
\qquad
+\rho\langle \mu_g-\mu_g^t,\,
\widetilde M_g^t-M^t\rangle_F
\Big)
\\
\le
\sum_{g=1}^G \alpha_g
\Bigg[
\frac{\eta_g}{2}
\left\|
[h_g^{t\prime}(\widetilde z_g^{\,t-1})]_M
-\left(\rho+\frac{1}{\eta_g}\right)\xi_g^t
\right\|_F^2
\\
\qquad
+\frac{\nu_g}{2}
\left|
[h_g^{t\prime}(\widetilde z_g^{\,t-1})]_\lambda
-\frac{1}{\nu_g}\zeta_g^t
\right|^2
\\
\qquad
-\left(\rho+\frac{1}{\eta_g}\right)
\langle \xi_g^t,\,
M_g-\widetilde M_g^{\,t-1}\rangle_F
\\
\qquad
-\frac{1}{\nu_g}\zeta_g^t
\bigl(\lambda_g-\widetilde\lambda_g^{\,t-1}\bigr)
\\
\qquad
+\frac{1}{2\eta_g}
\Big(
\|M_g-\widetilde M_g^{\,t-1}\|_F^2
-\|M_g-\widetilde M_g^t\|_F^2
\Big)
\\
\qquad
+\frac{1}{2\nu_g}
\Big(
(\lambda_g-\widetilde\lambda_g^{\,t-1})^2
-(\lambda_g-\widetilde\lambda_g^t)^2
\Big)
\\
\qquad
+\frac{\rho}{2}
\Big(
\|M_g-M^{t-1}\|_F^2
-\|M_g-M^{t}\|_F^2
\Big)
\\
\qquad
+\frac{\rho}{2}
\Big(
\|\mu_g-\mu_g^{t-1}\|_F^2
-\|\mu_g-\mu_g^t\|_F^2
\Big)
\Bigg].
\label{eq:lem1_app_13}
\end{multline}
\end{proof}

\subsection{Auxiliary Lemmas}
\label{app:aux_lemmas}

\begin{lemma}\label{lem:coercive}
For each client $g$, assume that
\[
\eta_g>0,\qquad \nu_g>0,\qquad \rho>0.
\]
Then, for each iteration $t$, the proximal linearized surrogate $\widehat L_{\rho,g}^t(M_g,\lambda_g)$ is coercive in $(M_g,\lambda_g)$.
\end{lemma}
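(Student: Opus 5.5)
The plan is to show that $\widehat{\mathcal L}_{\rho,g}^t$ is an affine function of $(M_g,\lambda_g)$ plus a positive definite quadratic, and then to conclude coercivity by a quadratic-dominates-linear argument. I fix the iteration $t$ and the algorithmic state $\widetilde M_g^{t-1},\widetilde\lambda_g^{t-1},M^{t-1},\mu_g^{t-1},\tau_g^{t-1}$. I also fix the selected subgradients
\[
G_M:=\partial_{M_g}f(\widetilde M_g^{t-1},\widetilde\lambda_g^{t-1}),\qquad G_\lambda:=\partial_{\lambda_g}f(\widetilde M_g^{t-1},\widetilde\lambda_g^{t-1}).
\]
These are fixed finite elements of the subdifferentials of the convex, finite-valued function $f$ in \eqref{eq:def_f_local}, so the subdifferentials are nonempty. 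The constant $f(\widetilde M_g^{t-1},\widetilde\lambda_g^{t-1})$ is finite for the same reason.

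Reading off \eqref{eq:prox_linearized_lagrangian}, the surrogate then has the form
\[
\widehat{\mathcal L}_{\rho,g}^t(M_g,\lambda_g)=c_0+\langle A,M_g\rangle_F+b\,\lambda_g+\frac{\rho}{2}\|M_g-M^{t-1}+\mu_g^{t-1}\|_F^2+\frac{1}{2\eta_g}\|M_g-\widetilde M_g^{t-1}\|_F^2+\frac{1}{2\nu_g}(\lambda_g-\widetilde\lambda_g^{t-1})^2,
\]
where $A=G_M$, $b=r_g+G_\lambda$, and $c_0$ collects the constants.

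Next I lower bound each quadratic term. Using $\|X-Y\|_F^2\ge \tfrac12\|X\|_F^2-\|Y\|_F^2$, the $M_g$-block satisfies
\[
\frac{\rho}{2}\|M_g-M^{t-1}+\mu_g^{t-1}\|_F^2+\frac{1}{2\eta_g}\|M_g-\widetilde M_g^{t-1}\|_F^2\ \ge\ \frac14\Big(\rho+\frac1{\eta_g}\Big)\|M_g\|_F^2-c_1.
\]
The $\lambda_g$-block satisfies
\[
\frac{1}{2\nu_g}(\lambda_g-\widetilde\lambda_g^{t-1})^2\ \ge\ \frac{1}{4\nu_g}\lambda_g^2-c_2.
\]
By Cauchy--Schwarz, the linear terms are bounded below by $-\|A\|_F\|M_g\|_F-|b||\lambda_g|$. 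Writing $m:=\min\{\tfrac14(\rho+1/\eta_g),\tfrac1{4\nu_g}\}>0$, which is positive because $\rho,\eta_g,\nu_g>0$, I obtain
\[
\widehat{\mathcal L}_{\rho,g}^t(M_g,\lambda_g)\ \ge\ m\,\|(M_g,\lambda_g)\|^2-\big(\|A\|_F+|b|\big)\,\|(M_g,\lambda_g)\|-c_3 .
\]
The right-hand side tends to $+\infty$ as $\|(M_g,\lambda_g)\|\to\infty$, which is the claimed coercivity. As a byproduct, strict convexity yields a unique minimizer, which justifies the $\arg\min$ in Algorithm~\ref{alg:dp_fed_dro}.

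The only step needing care is justifying that $G_M$ and $G_\lambda$ are finite and fixed. For this I would note that $f$ is a finite sum of finite maxima of affine functions in $(M_g,\lambda_g)$ plus the penalty $\frac{1}{2\gamma}\max\{0,\cdot\}^2$ composed with the convex Lipschitz function $\phi(M_g)-\lambda_g$. Hence $f$ is convex and finite everywhere, and its subdifferential at any point is nonempty and bounded. The bounds used in Lemmas~\ref{lem:sensitivity_Mg} and~\ref{lem:sensitivity_lambda} give explicit finite values, but mere finiteness suffices here.
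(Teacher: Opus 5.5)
Your proof is correct and follows essentially the same route as the paper's. Both write $\widehat L_{\rho,g}^t$ as a positive-definite quadratic in $(M_g,\lambda_g)$ plus affine terms with fixed selected-subgradient coefficients, bound the linear part by Cauchy--Schwarz, and let the quadratic dominate; you lower-bound the quadratics via $\|X-Y\|_F^2\ge\frac12\|X\|_F^2-\|Y\|_F^2$ instead of expanding them exactly, and your single joint bound $m\|(M_g,\lambda_g)\|^2-c\|(M_g,\lambda_g)\|-c_3$ handles the joint limit slightly more cleanly than the paper's blockwise conclusion.
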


\begin{proof}
Recall that
\begin{multline}
\widehat L_{\rho,g}^t(M_g,\lambda_g)
=
f(\widetilde M_g^{t-1},\widetilde\lambda_g^{t-1})
+
\left\langle \partial_{M_g} f(\widetilde M_g^{t-1},\widetilde\lambda_g^{t-1}),\, M_g-\widetilde M_g^{t-1}\right\rangle_F
\\
+
\left\langle \partial_{\lambda_g} f(\widetilde M_g^{t-1},\widetilde\lambda_g^{t-1}),\, \lambda_g-\widetilde\lambda_g^{t-1}\right\rangle
+
\lambda_g r_g
+
\frac{\rho}{2}\left\|M_g-M^{t-1}+\mu_g^{t-1}\right\|_F^2
\\
+
\frac{1}{2\eta_g}\left\|M_g-\widetilde M_g^{t-1}\right\|_F^2
+
\frac{1}{2\nu_g}\left(\lambda_g-\widetilde\lambda_g^{t-1}\right)^2 .
\end{multline}

The term $f(\widetilde M_g^{t-1},\widetilde\lambda_g^{t-1})$ is constant with respect to $(M_g,\lambda_g)$ and therefore does not affect coercivity.

We first expand the terms depending on $M_g$. We have
\begin{equation}
\left\langle \partial_{M_g} f(\widetilde M_g^{t-1},\widetilde\lambda_g^{t-1}),\, M_g-\widetilde M_g^{t-1}\right\rangle_F
=
\left\langle \partial_{M_g} f(\widetilde M_g^{t-1},\widetilde\lambda_g^{t-1}),\, M_g\right\rangle_F
\\
-
\left\langle \partial_{M_g} f(\widetilde M_g^{t-1},\widetilde\lambda_g^{t-1}),\, \widetilde M_g^{t-1}\right\rangle_F ,
\end{equation}
and
\begin{equation}
\frac{1}{2\eta_g}\left\|M_g-\widetilde M_g^{t-1}\right\|_F^2
=
\frac{1}{2\eta_g}\|M_g\|_F^2
-
\frac{1}{\eta_g}\left\langle M_g,\widetilde M_g^{t-1}\right\rangle_F
\\
+
\frac{1}{2\eta_g}\left\|\widetilde M_g^{t-1}\right\|_F^2 ,
\end{equation}
while
\begin{equation}
\frac{\rho}{2}\left\|M_g-M^{t-1}+\mu_g^{t-1}\right\|_F^2
=
\frac{\rho}{2}\|M_g\|_F^2
-
\rho\left\langle M_g,\, M^{t-1}-\mu_g^{t-1}\right\rangle_F
\\
+
\frac{\rho}{2}\left\|M^{t-1}-\mu_g^{t-1}\right\|_F^2 .
\end{equation}

Therefore, all terms independent of \(M_g\) can be grouped into \(B_g^t\), and
\(\widehat L_{\rho,g}^t(M_g,\lambda_g)\) can be written as
\begin{equation}
\left(\frac{1}{2\eta_g}+\frac{\rho}{2}\right)\|M_g\|_F^2
+
\Bigl\langle
\partial_{M_g} f(\widetilde M_g^{t-1},\widetilde\lambda_g^{t-1})
-
\frac{1}{\eta_g}\widetilde M_g^{t-1}
\\
-
\rho\bigl(M^{t-1}-\mu_g^{t-1}\bigr),
\, M_g
\Bigr\rangle_F
+
B_g^t.
\end{equation}

Next, we expand the terms depending on $\lambda_g$. We have
\begin{equation}
\left\langle \partial_{\lambda_g} f(\widetilde M_g^{t-1},\widetilde\lambda_g^{t-1}),\, \lambda_g-\widetilde\lambda_g^{t-1}\right\rangle
=
\partial_{\lambda_g} f(\widetilde M_g^{t-1},\widetilde\lambda_g^{t-1})\,\lambda_g
\\
-
\partial_{\lambda_g} f(\widetilde M_g^{t-1},\widetilde\lambda_g^{t-1})\,\widetilde\lambda_g^{t-1},
\end{equation}
and
\begin{equation}
\frac{1}{2\nu_g}\left(\lambda_g-\widetilde\lambda_g^{t-1}\right)^2
=
\frac{1}{2\nu_g}\lambda_g^2
-
\frac{1}{\nu_g}\widetilde\lambda_g^{t-1}\lambda_g
+
\frac{1}{2\nu_g}\left(\widetilde\lambda_g^{t-1}\right)^2 .
\end{equation}

Hence, all terms independent of \(\lambda_g\) can be grouped into \(C_g^t\), and
\(\widehat L_{\rho,g}^t(M_g,\lambda_g)\) can be written with respect to \(\lambda_g\) as
\begin{equation}
\frac{1}{2\nu_g}\lambda_g^2
+
\left(
\partial_{\lambda_g} f(\widetilde M_g^{t-1},\widetilde\lambda_g^{t-1})
-
\frac{1}{\nu_g}\widetilde\lambda_g^{t-1}
+
r_g
\right)\lambda_g
+
C_g^t.
\end{equation}

Combining the two expansions, and absorbing all terms independent of \((M_g,\lambda_g)\) into \(D_g^t\), we obtain
\begin{multline}
\widehat L_{\rho,g}^t(M_g,\lambda_g)
=
\left(\frac{1}{2\eta_g}+\frac{\rho}{2}\right)\|M_g\|_F^2
+
\frac{1}{2\nu_g}\lambda_g^2
\\
+
\Bigl\langle
\partial_{M_g} f(\widetilde M_g^{t-1},\widetilde\lambda_g^{t-1})
-
\frac{1}{\eta_g}\widetilde M_g^{t-1}
-
\rho\bigl(M^{t-1}-\mu_g^{t-1}\bigr),
\, M_g
\Bigr\rangle_F
\\
+
\left(
\partial_{\lambda_g} f(\widetilde M_g^{t-1},\widetilde\lambda_g^{t-1})
-
\frac{1}{\nu_g}\widetilde\lambda_g^{t-1}
+
r_g
\right)\lambda_g
+
D_g^t.
\end{multline}

Now apply Cauchy--Schwarz to the linear terms. We get
\begin{multline}
\Bigl\langle
\partial_{M_g} f(\widetilde M_g^{t-1},\widetilde\lambda_g^{t-1})
-
\frac{1}{\eta_g}\widetilde M_g^{t-1}
-
\rho\bigl(M^{t-1}-\mu_g^{t-1}\bigr),
\, M_g
\Bigr\rangle_F
\\
\ge
-
\left\|
\partial_{M_g} f(\widetilde M_g^{t-1},\widetilde\lambda_g^{t-1})
-
\frac{1}{\eta_g}\widetilde M_g^{t-1}
-
\rho\bigl(M^{t-1}-\mu_g^{t-1}\bigr)
\right\|_F
\|M_g\|_F,
\end{multline}
and
\begin{equation}
\left(
\partial_{\lambda_g} f(\widetilde M_g^{t-1},\widetilde\lambda_g^{t-1})
-
\frac{1}{\nu_g}\widetilde\lambda_g^{t-1}
+
r_g
\right)\lambda_g
\\
\ge
-
\left|
\partial_{\lambda_g} f(\widetilde M_g^{t-1},\widetilde\lambda_g^{t-1})
-
\frac{1}{\nu_g}\widetilde\lambda_g^{t-1}
+
r_g
\right|
\, |\lambda_g|.
\end{equation}

Therefore,
\begin{align}
\widehat L_{\rho,g}^t(M_g,\lambda_g)
\ge\;&
\left(\frac{1}{2\eta_g}+\frac{\rho}{2}\right)\|M_g\|_F^2
\nonumber\\
&\quad
-
\left\|
\partial_{M_g} f(\widetilde M_g^{t-1},\widetilde\lambda_g^{t-1})
-
\frac{1}{\eta_g}\widetilde M_g^{t-1}
\right.
\nonumber\\
&\qquad\qquad
\left.
-
\rho\bigl(M^{t-1}-\mu_g^{t-1}\bigr)
\right\|_F
\|M_g\|_F
\nonumber\\
&\quad
+
\frac{1}{2\nu_g}\lambda_g^2
-
\left|
\partial_{\lambda_g} f(\widetilde M_g^{t-1},\widetilde\lambda_g^{t-1})
-
\frac{1}{\nu_g}\widetilde\lambda_g^{t-1}
+
r_g
\right|
\, |\lambda_g|
+
D_g^t.
\end{align}

Since
\[
\frac{1}{2\eta_g}+\frac{\rho}{2}>0
\qquad\text{and}\qquad
\frac{1}{2\nu_g}>0,
\]
the quadratic terms dominate the corresponding linear terms as
\[
\|M_g\|_F^2+|\lambda_g|^2\to\infty.
\]
Indeed,
\begin{align}
&\left(\frac{1}{2\eta_g}+\frac{\rho}{2}\right)\|M_g\|_F^2
-
\left\|
\partial_{M_g} f(\widetilde M_g^{t-1},\widetilde\lambda_g^{t-1})
-
\frac{1}{\eta_g}\widetilde M_g^{t-1}
\right.
\nonumber\\
&\qquad\qquad
\left.
-
\rho\bigl(M^{t-1}-\mu_g^{t-1}\bigr)
\right\|_F
\|M_g\|_F
\to\infty
\end{align}
as $\|M_g\|_F\to\infty$, and
\begin{equation}
\frac{1}{2\nu_g}\lambda_g^2
-
\left|
\partial_{\lambda_g} f(\widetilde M_g^{t-1},\widetilde\lambda_g^{t-1})
-
\frac{1}{\nu_g}\widetilde\lambda_g^{t-1}
+
r_g
\right|
\, |\lambda_g|
\to\infty
\end{equation}
as $|\lambda_g|\to\infty$. Hence,
\[
\widehat L_{\rho,g}^t(M_g,\lambda_g)\to\infty
\qquad\text{whenever}\qquad
\|M_g\|_F^2+|\lambda_g|^2\to\infty.
\]
Therefore, $\widehat L_{\rho,g}^t(M_g,\lambda_g)$ is coercive in $(M_g,\lambda_g)$.
\end{proof}

\begin{lemma}[Finite-horizon boundedness of the generated trajectory]\label{lem:bounded_traj}
Fix a finite training horizon $T\in \mathbb{N}$. Assume that, for each client $g$, the parameters satisfy
\[
\eta_g>0,\qquad \nu_g>0,\qquad \rho>0.
\]
Assume further that, for each client \(g=1,\dots,G\), the iterates are initialized as
\[
\widetilde M_g^0=0,\qquad
\widetilde\lambda_g^0=0,\qquad
\mu_g^0=0,\qquad
\tau_g^0=0,
\]
and that the global model is initialized as
\[
M^0=0.
\]
Then, for each $g$ and each $t=0,\dots,T$, the iterates
\[
\widetilde M_g^t,\qquad \widetilde\lambda_g^t,\qquad M^t,\qquad \mu_g^t,\qquad \tau_g^t
\]
are finite.

Consequently, the generated trajectory
\[
\mathcal T_T
:=
\left\{
(\widetilde M_g^t,\widetilde\lambda_g^t,M^t,\mu_g^t,\tau_g^t)
:\;
g=1,\dots,G,\; t=0,\dots,T
\right\}
\]
is a finite set, and therefore there exists a constant $E_T>0$ such that
\[
\|\widetilde M_g^t\|_F \le E_T,\qquad
|\widetilde\lambda_g^t| \le E_T,\qquad
\|M^t\|_F \le E_T,
\]
\[
\|\mu_g^t\|_F \le E_T,\qquad
|\tau_g^t| \le E_T,
\]
for all $g=1,\dots,G$ and all $t=0,\dots,T$.
\end{lemma}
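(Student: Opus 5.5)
The plan is induction on $t$, for a fixed realization of the noise $\{\xi_g^t,\zeta_g^t\}$. The base case $t=0$ holds by the zero initialization. For the inductive step, I assume that $\widetilde M_g^{t-1}$, $\widetilde\lambda_g^{t-1}$, $M^{t-1}$, $\mu_g^{t-1}$ and $\tau_g^{t-1}$ are finite for every $g$. The goal is then to show that each step of round $t$ of Algorithm~\ref{alg:dp_fed_dro} produces finite values.

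The main step is the client update. By Lemma~\ref{lem:coercive}, the surrogate $\widehat{\mathcal L}_{\rho,g}^t$ is coercive, and it is in fact a strongly convex quadratic in $(M_g,\lambda_g)$. It therefore has a unique, finite minimizer, and the first-order conditions give it in closed form:
\[
M_g^t=\Bigl(\rho+\tfrac{1}{\eta_g}\Bigr)^{-1}\Bigl(-\partial_{M_g}f+\rho(M^{t-1}-\mu_g^{t-1})+\tfrac{1}{\eta_g}\widetilde M_g^{t-1}\Bigr),\qquad
\lambda_g^t=\widetilde\lambda_g^{t-1}-\nu_g\bigl(r_g+\partial_{\lambda_g}f\bigr).
\]
Here both subgradients are evaluated at $(\widetilde M_g^{t-1},\widetilde\lambda_g^{t-1})$, as in the proofs of Lemmas~\ref{lem:sensitivity_Mg} and~\ref{lem:sensitivity_lambda}.

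The potential obstacle is that these selected subgradients of $f$ must be finite. I would check this term by term.
\begin{itemize}
\item The robust-loss part of $f$ is a finite average of maxima of affine functions. Its subgradients are convex combinations of $(v_{(c)}-y)x_n^\top$ in the $M_g$ block and lie in $[-\kappa,0]$ in the $\lambda_g$ block, so they are bounded by $\sqrt2 L$ and $\kappa$ respectively.
\item The penalty part has subgradient $\max\{0,\tau_g^{t-1}+\gamma(\phi-\lambda_g)\}\cdot(\partial\phi,-1)$.
\item $\phi$ is a finite maximum of dual norms of linear maps, so it is finite at finite arguments and Lipschitz, with subgradients of bounded norm.
\end{itemize}
Since $\tau_g^{t-1}$, $\widetilde M_g^{t-1}$ and $\widetilde\lambda_g^{t-1}$ are finite by the inductive hypothesis, this multiplier is finite, and hence $M_g^t$ and $\lambda_g^t$ are finite.

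The remaining updates are finite compositions of operations that preserve finiteness:
\begin{itemize}
\item adding the Gaussian samples, which are finite almost surely, gives $\widetilde M_g^t$ and $\widetilde\lambda_g^t$;
\item $M^t$ is a finite convex combination of finite matrices;
\item $\mu_g^t$ is a sum of finite terms;
\item $\tau_g^t$ is a maximum of $0$ and a finite quantity, because $\phi(\widetilde M_g^t)$ is finite.
\end{itemize}
This closes the induction.

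Finally, the trajectory $\mathcal T_T$ contains at most $G(T+1)$ tuples, all finite. Setting $E_T$ to be the maximum of all the listed norms plus one gives a finite positive constant satisfying every bound in the statement. The result holds almost surely, or equivalently for each realization of the noise.
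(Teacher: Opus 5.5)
Your proof is correct and follows the paper's argument: induction on $t$, using the coercivity of the proximal linearized surrogate (Lemma~\ref{lem:coercive}) to get a finite minimizer, propagating finiteness through the noise, server, dual and $\tau_g$ updates, and then taking a maximum over the finite trajectory. You also fill in two points the paper leaves implicit: you check explicitly that the linearization subgradients of $f$ are finite, and you take $E_T$ to be the maximum plus one, which guarantees $E_T>0$.
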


\begin{proof}
We proceed by induction on the iteration index $t$.

For $t=0$, the initialization
\[
\widetilde M_g^0=0,\qquad
\widetilde\lambda_g^0=0,\qquad
M^0=0,\qquad
\mu_g^0=0,\qquad
\tau_g^0=0
\]
implies that all initial iterates are finite.

Now fix $t\in\{1,\dots,T\}$ and assume that, for every client $g$, the quantities
\[
\widetilde M_g^{t-1},\qquad \widetilde\lambda_g^{t-1},\qquad M^{t-1},\qquad \mu_g^{t-1},\qquad \tau_g^{t-1}
\]
are finite.By Lemma~\ref{lem:coercive}, for each client $g$, the local proximal linearized surrogate $\widehat L_{\rho,g}^t(M_g,\lambda_g)$
is coercive in $(M_g,\lambda_g)$. Since $\widehat L_{\rho,g}^t$ is also continuous in $(M_g,\lambda_g)$, it attains its minimum. Hence the local primal variables
\[
M_g^t,\qquad \lambda_g^t
\]
are finite for every client $g$, and so are the perturbed variables
\[
\widetilde M_g^t=M_g^t+\xi_g^t,\qquad \widetilde\lambda_g^t=\lambda_g^t+\zeta_g^t,
\]
since the Gaussian perturbations \(\xi_g^t\) and \(\zeta_g^t\) are finite.

Since the global model is updated according to
\[
M^t=\sum_{g=1}^G \alpha_g\bigl(\widetilde M_g^t+\mu_g^{t-1}\bigr),
\]
it follows that \(M^t\) is finite. Likewise, since the dual variable is updated as
\[
\mu_g^t=\mu_g^{t-1}+\widetilde M_g^t-M^t,
\]
it follows that \(\mu_g^t\) is finite. Finally, since
\[
\tau_g^t=\max\{0,\tau_g^{t-1}+\gamma(\phi(\widetilde M_g^t)-\widetilde\lambda_g^t)\},
\]
it follows that \(\tau_g^t\) is also finite, because it is obtained by applying finite-valued operations to previously computed finite quantities.

Thus, by induction, all iterates
\[
\widetilde M_g^t,\qquad \widetilde\lambda_g^t,\qquad M^t,\qquad \mu_g^t,\qquad \tau_g^t
\]
are finite for every $g=1,\dots,G$ and every $t=0,\dots,T$.

Since the index sets $\{1,\dots,G\}$ and $\{0,\dots,T\}$ are finite, the generated trajectory
\[
\mathcal T_T
=
\left\{
(\widetilde M_g^t,\widetilde\lambda_g^t,M^t,\mu_g^t,\tau_g^t)
:\;
g=1,\dots,G,\; t=0,\dots,T
\right\}
\]
contains finitely many points. Therefore, the maximum of the quantities
\[
\|\widetilde M_g^t\|_F,\qquad
|\widetilde\lambda_g^t|,\qquad
\|M^t\|_F,\qquad
\|\mu_g^t\|_F,\qquad
|\tau_g^t|
\]
over all $g=1,\dots,G$ and $t=0,\dots,T$ exists and is finite. Denote this maximum by $E_T$.
\end{proof}

\subsection{Proof of Theorem~\ref{thm:composition}}
\label{app:proof_comp_theorem}
\begin{proof}

Define the privacy losses associated with the perturbed model
update and local DRO variable at iteration \(t\) by
\begin{equation}
c_M(Z)
:=
\log\!\left(
\frac{p_M(Z)}{q_M(Z)}
\right),
\qquad
c_\lambda(z)
:=
\log\!\left(
\frac{p_\lambda(z)}{q_\lambda(z)}
\right),
\label{eq:comp_pf_1}
\end{equation}
where
\begin{equation}
\begin{aligned}
p_M(Z)
&:=
p_{\widetilde M_g^{\,t}\mid D}(Z),
&
q_M(Z)
&:=
p_{\widetilde M_g^{\,t}\mid D'}(Z),
\\
p_\lambda(z)
&:=
p_{\widetilde\lambda_g^{\,t}\mid D}(z),
&
q_\lambda(z)
&:=
p_{\widetilde\lambda_g^{\,t}\mid D'}(z),
\end{aligned}
\label{eq:comp_pf_2}
\end{equation}
are the corresponding densities under adjacent datasets
\(D\) and \(D'\).

For any real \(\tau>0\), define
\begin{equation}
\begin{aligned}
E_{M,1}
&:=
\mathbb E_{Z\sim p_M}
\left[
\left(\frac{p_M(Z)}{q_M(Z)}\right)^\tau
\right],
&
E_{M,2}
&:=
\mathbb E_{Z\sim q_M}
\left[
\left(\frac{q_M(Z)}{p_M(Z)}\right)^\tau
\right],
\\
E_{\lambda,1}
&:=
\mathbb E_{z\sim p_\lambda}
\left[
\left(\frac{p_\lambda(z)}{q_\lambda(z)}\right)^\tau
\right],
&
E_{\lambda,2}
&:=
\mathbb E_{z\sim q_\lambda}
\left[
\left(\frac{q_\lambda(z)}{p_\lambda(z)}\right)^\tau
\right].
\end{aligned}
\label{eq:comp_pf_3}
\end{equation}
The corresponding log moments are
\begin{equation}
\alpha_{M,g,t}(\tau)
:=
\log\max\{E_{M,1},E_{M,2}\},
\qquad
\alpha_{\lambda,g,t}(\tau)
:=
\log\max\{E_{\lambda,1},E_{\lambda,2}\}.
\label{eq:comp_pf_4}
\end{equation}

Consider first the perturbed model update
\begin{equation}
\widetilde M_g^{\,t}
=
M_g^{\,t}(D)+\xi_g^{\,t},
\qquad
\xi_g^{\,t}
\sim
\mathcal N\!\left(
0,\sigma_{M,g}^2 I_{KP}
\right).
\label{eq:comp_pf_release}
\end{equation}

The two neighboring output laws are Gaussian with identical covariance
\(\sigma_{M,g}^2 I_{KP}\) and means
\(\mu=\operatorname{vec}(M_g^{\,t}(D))\) and
\(\mu'=\operatorname{vec}(M_g^{\,t}(D'))\). By
Lemma 1,
\begin{equation}
\label{eq:comp_pf_sensitivity}
\begin{aligned}
\|\mu-\mu'\|_2
&=
\|M_g^{\,t}(D)-M_g^{\,t}(D')\|_F
\le
\Delta_{M,g},
\\
\Delta_{M,g}
&:=
\left(\rho+\frac{1}{\eta_g}\right)^{-1}
\frac{2\sqrt{2}L}{N_g}.
\end{aligned}
\end{equation}

By rotational invariance of the Gaussian distribution, the worst
case depends only on the mean separation
\(\Delta_{M,g}\), so the problem reduces to the
one-dimensional pair
\[
p_M(z_M)
=
\frac{1}{\sqrt{2\pi}\,\sigma_{M,g}}
\exp\!\left(
-\frac{z_M^2}{2\sigma_{M,g}^2}
\right),
\]
\[
q_M(z_M)
=
\frac{1}{\sqrt{2\pi}\,\sigma_{M,g}}
\exp\!\left(
-\frac{(z_M-\Delta_{M,g})^2}
{2\sigma_{M,g}^2}
\right).
\]
Hence,
\begin{equation}
\frac{p_M(z_M)}{q_M(z_M)}
=
\exp\!\left(
\frac{\Delta_{M,g}^2}
{2\sigma_{M,g}^2}
-
\frac{\Delta_{M,g}}
{\sigma_{M,g}^2}z_M
\right).
\label{eq:comp_pf_6}
\end{equation}

Substituting \eqref{eq:comp_pf_6} into \eqref{eq:comp_pf_3} gives
\begin{equation}
\mathbb E_{z_M\sim p_M}
\left[
\left(
\frac{p_M(z_M)}{q_M(z_M)}
\right)^\tau
\right]
=
\exp\!\left(
\tau
\frac{\Delta_{M,g}^2}
{2\sigma_{M,g}^2}
\right)
\mathbb E_{z_M\sim
\mathcal N(0,\sigma_{M,g}^2)}
\left[
\exp\!\left(
-\tau
\frac{\Delta_{M,g}}
{\sigma_{M,g}^2}
z_M
\right)
\right].
\label{eq:comp_pf_7}
\end{equation}

Using the moment generating function of a Gaussian random variable, $\mathbb E[e^{a z_M}]
=
\exp\!\left(
\frac{a^2\sigma_{M,g}^2}{2}
\right),$
with $a
=
-\tau
\frac{\Delta_{M,g}}
{\sigma_{M,g}^2},$
we obtain
\begin{equation}
E_{M,1}
=
\exp\!\left(
\frac{\tau(\tau+1)}{2}
\frac{\Delta_{M,g}^2}
{\sigma_{M,g}^2}
\right).
\label{eq:comp_pf_9a}
\end{equation}

Since the Gaussian moment-generating function is finite for every
real argument, the preceding calculation holds for every real
\(\tau>0\). By symmetry, \(E_{M,2}=E_{M,1}\).
Therefore, from the definition of
\(\alpha_{M,g,t}(\tau)\), we have
\begin{equation}
\alpha_{M,g,t}(\tau)
=
\log \max\{E_{M,1},E_{M,2}\}
\le
\frac{\tau(\tau+1)}{2}
\frac{\Delta_{M,g}^2}
{\sigma_{M,g}^2}.
\label{eq:comp_pf_9}
\end{equation}

Similarly, consider the perturbed local DRO variable
\begin{equation}
\widetilde{\lambda}_g^{\,t}
=
\lambda_g^{\,t}(D)
+
\zeta_g^{\,t},
\qquad
\zeta_g^{\,t}
\sim
\mathcal N\!\left(
0,\sigma_{\lambda,g}^2
\right).
\label{eq:comp_pf_lambda_release}
\end{equation}

The two neighboring output laws are Gaussian with identical variance
\(\sigma_{\lambda,g}^2\) and means
\(\lambda_g^{\,t}(D)\) and
\(\lambda_g^{\,t}(D')\). By
Lemma 2,
\begin{equation}
\left|
\lambda_g^{\,t}(D)
-
\lambda_g^{\,t}(D')
\right|
\le
\Delta_{\lambda,g},
\qquad
\Delta_{\lambda,g}
:=
\frac{\kappa\nu_g}{N_g}.
\label{eq:comp_pf_lambda_sensitivity}
\end{equation}

Since the mechanism is scalar-valued, the neighboring Gaussian
distributions can be reduced by translation to
\[
p_\lambda(z_\lambda)
=
\frac{1}{\sqrt{2\pi}\,\sigma_{\lambda,g}}
\exp\!\left(
-\frac{z_\lambda^2}
{2\sigma_{\lambda,g}^2}
\right),
\]
and
\[
q_\lambda(z_\lambda)
=
\frac{1}{\sqrt{2\pi}\,\sigma_{\lambda,g}}
\exp\!\left(
-\frac{(z_\lambda-\Delta_{\lambda,g})^2}
{2\sigma_{\lambda,g}^2}
\right).
\]

Hence,
\begin{equation}
\frac{p_\lambda(z_\lambda)}
{q_\lambda(z_\lambda)}
=
\exp\!\left(
\frac{\Delta_{\lambda,g}^2}
{2\sigma_{\lambda,g}^2}
-
\frac{\Delta_{\lambda,g}}
{\sigma_{\lambda,g}^2}
z_\lambda
\right).
\label{eq:comp_pf_lambda_ratio}
\end{equation}

Substituting \eqref{eq:comp_pf_lambda_ratio} into the definition of
\(E_{\lambda,1}\) gives
\begin{align}
E_{\lambda,1}
&=
\exp\!\left(
\tau
\frac{\Delta_{\lambda,g}^2}
{2\sigma_{\lambda,g}^2}
\right)
\mathbb E_{z_\lambda\sim
\mathcal N(0,\sigma_{\lambda,g}^2)}
\left[
\exp\!\left(
-\tau
\frac{\Delta_{\lambda,g}}
{\sigma_{\lambda,g}^2}
z_\lambda
\right)
\right]
\notag\\
&=
\exp\!\left(
\frac{\tau(\tau+1)}{2}
\frac{\Delta_{\lambda,g}^2}
{\sigma_{\lambda,g}^2}
\right).
\label{eq:comp_pf_lambda_moment}
\end{align}

By symmetry, \(E_{\lambda,2}=E_{\lambda,1}\). Therefore,
\begin{equation}
\alpha_{\lambda,g,t}(\tau)
=
\log\max\{E_{\lambda,1},E_{\lambda,2}\}
\le
\frac{\tau(\tau+1)}{2}
\frac{\Delta_{\lambda,g}^2}
{\sigma_{\lambda,g}^2}.
\label{eq:comp_pf_lambda}
\end{equation}

Since the Gaussian mechanisms are calibrated as
\[
\sigma_{M,g}
=
\Delta_{M,g}
\frac{\sqrt{2\ln(1.25/\delta_M)}}{\epsilon_M},
\qquad
\sigma_{\lambda,g}
=
\Delta_{\lambda,g}
\frac{\sqrt{2\ln(1.25/\delta_\lambda)}}{\epsilon_\lambda},
\]
it follows that
\begin{equation}
\alpha_{M,g,t}(\tau)
\le
\frac{\tau(\tau+1)\epsilon_M^2}
{4\ln(1.25/\delta_M)},
\label{eq:comp_pf_M}
\end{equation}
and
\begin{equation}
\alpha_{\lambda,g,t}(\tau)
\le
\frac{\tau(\tau+1)\epsilon_\lambda^2}
{4\ln(1.25/\delta_\lambda)}.
\label{eq:comp_pf_lambda_cal}
\end{equation}

We set \(\delta_M=\delta_\lambda=\delta\). By the moments accountant composition theorem of \citet{abadi2016deep}, the log moments of the two mechanisms add within each iteration and across the \(T\) iterations. Hence,
\begin{align}
\alpha_g(\tau)
&\le
\sum_{t=1}^T
\left[
\alpha_{M,g,t}(\tau)
+
\alpha_{\lambda,g,t}(\tau)
\right]
\notag\\
&\le
T\,
\frac{\tau(\tau+1)\left(\epsilon_M^2+\epsilon_\lambda^2\right)}
{4\ln(1.25/\delta)}.
\label{eq:comp_pf_13}
\end{align}

Applying the tail bound from \citet{abadi2016deep}, for any
\(\bar\epsilon>0\),
\begin{equation}
\inf_{\tau>0}
\exp\!\bigl(\alpha_g(\tau)-\tau\bar\epsilon\bigr)
\le
\inf_{\tau>0}
\exp\!\left(
T\,\frac{\tau(\tau+1)\left(\epsilon_M^2+\epsilon_\lambda^2\right)}
{4\ln(1.25/\delta)}
-
\tau\bar\epsilon
\right).
\label{eq:comp_pf_14}
\end{equation}

We now choose \(\bar\epsilon\) so that the tail bound in
\eqref{eq:comp_pf_14} is at most \(\bar\delta\). Define
\begin{equation}
A
:=
\frac{T\left(\epsilon_M^2+\epsilon_\lambda^2\right)}{4\ln(1.25/\delta)}.
\label{eq:comp_pf_15}
\end{equation}
Then \eqref{eq:comp_pf_13} can be written as
\begin{equation}
\alpha_g(\tau)
\le
A\tau(\tau+1).
\label{eq:comp_pf_16}
\end{equation}

It is sufficient that, for some \(\tau>0\),
\begin{equation}
\exp\!\left(\alpha_g(\tau)-\tau\bar\epsilon\right)
\le
\bar\delta .
\label{eq:comp_pf_17}
\end{equation}
Using the upper bound in \eqref{eq:comp_pf_16}, it is
sufficient that
\begin{align}
A\tau(\tau+1)-\tau\bar\epsilon
&\le
-\ln(1/\bar\delta),
\label{eq:comp_pf_18}\\
\tau\bar\epsilon
&\ge
A\tau(\tau+1)+\ln(1/\bar\delta),
\label{eq:comp_pf_19}\\
\bar\epsilon
&\ge
A(\tau+1)
+
\frac{\ln(1/\bar\delta)}{\tau}.
\label{eq:comp_pf_20}
\end{align}
Therefore, it is sufficient to choose
\begin{equation}
\bar\epsilon
\ge
\inf_{\tau>0}
\left\{
A(\tau+1)
+
\frac{\ln(1/\bar\delta)}{\tau}
\right\}.
\label{eq:comp_pf_21}
\end{equation}
Define
\begin{equation}
\varphi(\tau)
:=
A(\tau+1)
+
\frac{\ln(1/\bar\delta)}{\tau}.
\label{eq:comp_pf_22}
\end{equation}
The first-order condition gives
\begin{equation}
\varphi'(\tau)
=
A
-
\frac{\ln(1/\bar\delta)}{\tau^2}
=
0,
\qquad
\tau^\star
=
\sqrt{
\frac{\ln(1/\bar\delta)}{A}
}.
\label{eq:comp_pf_23}
\end{equation}
Substituting \(\tau^\star\) into \eqref{eq:comp_pf_22} yields
\begin{equation}
\inf_{\tau>0}\varphi(\tau)
=
\varphi(\tau^\star)
=
A+2\sqrt{A\ln(1/\bar\delta)}.
\label{eq:comp_pf_24}
\end{equation}
Thus, the adaptive composition over \(T\) iterations satisfies
\((\bar\epsilon,\bar\delta)\)-differential privacy for
\begin{equation}
\bar\epsilon
=
A
+
2\sqrt{A\ln(1/\bar\delta)}.
\label{eq:comp_pf_25}
\end{equation}
Substituting the definition of \(A\) from \eqref{eq:comp_pf_15}, we obtain
\begin{equation}
\bar\epsilon
=
\frac{T\left(\epsilon_M^2+\epsilon_\lambda^2\right)}{4\ln(1.25/\delta)}
+
\sqrt{
\frac{T\left(\epsilon_M^2+\epsilon_\lambda^2\right)\ln(1/\bar\delta)}
{\ln(1.25/\delta)}
}.
\label{eq:comp_pf_26}
\end{equation}
\end{proof}

\subsection{Proof of Theorem~\ref{thm:ergodic_convergence_dp_fed_dro}}
\label{app:proof_thm3}
\begin{proof}
Let \(v_g^t := (\widetilde M_g^t,\widetilde\lambda_g^t,M^t,\mu_g^t)\), and let $
\bar v_g^T := (\bar{\widetilde M}_g^T,\bar{\widetilde\lambda}_g^T,\bar M^T,\bar\mu_g^T)$
denote the corresponding average, where
\[
\bar{\widetilde M}_g^T := \frac{1}{T}\sum_{t=1}^{T}\widetilde M_g^t,
\quad
\bar{\widetilde\lambda}_g^T := \frac{1}{T}\sum_{t=1}^{T}\widetilde\lambda_g^t,
\quad
\bar M^T := \frac{1}{T}\sum_{t=1}^T M^t,
\quad
\bar\mu_g^T := \frac{1}{T}\sum_{t=1}^T \mu_g^t.
\]
Define
\[
H(v_g^t):=(\mu_g^t,\,0,\,-\mu_g^t,\,-(\widetilde M_g^t-M^t)).
\]
By direct expansion of the stacked inner product, we have
\begin{equation}
\langle v_g^t-v_g,\;H(v_g^t)\rangle
=
\langle \mu_g^t,\widetilde M_g^t-M_g\rangle_F
-\langle \mu_g^t,M^t-M\rangle_F
+\langle \mu_g-\mu_g^t,\widetilde M_g^t-M^t\rangle_F .
\label{eq:thm6_pf_1}
\end{equation}

To account for the update of $\tau_g$, write
\[
h_g^{t}(\cdot):=h_g(\cdot\,;\tau_g^{t-1})
\]
for the local objective used at round $t$, which is convex in $z_g$ for each fixed $\tau_g^{t-1}$.
Since $H$ is linear and monotone
\begin{equation}
\sum_{g=1}^G \alpha_g \langle \bar v_g^T-v_g,\;H(\bar v_g^T)\rangle
\le
\frac{1}{T}\sum_{t=1}^T \sum_{g=1}^G \alpha_g
\langle v_g^t-v_g,\;H(v_g^t)\rangle .
\label{eq:thm6_pf_3}
\end{equation}

Adding the per-round objective gaps to both sides of \eqref{eq:thm6_pf_3} and using
\eqref{eq:thm6_pf_1}, we obtain
\begin{multline}
\frac{1}{T}\sum_{t=1}^T \sum_{g=1}^G \alpha_g
\Bigl(h_g^{t}(\widetilde z_g^{\,t-1})-h_g^{t}(z_g)\Bigr)
+\rho\sum_{g=1}^G \alpha_g
\langle \bar v_g^T-v_g,\;H(\bar v_g^T)\rangle
\\
\le
\frac{1}{T}\sum_{t=1}^T \sum_{g=1}^G \alpha_g
\Bigl(
h_g^{t}(\widetilde z_g^{\,t-1})-h_g^{t}(z_g)
+\rho \langle v_g^t-v_g,\;H(v_g^t)\rangle
\Bigr)
\\
=
\frac{1}{T}\sum_{t=1}^T \sum_{g=1}^G \alpha_g
\Bigl(
h_g^{t}(\widetilde z_g^{\,t-1})-h_g^{t}(z_g)
+\rho\langle \mu_g^t,\widetilde M_g^t-M_g\rangle_F
\\
\qquad
-\rho\langle \mu_g^t,M^t-M\rangle_F
+\rho\langle \mu_g-\mu_g^t,\widetilde M_g^t-M^t\rangle_F
\Bigr)
\\
=
\frac{1}{T}\sum_{t=1}^T \sum_{g=1}^G \alpha_g
\Bigl(
h_g^{t}(\widetilde z_g^{\,t-1})-h_g^{t}(z_g)
+\rho\langle \mu_g^t,\widetilde M_g^t-M_g\rangle_F
\\
\qquad
+\rho\langle \mu_g-\mu_g^t,\widetilde M_g^t-M^t\rangle_F
\Bigr),
\label{eq:thm6_pf_4}
\end{multline}
where the last equality uses $\sum_{g=1}^G\alpha_g\mu_g^t=0$.

Applying Lemma 3 to each summand on the right-hand side of
\eqref{eq:thm6_pf_4} yields
\begin{multline}
\frac{1}{T}\sum_{t=1}^T \sum_{g=1}^G \alpha_g
\Bigl(h_g^{t}(\widetilde z_g^{\,t-1})-h_g^{t}(z_g)\Bigr)
+\rho\sum_{g=1}^G \alpha_g
\langle \bar v_g^T-v_g,\;H(\bar v_g^T)\rangle
\\
\le
\frac{1}{T}\sum_{t=1}^T
\sum_{g=1}^G \alpha_g
\Bigg[
\frac{\eta_g}{2}
\left\|
[h_g^{t\prime}(\widetilde z_g^{\,t-1})]_M
-\left(\rho+\frac{1}{\eta_g}\right)\xi_g^t
\right\|_F^2
+\frac{\nu_g}{2}
\left|
[h_g^{t\prime}(\widetilde z_g^{\,t-1})]_\lambda
-\frac{1}{\nu_g}\zeta_g^t
\right|^2
\\
\qquad
-\left(\rho+\frac{1}{\eta_g}\right)
\langle \xi_g^t,\,
M_g-\widetilde M_g^{\,t-1}\rangle_F
-\frac{1}{\nu_g}\,\zeta_g^t
\bigl(\lambda_g-\widetilde\lambda_g^{\,t-1}\bigr)
\\
\qquad
+\frac{1}{2\eta_g}
\Bigl(
\|M_g-\widetilde M_g^{\,t-1}\|_F^2
-\|M_g-\widetilde M_g^t\|_F^2
\Bigr)
\\
\qquad
+\frac{1}{2\nu_g}
\Bigl(
(\lambda_g-\widetilde\lambda_g^{\,t-1})^2
-(\lambda_g-\widetilde\lambda_g^t)^2
\Bigr)
\\
\qquad
+\frac{\rho}{2}
\Bigl(
\|M_g-M^{t-1}\|_F^2
-\|M_g-M^t\|_F^2
\Bigr)
+\frac{\rho}{2}
\Bigl(
\|\mu_g-\mu_g^{t-1}\|_F^2
-\|\mu_g-\mu_g^t\|_F^2
\Bigr)
\Bigg].
\label{eq:thm6_pf_5}
\end{multline}

Since the step sizes \(\eta_g\) and \(\nu_g\) are constant in \(t\), the
difference-of-squares terms in \eqref{eq:thm6_pf_5} telescope exactly, and
dropping the nonpositive terms
\(-\|M_g-\widetilde M_g^T\|_F^2\), \(-(\lambda_g-\widetilde\lambda_g^T)^2\),
\(-\|M_g-M^T\|_F^2\), and \(-\|\mu_g-\mu_g^T\|_F^2\) gives
\begin{multline}
\frac{1}{T}\sum_{t=1}^T \sum_{g=1}^G \alpha_g
\Bigl(h_g^{t}(\widetilde z_g^{\,t-1})-h_g^{t}(z_g)\Bigr)
+\rho\sum_{g=1}^G \alpha_g
\langle \bar v_g^T-v_g,\;H(\bar v_g^T)\rangle
\\
\le
\sum_{g=1}^G \alpha_g
\Biggl[
\frac{1}{T}\sum_{t=1}^T \frac{\eta_g}{2}
\left\|
[h_g^{t\prime}(\widetilde z_g^{\,t-1})]_M
-\left(\rho+\frac{1}{\eta_g}\right)\xi_g^t
\right\|_F^2
+\frac{1}{T}\sum_{t=1}^T \frac{\nu_g}{2}
\left|
[h_g^{t\prime}(\widetilde z_g^{\,t-1})]_\lambda
-\frac{1}{\nu_g}\zeta_g^t
\right|^2
\\
\qquad
-\frac{1}{T}\sum_{t=1}^T
\left(\rho+\frac{1}{\eta_g}\right)
\langle \xi_g^t,\;M_g-\widetilde M_g^{\,t-1}\rangle_F
-\frac{1}{T}\sum_{t=1}^T
\frac{1}{\nu_g}\,\zeta_g^t\bigl(\lambda_g-\widetilde\lambda_g^{\,t-1}\bigr)
\\
\qquad
+\frac{1}{2T\eta_g}\|M_g-\widetilde M_g^0\|_F^2
+\frac{1}{2T\nu_g}(\lambda_g-\widetilde\lambda_g^0)^2
+\frac{\rho}{2T}\|M_g-M^0\|_F^2
+\frac{\rho}{2T}\|\mu_g-\mu_g^0\|_F^2
\Biggr].
\label{eq:thm6_pf_6}
\end{multline}

Next, choose the primal comparison point to be an optimal solution,
\[
z_g=z_g^\star=(M_g^\star,\lambda_g^\star),
\qquad
M_g^\star=M^\star \text{ for all } g,
\qquad
M=M^\star .
\]
With this choice, \eqref{eq:thm6_pf_6} becomes
\begin{multline}
\frac{1}{T}\sum_{t=1}^T \sum_{g=1}^G \alpha_g
\Bigl(h_g^{t}(\widetilde z_g^{\,t-1})-h_g^{t}(z_g^\star)\Bigr)
+\rho\sum_{g=1}^G \alpha_g
\langle \bar v_g^T-v_g,\;H(\bar v_g^T)\rangle
\\
\le
\sum_{g=1}^G \alpha_g
\Biggl[
\frac{1}{T}\sum_{t=1}^T \frac{\eta_g}{2}
\left\|
[h_g^{t\prime}(\widetilde z_g^{\,t-1})]_M
-\left(\rho+\frac{1}{\eta_g}\right)\xi_g^t
\right\|_F^2
+\frac{1}{T}\sum_{t=1}^T \frac{\nu_g}{2}
\left|
[h_g^{t\prime}(\widetilde z_g^{\,t-1})]_\lambda
-\frac{1}{\nu_g}\zeta_g^t
\right|^2
\\
\qquad
-\frac{1}{T}\sum_{t=1}^T
\left(\rho+\frac{1}{\eta_g}\right)
\langle \xi_g^t,\;M^\star-\widetilde M_g^{\,t-1}\rangle_F
-\frac{1}{T}\sum_{t=1}^T
\frac{1}{\nu_g}\,\zeta_g^t\bigl(\lambda_g^\star-\widetilde\lambda_g^{\,t-1}\bigr)
\\
\qquad
+\frac{1}{2T\eta_g}\|M^\star-\widetilde M_g^0\|_F^2
+\frac{1}{2T\nu_g}(\lambda_g^\star-\widetilde\lambda_g^0)^2
+\frac{\rho}{2T}\|M^\star-M^0\|_F^2
+\frac{\rho}{2T}\|\mu_g-\mu_g^0\|_F^2
\Biggr],
\label{eq:thm6_pf_7}
\end{multline}
where \(v_g=(M^\star,\lambda_g^\star,M^\star,\mu_g)\).

Take the initialization
\[
M^0=0,
\qquad
\widetilde M_g^0=0,
\qquad
\widetilde\lambda_g^0=0,
\qquad
\mu_g^0=0
\quad \text{for all } g.
\]

Then \eqref{eq:thm6_pf_7} simplifies to
\begin{multline}
\frac{1}{T}\sum_{t=1}^T \sum_{g=1}^G \alpha_g
\Bigl(h_g^{t}(\widetilde z_g^{\,t-1})-h_g^{t}(z_g^\star)\Bigr)
+\rho\sum_{g=1}^G \alpha_g
\langle \bar v_g^T-v_g,\;H(\bar v_g^T)\rangle
\\
\le
\frac{1}{T}\sum_{t=1}^T \sum_{g=1}^G \alpha_g \frac{\eta_g}{2}
\left\|
[h_g'(\widetilde z_g^{\,t-1})]_M
-\left(\rho+\frac{1}{\eta_g}\right)\xi_g^t
\right\|_F^2
+\frac{1}{T}\sum_{t=1}^T \sum_{g=1}^G \alpha_g \frac{\nu_g}{2}
\left|
[h_g'(\widetilde z_g^{\,t-1})]_\lambda
-\frac{1}{\nu_g}\zeta_g^t
\right|^2
\\
\qquad
-\frac{1}{T}\sum_{t=1}^T \sum_{g=1}^G \alpha_g
\left(\rho+\frac{1}{\eta_g}\right)
\langle \xi_g^t,\;M^\star-\widetilde M_g^{\,t-1}\rangle_F
-\frac{1}{T}\sum_{t=1}^T \sum_{g=1}^G \alpha_g
\frac{1}{\nu_g}\,\zeta_g^t\bigl(\lambda_g^\star-\widetilde\lambda_g^{\,t-1}\bigr)
\\
\qquad
+\frac{\|M^\star\|_F^2}{T}\sum_{g=1}^G \alpha_g \frac{1}{2\eta_g}
+\frac{1}{T}\sum_{g=1}^G \alpha_g \frac{(\lambda_g^\star)^2}{2\nu_g}
+\frac{\rho}{2T}\|M^\star\|_F^2
+\frac{\rho}{2T}\sum_{g=1}^G \alpha_g \|\mu_g\|_F^2 .
\label{eq:thm6_pf_8}
\end{multline}

We now simplify the term \(\rho \langle \bar v_g^T-v_g,\;H(\bar v_g^T)\rangle\).
For a comparison point \(v_g=(M,\lambda_g,M,\mu_g)\), we have
\begin{multline}
\langle \bar v_g^T-v_g,\;H(\bar v_g^T)\rangle
=
\langle \bar\mu_g^T,\bar{\widetilde M}_g^T-M\rangle_F
-\langle \bar\mu_g^T,\bar M^T-M\rangle_F
-\langle \bar\mu_g^T-\mu_g,\bar{\widetilde M}_g^T-\bar M^T\rangle_F
\\
=
\langle \mu_g,\bar{\widetilde M}_g^T-\bar M^T\rangle_F.
\label{eq:thm6_pf_9}
\end{multline}
Therefore, the left-hand side of \eqref{eq:thm6_pf_8} equals
\begin{equation}
\frac{1}{T}\sum_{t=1}^T \sum_{g=1}^G \alpha_g
\Bigl(h_g^{t}(\widetilde z_g^{\,t-1})-h_g^{t}(z_g^\star)\Bigr)
+\rho\sum_{g=1}^G \alpha_g
\langle \mu_g,\bar{\widetilde M}_g^T-\bar M^T\rangle_F .
\label{eq:thm6_pf_10}
\end{equation}

Now restrict the comparison dual variable to the ball
\[
\{\mu_g:\|\mu_g\|_F\le \beta/\rho\}.
\]
Let
\[
d_g^T:=\bar{\widetilde M}_g^T-\bar M^T.
\]
By Cauchy--Schwarz,
\begin{equation}
\max_{\|\mu_g\|_F\le \beta/\rho}
\rho\langle \mu_g,d_g^T\rangle_F
\le
\rho \max_{\|\mu_g\|_F\le \beta/\rho}
\|\mu_g\|_F\,\|d_g^T\|_F
=
\beta \|d_g^T\|_F.
\label{eq:thm6_pf_11}
\end{equation}

The bound is tight by choosing
\[
\hat\mu_g = \frac{\beta}{\rho}\frac{d_g^T}{\|d_g^T\|_F}
\qquad \text{when } d_g^T\neq 0,
\]
hence
\begin{equation}
\max_{\|\mu_g\|_F\le \beta/\rho}
\rho\langle \mu_g,\bar{\widetilde M}_g^T-\bar M^T\rangle_F
=
\beta \|\bar{\widetilde M}_g^T-\bar M^T\|_F.
\label{eq:thm6_pf_12}
\end{equation}

Consequently, using \eqref{eq:thm6_pf_10} and \eqref{eq:thm6_pf_12},
\begin{multline}
\max_{\substack{\|\mu_g\|_F\le \beta/\rho\\ g=1,\dots,G}}
\Biggl[
\frac{1}{T}\sum_{t=1}^T \sum_{g=1}^G \alpha_g
\Bigl(h_g^{t}(\widetilde z_g^{\,t-1})-h_g^{t}(z_g^\star)\Bigr)
+\rho\sum_{g=1}^G \alpha_g
\langle \bar v_g^T-v_g,\;H(\bar v_g^T)\rangle
\Biggr]
\\
=
\frac{1}{T}\sum_{t=1}^T \sum_{g=1}^G \alpha_g
\Bigl(h_g^{t}(\widetilde z_g^{\,t-1})-h_g^{t}(z_g^\star)\Bigr)
+\beta\sum_{g=1}^G \alpha_g
\|\bar{\widetilde M}_g^T-\bar M^T\|_F .
\label{eq:thm6_pf_13}
\end{multline}
We now maximize the last term on the right-hand side of \eqref{eq:thm6_pf_8}
over the same ball. Since \(\|\mu_g\|_F\le \beta/\rho\) and \(\sum_{g=1}^G\alpha_g=1\),
\begin{equation}
\max_{\substack{\|\mu_g\|_F\le \beta/\rho\\ g=1,\dots,G}}
\frac{\rho}{2T}\sum_{g=1}^G \alpha_g \|\mu_g\|_F^2
\le
\frac{\rho}{2T}\sum_{g=1}^G \alpha_g \frac{\beta^2}{\rho^2}
=
\frac{\beta^2}{2\rho T}.
\label{eq:thm6_pf_14}
\end{equation}

Since \eqref{eq:thm6_pf_8} holds for every \(\mu_g\), we may take the maximum
over \(\|\mu_g\|_F\le\beta/\rho\), \(g=1,\dots,G\), on both sides. Using
\eqref{eq:thm6_pf_13} for the left-hand side and \eqref{eq:thm6_pf_14} for the
only \(\mu_g\)-dependent term on the right-hand side, we obtain
\begin{multline}
\frac{1}{T}\sum_{t=1}^T \sum_{g=1}^G \alpha_g
\Bigl(h_g^{t}(\widetilde z_g^{\,t-1})-h_g^{t}(z_g^\star)\Bigr)
+\beta\sum_{g=1}^G \alpha_g
\|\bar{\widetilde M}_g^T-\bar M^T\|_F
\\
\le
\frac{1}{T}\sum_{t=1}^T \sum_{g=1}^G \alpha_g \frac{\eta_g}{2}
\left\|
[h_g^{t\prime}(\widetilde z_g^{\,t-1})]_M
-\left(\rho+\frac{1}{\eta_g}\right)\xi_g^t
\right\|_F^2
+\frac{1}{T}\sum_{t=1}^T \sum_{g=1}^G \alpha_g \frac{\nu_g}{2}
\left|
[h_g^{t\prime}(\widetilde z_g^{\,t-1})]_\lambda
-\frac{1}{\nu_g}\zeta_g^t
\right|^2
\\
\qquad
-\frac{1}{T}\sum_{t=1}^T \sum_{g=1}^G \alpha_g
\left(\rho+\frac{1}{\eta_g}\right)
\langle \xi_g^t,\;M^\star-\widetilde M_g^{\,t-1}\rangle_F
-\frac{1}{T}\sum_{t=1}^T \sum_{g=1}^G \alpha_g
\frac{1}{\nu_g}\,\zeta_g^t\bigl(\lambda_g^\star-\widetilde\lambda_g^{\,t-1}\bigr)
\\
\qquad
+\frac{\|M^\star\|_F^2}{T}\sum_{g=1}^G \alpha_g \frac{1}{2\eta_g}
+\frac{1}{2T}\sum_{g=1}^G \alpha_g
\frac{(\lambda_g^\star)^2}{\nu_g}
+\frac{\rho}{2T}\|M^\star\|_F^2
+\frac{\beta^2}{2\rho T}.
\label{eq:thm6_pf_15}
\end{multline}

We now take expectations on both sides of \eqref{eq:thm6_pf_15}. By construction
of Algorithm~2, for each \(g\) and \(t\), the perturbations \(\xi_g^t\) and
\(\zeta_g^t\) are drawn independently of each other and of all past iterates, and satisfy
\[
\mathbb{E}[\xi_g^t]=0,\qquad
\mathbb{E}\|\xi_g^t\|_F^2=KP\,\sigma_{M,g}^2,\qquad
\mathbb{E}[\zeta_g^t]=0,\qquad
\mathbb{E}(\zeta_g^t)^2=\sigma_{\lambda,g}^2 .
\]
Since \(M^\star-\widetilde M_g^{\,t-1}\) and \(\lambda_g^\star-\widetilde\lambda_g^{\,t-1}\)
depend only on past iterates, while \(\xi_g^t\) and \(\zeta_g^t\) are independent of the
past and have zero mean, it follows that
\begin{equation}
\mathbb{E}\langle \xi_g^t,\; M^\star-\widetilde M_g^{\,t-1}\rangle_F=0,
\qquad
\mathbb{E}\bigl[\zeta_g^t\,(\lambda_g^\star-\widetilde\lambda_g^{\,t-1})\bigr]=0.
\label{eq:thm6_pf_16}
\end{equation}

Recall that, at round \(t\),
\[
h_g^{t}(z_g)=\lambda_g r_g+\frac{1}{N_g}\sum_{n=1}^{N_g}L_n(M_g,\lambda_g)+R_g(M_g,\lambda_g;\tau_g^{t-1}),
\qquad z_g=(M_g,\lambda_g),
\]
where
\[
L_n(M_g,\lambda_g)
:=
\max_{y\in\mathcal Y}
\Bigl(
\ell_{\mathrm{CS}}(M_g;x_n,y)-\kappa\lambda_g\,\mathbf{1}_{\{y\neq y_n\}}
\Bigr),
\]
and
\[
R_g(M_g,\lambda_g;\tau_g^{t-1})
:=
\frac{1}{2\gamma}
\Bigl(
\max\{0,\tau_g^{t-1}+\gamma(\phi(M_g)-\lambda_g)\}^2-(\tau_g^{t-1})^2
\Bigr).
\]

Next, define
\[
a_g^{t-1}:=[h_g^{t\prime}(\widetilde z_g^{\,t-1})]_M,
\qquad
b_g^{t-1}:=[h_g^{t\prime}(\widetilde z_g^{\,t-1})]_\lambda,
\qquad
c_g:=\rho+\frac{1}{\eta_g}.
\]
By direct evaluation of the selected subgradients of the robust loss term,
\[
\|\partial_{M_g}L_n(M_g,\lambda_g)\|_F \le \sqrt{2}\,L,
\qquad
\partial_{\lambda_g}L_n(M_g,\lambda_g)\in[-\kappa,0].
\]
Since \([h_g^{t\prime}(\widetilde z_g^{\,t-1})]_\lambda
= r_g+\frac{1}{N_g}\sum_{n=1}^{N_g}\partial_{\lambda_g}L_n+\partial_{\lambda_g}R_g\),
the first two terms together lie in \([r_g-\kappa,\,r_g]\), so that
\[
\Bigl|\,r_g+\frac{1}{N_g}\sum_{n=1}^{N_g}\partial_{\lambda_g}L_n\Bigr|
\le S:=\max\{r_g,\ \kappa-r_g\}.
\]

Moreover, since
\(\partial_{\lambda_g}R_g=-\max\{0,\tau_g^{t-1}+\gamma(\phi(M_g)-\lambda_g)\}\) and
\(\partial_{M_g}R_g=\max\{0,\tau_g^{t-1}+\gamma(\phi(M_g)-\lambda_g)\}\,\partial\phi(M_g)\),
using \(\phi(M_g)\le\sqrt{2}\,\|M_g\|_F\) and \(\|\partial\phi(M_g)\|_F\le C_\phi\) gives
\[
\|\partial_{M_g}R_g(M_g,\lambda_g;\tau_g^{t-1})\|_F
\le
\bigl(|\tau_g^{t-1}|+\gamma\sqrt{2}\,\|M_g\|_F+\gamma[-\lambda_g]_+\bigr)C_\phi,
\]
and
\[
|\partial_{\lambda_g}R_g(M_g,\lambda_g;\tau_g^{t-1})|
\le
|\tau_g^{t-1}|+\gamma\sqrt{2}\,\|M_g\|_F+\gamma[-\lambda_g]_+ .
\]

Further, Lemma~\ref{lem:bounded_traj} shows that the trajectory generated by Algorithm~2 remains finite
over the training horizon, so that the penalty subgradients evaluated along it are
bounded. Accordingly, we assume that these bounds are uniform: there exist constants
\(C,V>0\) such that
\[
\bigl\|\partial_{M_g}R_g(\widetilde M_g^{\,t-1},\widetilde\lambda_g^{\,t-1};\tau_g^{t-1})\bigr\|_F
\le \sqrt{2}\,C,
\qquad
\bigl|\partial_{\lambda_g}R_g(\widetilde M_g^{\,t-1},\widetilde\lambda_g^{\,t-1};\tau_g^{t-1})\bigr|
\le V,
\]
for all \(g=1,\dots,G\) and \(t=1,\dots,T\).

Therefore,
\begin{equation}
\|a_g^{t-1}\|_F \le \sqrt{2}\,(L+C),
\qquad
|b_g^{t-1}| \le S+V.
\label{eq:thm6_pf_17}
\end{equation}

Expanding the square and using that \(\xi_g^t\) is zero-mean and independent of
\(a_g^{t-1}\), which depends only on past iterates, gives
\begin{equation}
\mathbb E\|a_g^{t-1}-c_g\xi_g^t\|_F^2
=
\mathbb E\|a_g^{t-1}\|_F^2
-2c_g\,\mathbb E\langle a_g^{t-1},\xi_g^t\rangle_F
+c_g^2\,\mathbb E\|\xi_g^t\|_F^2
=
\mathbb E\|a_g^{t-1}\|_F^2 + c_g^2 KP\,\sigma_{M,g}^2 ,
\end{equation}
and likewise, since \(\zeta_g^t\) is zero-mean and independent of \(b_g^{t-1}\),
\begin{equation}
\mathbb E\Bigl|b_g^{t-1}-\frac{1}{\nu_g}\zeta_g^t\Bigr|^2
=
\mathbb E|b_g^{t-1}|^2+\frac{\sigma_{\lambda,g}^2}{\nu_g^2} .
\end{equation}
Using \eqref{eq:thm6_pf_17}, we obtain
\begin{equation}
\mathbb E\|a_g^{t-1}-c_g\xi_g^t\|_F^2
\le
2(L+C)^2 + c_g^2 KP\,\sigma_{M,g}^2,
\qquad
\mathbb E\Bigl|b_g^{t-1}-\frac{1}{\nu_g}\zeta_g^t\Bigr|^2
\le
(S+V)^2+\frac{\sigma_{\lambda,g}^2}{\nu_g^2} .
\label{eq:thm6_pf_18}
\end{equation}

Taking expectations in \eqref{eq:thm6_pf_15} and using
\eqref{eq:thm6_pf_16}, \eqref{eq:thm6_pf_17}, and \eqref{eq:thm6_pf_18}, we get
\begin{multline}
\mathbb E\Biggl[
\frac{1}{T}\sum_{t=1}^T \sum_{g=1}^G \alpha_g
\Bigl(h_g^{t}(\widetilde z_g^{\,t-1})-h_g^{t}(z_g^\star)\Bigr)
+\beta\sum_{g=1}^G \alpha_g
\|\bar{\widetilde M}_g^T-\bar M^T\|_F
\Biggr]
\\
\le
\sum_{g=1}^G \alpha_g
\frac{\eta_g}{2}
\Biggl(
2(L+C)^2+\left(\rho+\frac{1}{\eta_g}\right)^2 KP\,\sigma_{M,g}^2
\Biggr)
+
\sum_{g=1}^G \alpha_g
\frac{\nu_g}{2}
\Biggl(
(S+V)^2+\frac{\sigma_{\lambda,g}^2}{\nu_g^2}
\Biggr)
\\
+\frac{\|M^\star\|_F^2}{T}\sum_{g=1}^G \alpha_g \frac{1}{2\eta_g}
+\frac{1}{2T}\sum_{g=1}^G \alpha_g
\frac{(\lambda_g^\star)^2}{\nu_g}
+\frac{\rho}{2T}\|M^\star\|_F^2
+\frac{\beta^2}{2\rho T}.
\label{eq:thm6_pf_19}
\end{multline}

Now substitute the DP-calibrated Gaussian variances from Theorem~1,
\[
\sigma_{M,g}^2
=
\frac{8L^2}{N_g^2\left(\rho+\frac{1}{\eta_g}\right)^2}
\frac{2\ln(1.25/\delta_M)}{\epsilon_M^2},
\qquad
\sigma_{\lambda,g}^2
=
\frac{\kappa^2\nu_g^2}{N_g^2}
\frac{2\ln(1.25/\delta_\lambda)}{\epsilon_\lambda^2}.
\]
Then
\begin{equation}
\left(\rho+\frac{1}{\eta_g}\right)^2 KP\,\sigma_{M,g}^2
=
KP\cdot \frac{8L^2}{N_g^2}\cdot \frac{2\ln(1.25/\delta_M)}{\epsilon_M^2},
\qquad
\frac{\sigma_{\lambda,g}^2}{\nu_g^2}
=
\frac{\kappa^2}{N_g^2}\cdot\frac{2\ln(1.25/\delta_\lambda)}{\epsilon_\lambda^2}.
\label{eq:thm6_pf_20}
\end{equation}

Substituting \eqref{eq:thm6_pf_20} into \eqref{eq:thm6_pf_19} yields
\begin{multline}
\mathbb E\Biggl[
\frac{1}{T}\sum_{t=1}^T \sum_{g=1}^G \alpha_g
\Bigl(h_g^{t}(\widetilde z_g^{\,t-1})-h_g^{t}(z_g^\star)\Bigr)
+\beta\sum_{g=1}^G \alpha_g
\|\bar{\widetilde M}_g^T-\bar M^T\|_F
\Biggr]
\\
\le
\sum_{g=1}^G \alpha_g
\frac{\eta_g}{2}
\left(
2(L+C)^2
+
KP\cdot \frac{8L^2}{N_g^2}\cdot \frac{2\ln(1.25/\delta_M)}{\epsilon_M^2}
\right)
+
\sum_{g=1}^G \alpha_g
\frac{\nu_g}{2}
\left(
(S+V)^2
+
\frac{\kappa^2}{N_g^2}\cdot\frac{2\ln(1.25/\delta_\lambda)}{\epsilon_\lambda^2}
\right)
\\
+\frac{\|M^\star\|_F^2}{T}\sum_{g=1}^G \alpha_g \frac{1}{2\eta_g}
+\frac{1}{2T}\sum_{g=1}^G \alpha_g
\frac{(\lambda_g^\star)^2}{\nu_g}
+\frac{\rho}{2T}\|M^\star\|_F^2
+\frac{\beta^2}{2\rho T}.
\label{eq:thm6_pf_21}
\end{multline}

For constants \(c_M,c_{\lambda,g}>0\) satisfying \(\|M^\star\|_F\le c_M\) and
\(|\lambda_g^\star|\le c_{\lambda,g}\) for all \(g\), define the \(M_g\)-block and
\(\lambda_g\)-block step sizes as
\begin{equation}
\begin{aligned}
\eta_g
&:=
\frac{c_M}
{
\sqrt{
T\left(
2(L+C)^2
+
KP\cdot \frac{8L^2}{N_g^2}\cdot \frac{2\ln(1.25/\delta_M)}{\epsilon_M^2}
\right)
}
},
\\
\nu_g
&:=
\frac{c_{\lambda,g}}
{
\sqrt{
T\left(
(S+V)^2
+
\frac{\kappa^2}{N_g^2}\cdot\frac{2\ln(1.25/\delta_\lambda)}{\epsilon_\lambda^2}
\right)
}
}.
\end{aligned}
\label{eq:thm6_pf_22}
\end{equation}
For convenience, define
\begin{equation}
D_{M,g}
:=
\sqrt{
2(L+C)^2
+
KP\cdot \frac{8L^2}{N_g^2}\cdot \frac{2\ln(1.25/\delta_M)}{\epsilon_M^2}
},
\qquad
D_{\lambda,g}
:=
\sqrt{
(S+V)^2
+
\frac{\kappa^2}{N_g^2}\cdot\frac{2\ln(1.25/\delta_\lambda)}{\epsilon_\lambda^2}
}.
\label{eq:thm6_pf_23}
\end{equation}

Substituting \eqref{eq:thm6_pf_22} into \eqref{eq:thm6_pf_21} and using
\(\|M^\star\|_F^2\le c_M^2\) and \((\lambda_g^\star)^2\le c_{\lambda,g}^2\), we obtain
\begin{multline}
\mathbb E\Biggl[
\frac{1}{T}\sum_{t=1}^T \sum_{g=1}^G \alpha_g
\Bigl(h_g^{t}(\widetilde z_g^{\,t-1})-h_g^{t}(z_g^\star)\Bigr)
+\beta\sum_{g=1}^G \alpha_g
\|\bar{\widetilde M}_g^T-\bar M^T\|_F
\Biggr]
\\
\le
\frac{1}{\sqrt T}\sum_{g=1}^G \alpha_g\, c_M D_{M,g}
+
\frac{1}{\sqrt T}\sum_{g=1}^G \alpha_g\, c_{\lambda,g} D_{\lambda,g}
+
\frac{\rho}{2T}c_M^2
+\frac{\beta^2}{2\rho T}.
\label{eq:thm6_pf_24}
\end{multline}

Defining
\begin{equation}
M_{1,T}
:=
\sum_{g=1}^G \alpha_g\, c_M D_{M,g}
+
\sum_{g=1}^G \alpha_g\, c_{\lambda,g} D_{\lambda,g},
\qquad
M_2
:=
\frac{\rho}{2}c_M^2+\frac{\beta^2}{2\rho},
\label{eq:thm6_pf_26}
\end{equation}
we can rewrite (122) as
\begin{multline}
\mathbb E\Biggl[
\frac{1}{T}\sum_{t=1}^T \sum_{g=1}^G \alpha_g
\Bigl(h_g^{t}(\widetilde z_g^{\,t-1})-h_g^{t}(z_g^\star)\Bigr)
+\beta\sum_{g=1}^G \alpha_g
\|\bar{\widetilde M}_g^T-\bar M^T\|_F
\Biggr]
\\
\le
\frac{1}{\sqrt T}
\Biggl[
\sum_{g=1}^G \alpha_g\, c_M D_{M,g}
+
\sum_{g=1}^G \alpha_g\, c_{\lambda,g} D_{\lambda,g}
\Biggr]
+
\frac{1}{T}
\Bigl(
\frac{\rho}{2}c_M^2+\frac{\beta^2}{2\rho}
\Bigr)
=
\frac{M_{1,T}}{\sqrt T}+\frac{M_2}{T}.
\label{eq:thm6_pf_27}
\end{multline}
This proves the claimed convergence bound.
\end{proof}

\section{Further Experimental Details and Supplementary Results}

In this section we provide all the details associated with the experiments presented in this paper as well as supplementary results to the experiments. All the code associated with the experiments will be made publicly available upon acceptance. 

\subsection{Software and Hardware Details}
All experiments were conducted on a high-performance computing cluster using Intel Xeon Gold 6226 processors, with two CPU cores allocated per job-array task. Table~\ref{tab:software} presents all the details on the software used in the paper.

\begin{table}[H]
\centering
\caption{Software environment used in the experiments.}
\label{tab:software}
\begin{tabular}{lll}
\toprule
Software & Version & License \\
\midrule
Gurobi       & 11.0.3 & Academic license \\
MATLAB & R2025b & Academic license \\
Python       & 3.8.3  & Open source license \\
Scikit-Learn & 0.23.1 & Open source license \\
NumPy        & 1.18.5 & Open source license \\
SciPy        & 1.5.0  & Open source license \\
UCIMLRepo    & 0.0.7  & Open source license \\
Matplotlib   & 3.2.2  & Open source license \\
Pandas       & 1.0.5  & Open source license \\
\bottomrule
\end{tabular}
\end{table}

\subsection{Datasets Utilized}

\subsubsection{Simulated Industrial Fault Diagnosis Dataset}
The simulated pump dataset was generated using the pump fault diagnosis model in MATLAB/Simulink \citep{mathworks_pump_fault_example}. We considered four operating conditions: healthy operation, cylinder leak, blocked inlet, and increased bearing friction. Only single fault conditions were considered, so that at most one fault was active in each simulation run. The simulator parameterizes each fault on a grid of 10 severity levels, the first of which corresponds to nominal operation. For each fault type, we sampled from a subset of the upper severity levels (levels 5 and 6 for the cylinder leak, levels 3 and 4 for the blocked inlet, and levels 7 to 9 for bearing friction), so that the three faults differ in difficulty while remaining distinguishable from healthy operation. The dataset contains 2{,}000 simulation runs, with 750 healthy samples and 418, 418, and 414 samples for the cylinder leak, blocked inlet, and bearing friction classes, respectively. The simulated time series data were then processed to extract 13 tabular features, which were subsequently used as inputs for the classification models.

\subsubsection{UCI Datasets} 

In addition to the simulated pump dataset, we evaluated the proposed method on five public benchmark datasets from the UCI Machine Learning Repository. Table~\ref{tab:uci_datasets} summarizes the benchmark datasets considered in this study.

\begin{table}[H]
\centering
\caption{Public benchmark datasets used in the experiments.}
\label{tab:uci_datasets}
\begin{tabular}{lcc}
\toprule
Dataset & Number of Classes & License \\
\midrule
Dermatology~\citep{uci_dermatology} & 6 & CC BY 4.0 \\
Iris~\citep{uci_iris} & 3 & CC BY 4.0 \\
Wine~\citep{uci_wine} & 3 & CC BY 4.0 \\
Palmer Penguins~\citep{palmer_penguins_data} & 3 & CC0 \\
Seeds~\citep{uci_seeds} & 3 & CC BY 4.0 \\
\bottomrule
\end{tabular}
\end{table}

\subsection{Additional Convergence Figures}

This section reports additional convergence figures that decompose the combined metric used in the main text into its two components, the running average of the augmented objective at the privatized iterates and the consensus violation, for all considered values of \(\epsilon\) and \(\delta\). The consensus violation rises during the first few rounds, as the initial noisy updates move the clients apart, and then decreases steadily toward zero for all privacy levels, with a smaller peak and faster decay as \(\epsilon\) increases. The augmented objective, in turn, reveals two regimes. At the smallest budget, \(\epsilon=0.05\), the injected noise dominates the updates, and the objective keeps increasing before settling at a higher level, which is highest for the tightest values of \(\delta\). As \(\epsilon\) increases and the perturbations become smaller, this growth vanishes. At \(\epsilon=0.1\), the objective remains nearly flat after the first rounds, and for \(\epsilon\ge0.15\) it decreases steadily below its initial peak, with the curves for different \(\delta\) nearly coinciding, which is consistent with convergence. In both components, the effect of \(\delta\) is most visible at the smallest budget and fades as \(\epsilon\) increases.

\begin{figure}[H]
    \centering

    % ============================================================
    % Row 1: Objective gap plots
    % ============================================================
    \begin{minipage}[t]{0.19\textwidth}
        \centering
        \includegraphics[width=\linewidth]{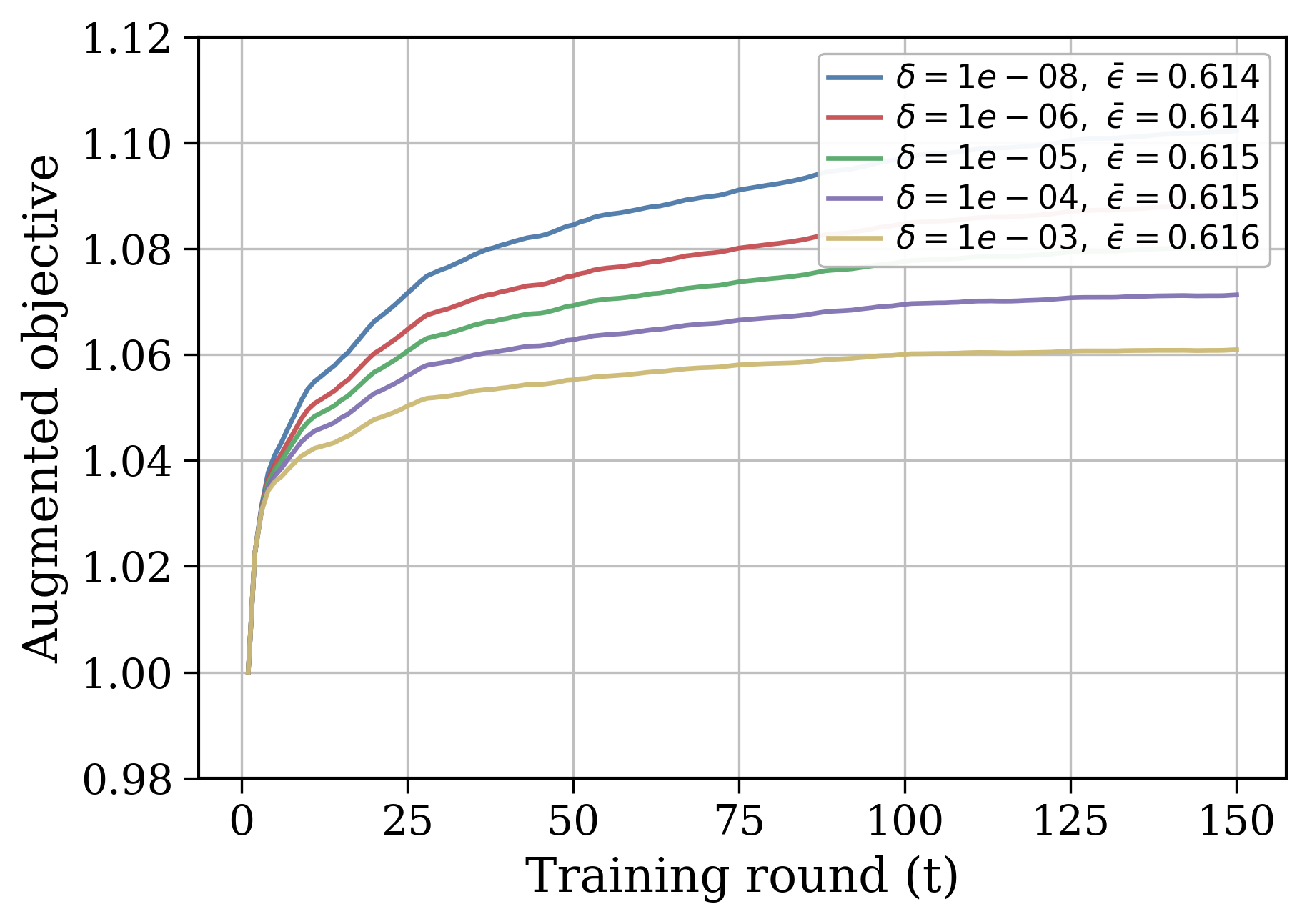}
        
        \vspace{2pt}
        \footnotesize $\epsilon = 0.05$
    \end{minipage}
    \hfill
    \begin{minipage}[t]{0.19\textwidth}
        \centering
        \includegraphics[width=\linewidth]{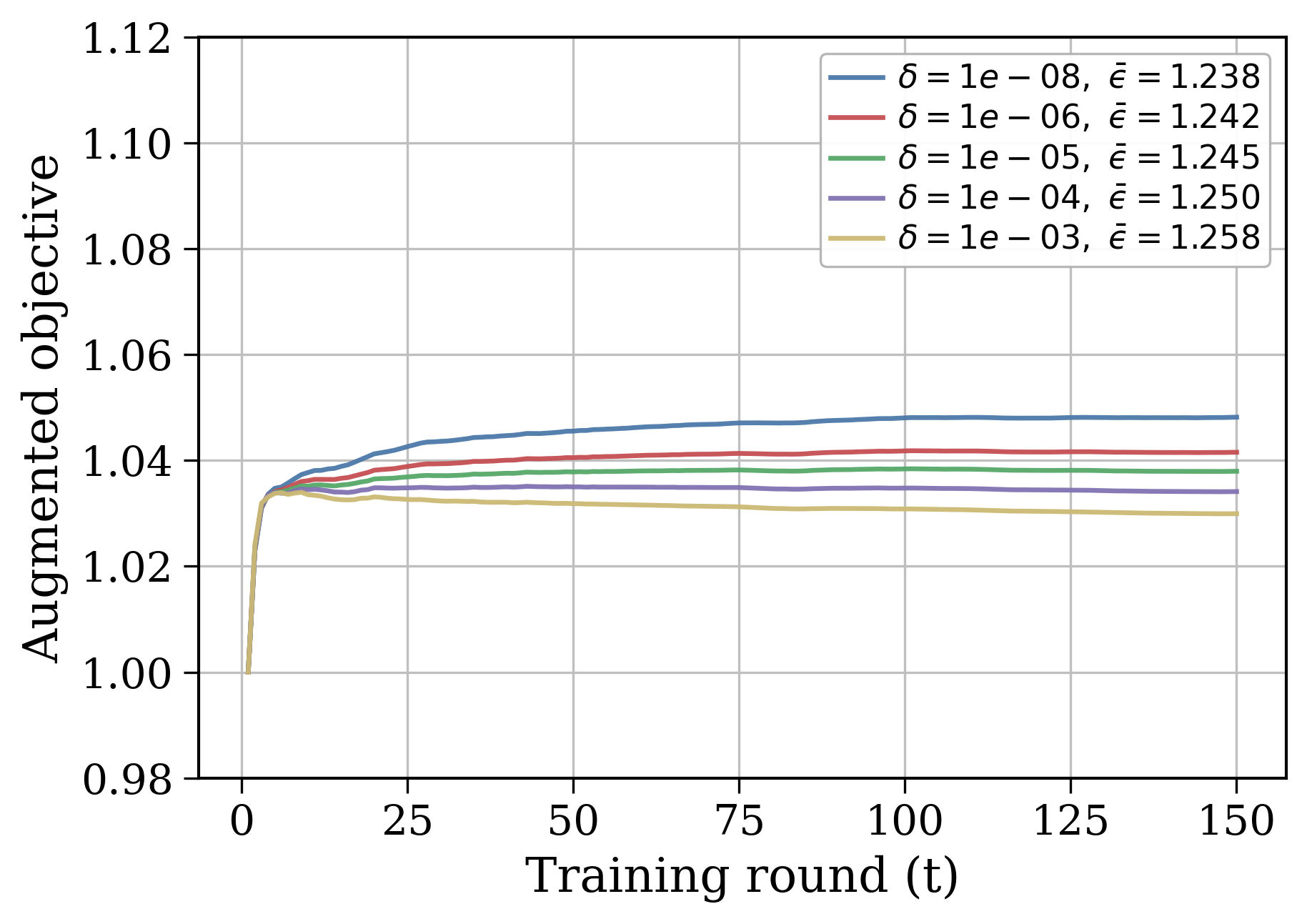}
        
        \vspace{2pt}
        \footnotesize $\epsilon = 0.1$
    \end{minipage}
    \hfill
    \begin{minipage}[t]{0.19\textwidth}
        \centering
        \includegraphics[width=\linewidth]{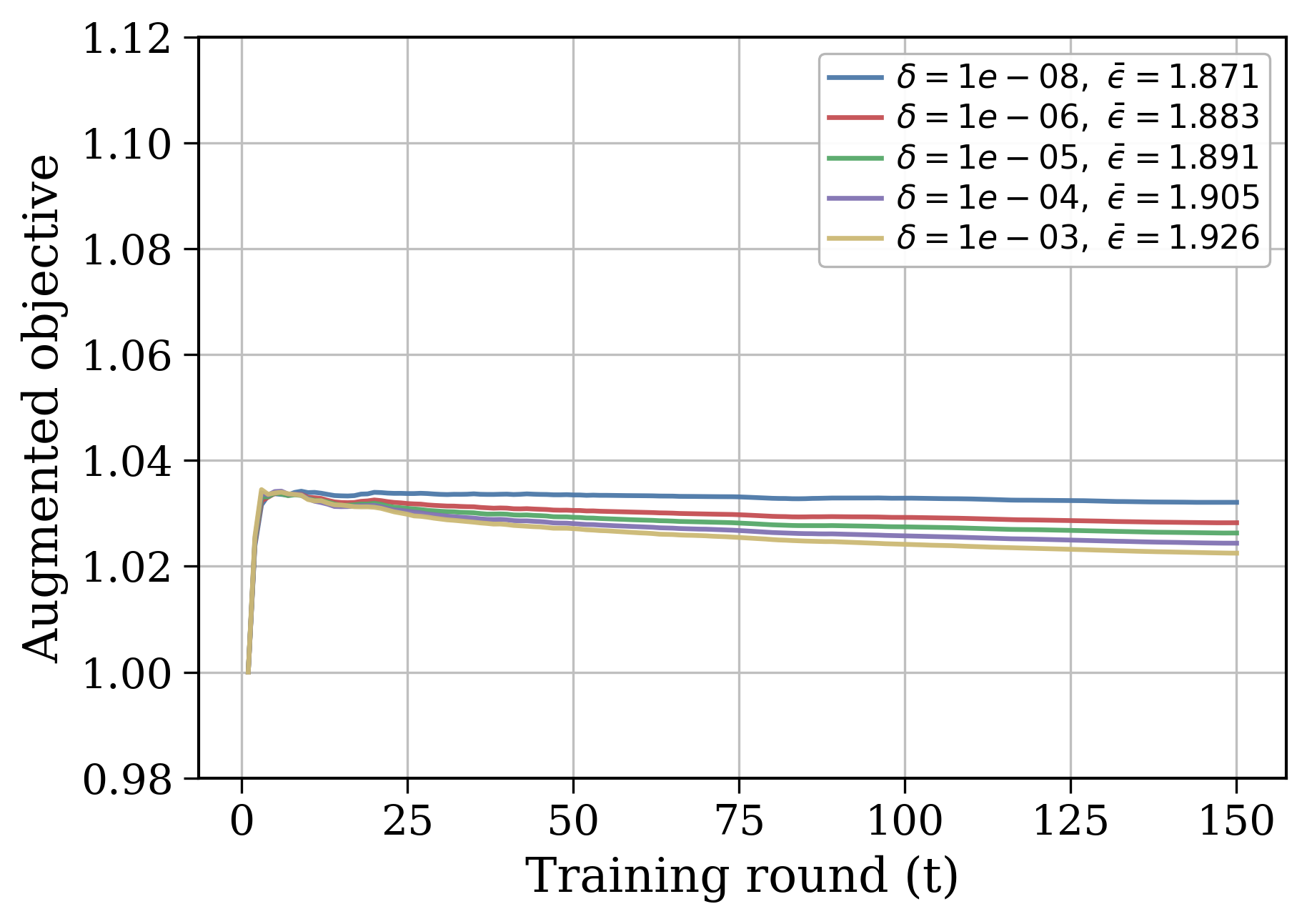}
        
        \vspace{2pt}
        \footnotesize $\epsilon = 0.15$
    \end{minipage}
    \hfill
    \begin{minipage}[t]{0.19\textwidth}
        \centering
        \includegraphics[width=\linewidth]{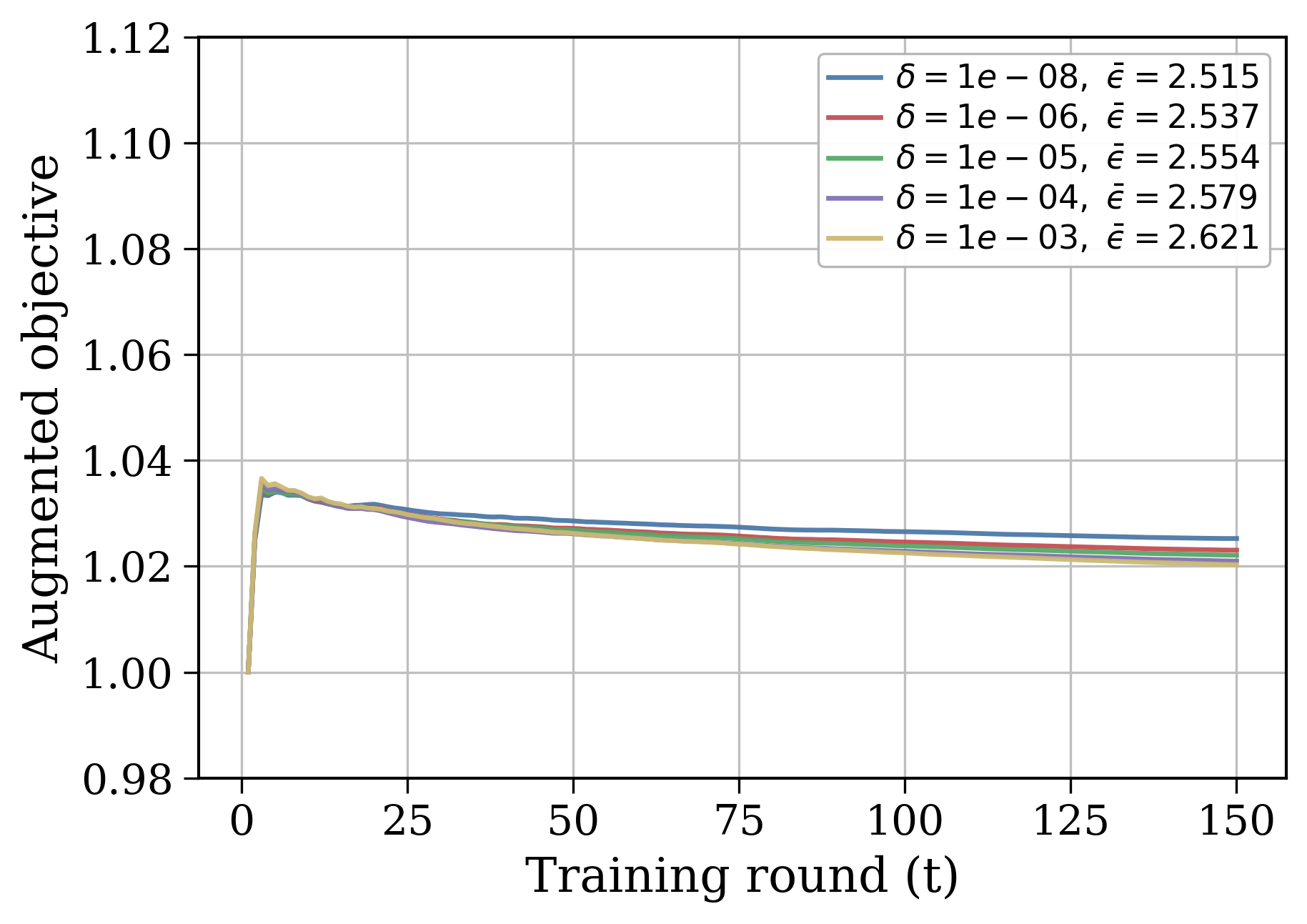}
        
        \vspace{2pt}
        \footnotesize $\epsilon = 0.2$
    \end{minipage}

    \vspace{10pt}

    % ============================================================
    % Row 2: Consensus violation plots
    % ============================================================
    \begin{minipage}[t]{0.19\textwidth}
        \centering
        \includegraphics[width=\linewidth]{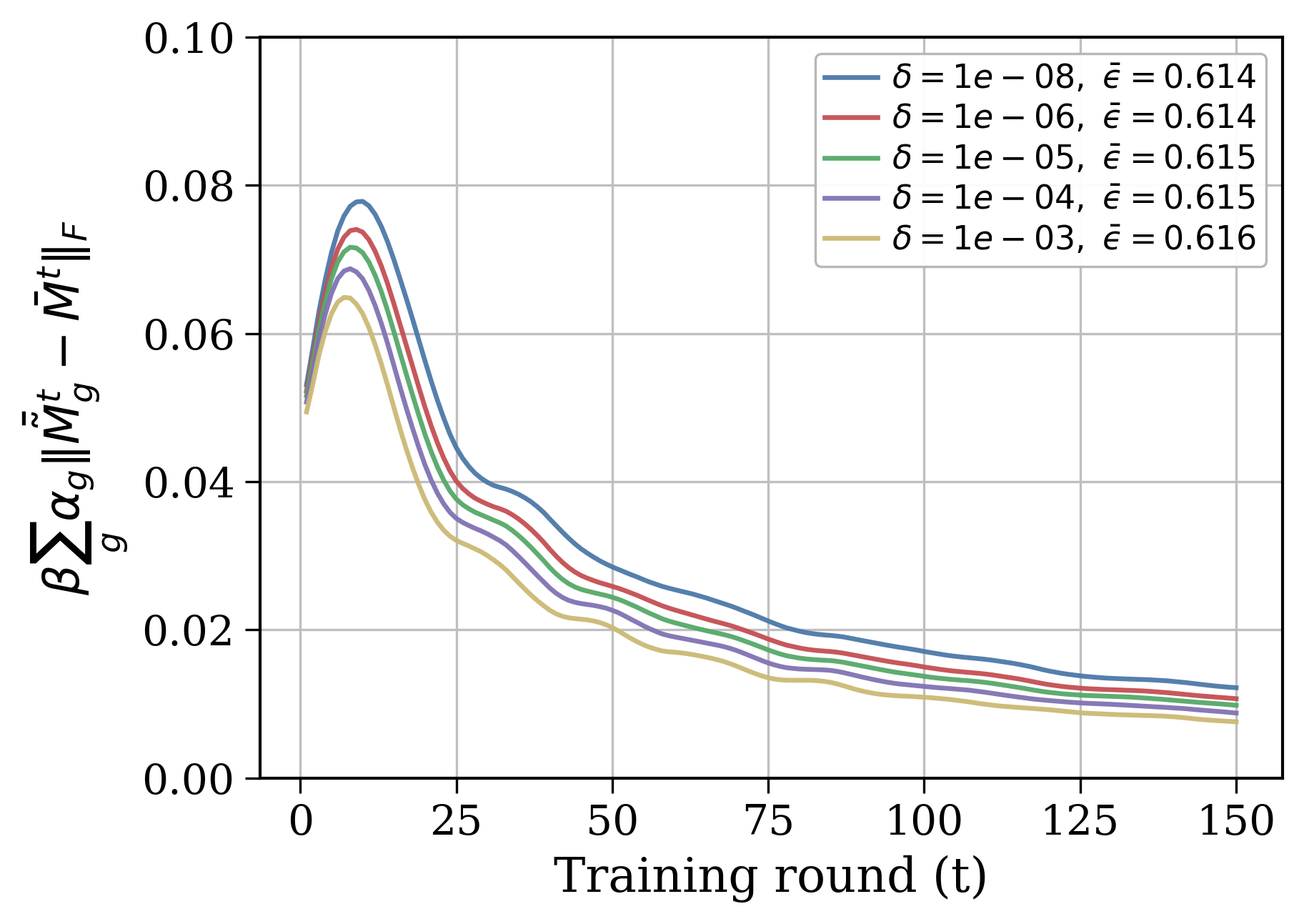}
        
        \vspace{2pt}
        \footnotesize $\epsilon = 0.05$
    \end{minipage}
    \hfill
    \begin{minipage}[t]{0.19\textwidth}
        \centering
        \includegraphics[width=\linewidth]{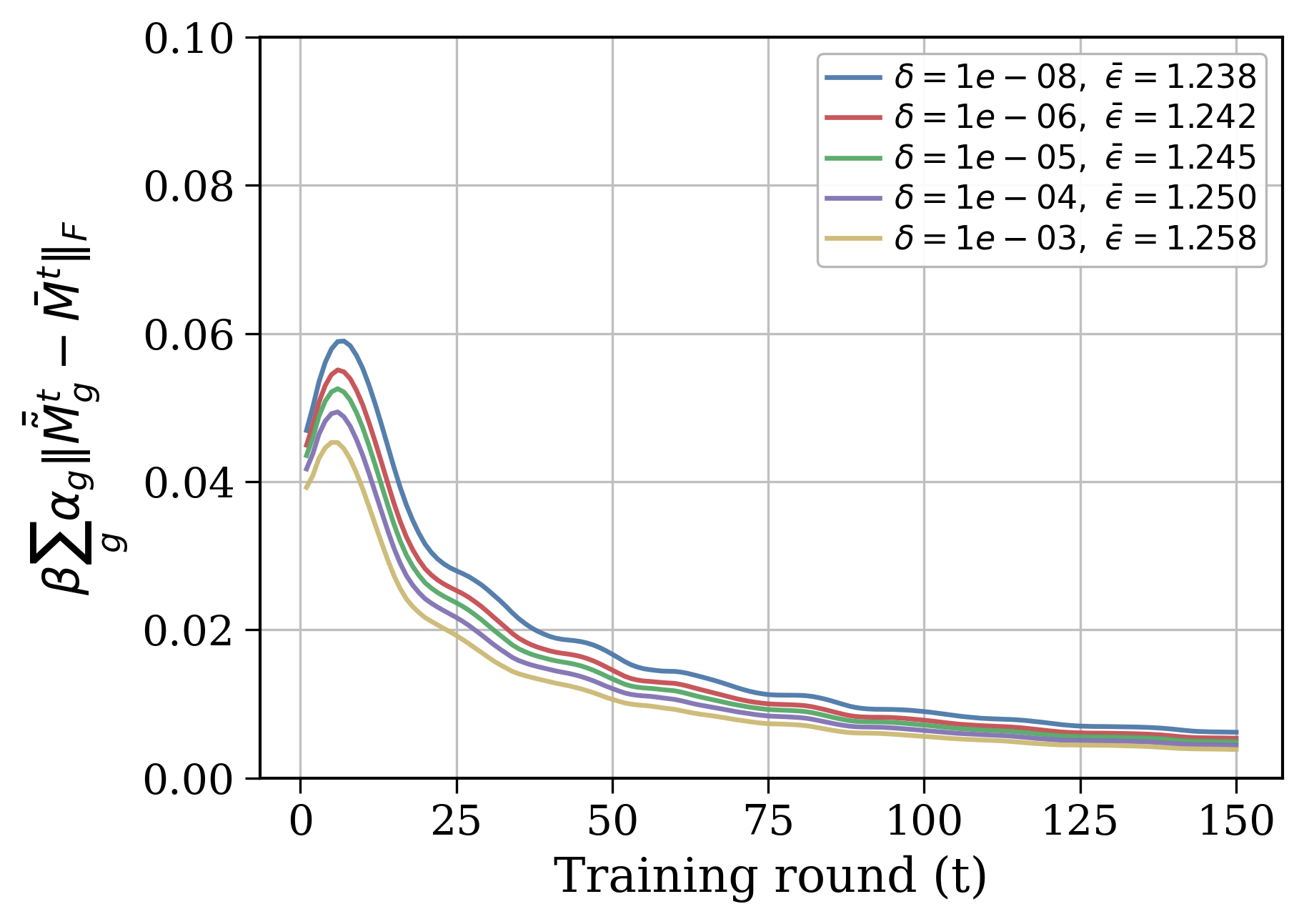}
        
        \vspace{2pt}
        \footnotesize $\epsilon = 0.1$
    \end{minipage}
    \hfill
    \begin{minipage}[t]{0.19\textwidth}
        \centering
        \includegraphics[width=\linewidth]{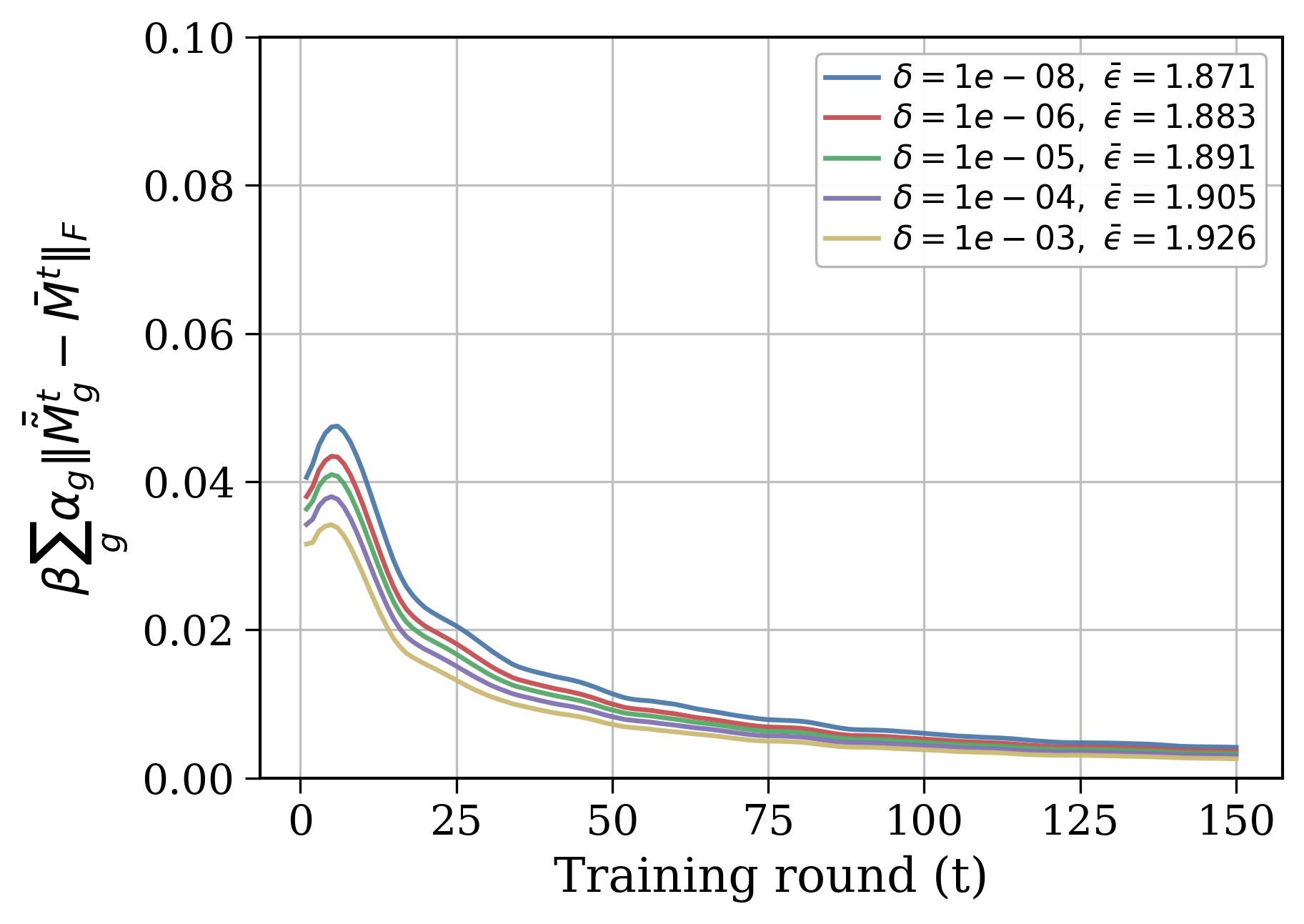}
        
        \vspace{2pt}
        \footnotesize $\epsilon = 0.15$
    \end{minipage}
    \hfill
    \begin{minipage}[t]{0.19\textwidth}
        \centering
        \includegraphics[width=\linewidth]{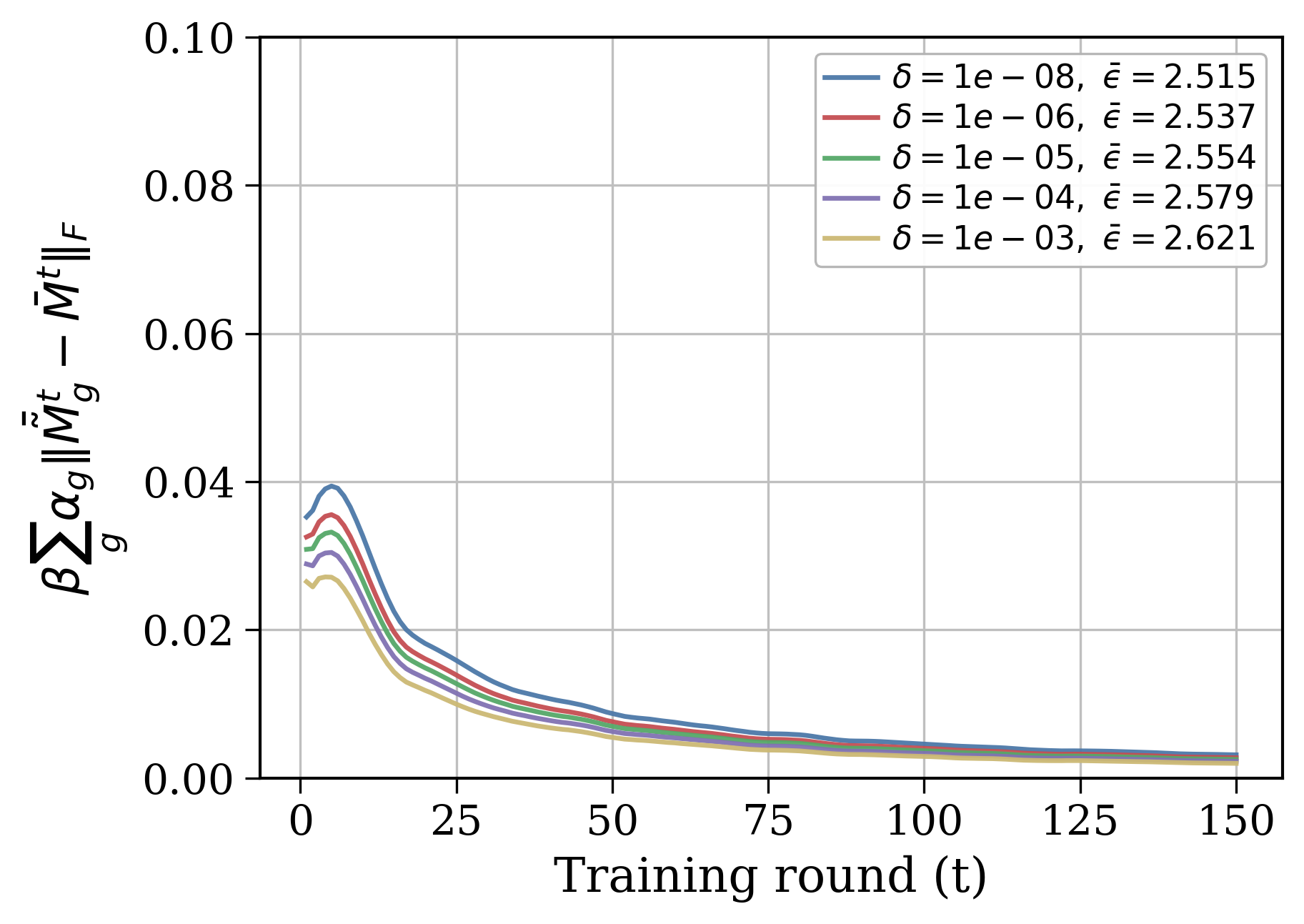}
        
        \vspace{2pt}
        \footnotesize $\epsilon = 0.2$
    \end{minipage}

    \caption{Additional convergence plots for the augmented objective and the consensus violation. The top row reports the running average of the augmented objective at the privatized iterates, while the bottom row reports the consensus violation. Each column corresponds to a different privacy budget \(\epsilon\).}
    \label{fig:appendix_convergence_components}
\end{figure}

\subsection{Controlled Study on Correlated Fault Classes}
\label{app:synthetic_overlap}

To identify the conditions under which a native multiclass formulation is most beneficial relative to OvA, we consider a controlled synthetic setting with correlated features and varying overlap between fault classes. Such conditions may arise in physical fault-diagnosis systems when faults in different components influence the same subsystem measurements, producing correlated sensor responses and similar fault signatures. This creates a setting in which the relationships among fault classes may be important for classification, making it well suited for comparing the non-private federated multiclass DRO model (Non-Private Fed DRO), which learns all class margins jointly, with its one-vs-all counterpart (Non-Private Fed OvA DRO), which trains independent binary classifiers. 

We generate a four-class synthetic dataset consisting of Normal, Fault A, Fault B, and Fault C, with $N_{\mathrm{train}}= 860$ training samples and $N_{\mathrm{test}}= 216$ test samples. The training set is imbalanced, with the Normal class overrepresented relative to the fault classes, whereas the test set is balanced. For each class, feature vectors are sampled from a multivariate normal distribution in an eight-dimensional feature space. The class-specific mean vector determines the class center, while a covariance matrix shared across all classes controls the feature spread and correlation structure. The covariance matrix is constructed so that two subsets of features are internally correlated, while the remaining features exhibit a weaker correlation structure. The class centers are chosen so that Fault A and Fault B form a related fault pair, while Fault C represents a more distinct fault mode. To vary the degree of overlap, we change the center of Fault B relative to Fault A, producing strong-, moderate-, and low-overlap regimes while keeping the covariance structure fixed. We introduce \(10\%\) random label noise only in the training set, while the test labels remain clean. Each experiment is repeated over 10 random seeds with \(G=4\) clients and \(T=50\) communication rounds. We select the Wasserstein radius \(r \in \{10^{-4},10^{-3},10^{-2}\}\) and label transportation cost \(\kappa \in \{0.25,0.5,0.75\}\) by 3-fold cross-validation, while fixing \(\rho=0.5\) and using the \(\ell_\infty\)-norm.

\begin{table}[H]
\centering
\caption{Performance of native multiclass DRO and OvA DRO across Fault A/Fault B overlap regimes.}
\label{tab:overlap_ablation_full_appendix}
\setlength{\tabcolsep}{4pt}
\renewcommand{\arraystretch}{1.0}
\begin{tabular}{llcc}

\toprule
Overlap setup & Metric & Non-Private Fed DRO & Non-Private Fed OvA DRO \\
\midrule

\multirow{10}{*}{Strong overlap}
& mCCR / Accuracy
& \textbf{0.8366 $\pm$ 0.0340}
& 0.7894 $\pm$ 0.0320 \\
& Macro-F1
& \textbf{0.8140 $\pm$ 0.0471}
& 0.7559 $\pm$ 0.0456 \\
& Macro Recall
& \textbf{0.8366 $\pm$ 0.0340}
& 0.7894 $\pm$ 0.0320 \\
& Worst-class Recall
& \textbf{0.4074 $\pm$ 0.1448}
& 0.3000 $\pm$ 0.1190 \\
& Normal Recall
& \textbf{0.9963 $\pm$ 0.0078}
& \textbf{0.9963 $\pm$ 0.0078} \\
& Fault A Recall
& \textbf{0.4074 $\pm$ 0.1448}
& 0.3000 $\pm$ 0.1190 \\
& Fault B Recall
& \textbf{0.9500 $\pm$ 0.0381}
& 0.8981 $\pm$ 0.0439 \\
& Fault C Recall
& \textbf{0.9926 $\pm$ 0.0096}
& 0.9630 $\pm$ 0.0276 \\
& Fault A/B Confusion Rate
& \textbf{0.2185 $\pm$ 0.0556}
& \textbf{0.2185 $\pm$ 0.0561} \\
& Fault A/B to Normal Rate
& \textbf{0.0843 $\pm$ 0.0345}
& 0.1481 $\pm$ 0.0503 \\

\midrule

\multirow{10}{*}{Moderate overlap}
& mCCR / Accuracy
& \textbf{0.9292 $\pm$ 0.0215}
& 0.8301 $\pm$ 0.0218 \\
& Macro-F1
& \textbf{0.9259 $\pm$ 0.0240}
& 0.8040 $\pm$ 0.0329 \\
& Macro Recall
& \textbf{0.9292 $\pm$ 0.0215}
& 0.8301 $\pm$ 0.0218 \\
& Worst-class Recall
& \textbf{0.7444 $\pm$ 0.0947}
& 0.3759 $\pm$ 0.1007 \\
& Normal Recall
& 0.9963 $\pm$ 0.0078
& \textbf{0.9981 $\pm$ 0.0059} \\
& Fault A Recall
& \textbf{0.7444 $\pm$ 0.0947}
& 0.3759 $\pm$ 0.1007 \\
& Fault B Recall
& 0.9889 $\pm$ 0.0199
& \textbf{0.9926 $\pm$ 0.0096} \\
& Fault C Recall
& \textbf{0.9870 $\pm$ 0.0232}
& 0.9537 $\pm$ 0.0200 \\
& Fault A/B Confusion Rate
& \textbf{0.0620 $\pm$ 0.0107}
& 0.1657 $\pm$ 0.0369 \\
& Fault A/B to Normal Rate
& \textbf{0.0630 $\pm$ 0.0292}
& 0.1222 $\pm$ 0.0335 \\

\midrule

\multirow{10}{*}{Low overlap}
& mCCR / Accuracy
& \textbf{0.9514 $\pm$ 0.0224}
& 0.8653 $\pm$ 0.0296 \\
& Macro-F1
& \textbf{0.9503 $\pm$ 0.0237}
& 0.8535 $\pm$ 0.0369 \\
& Macro Recall
& \textbf{0.9514 $\pm$ 0.0224}
& 0.8653 $\pm$ 0.0296 \\
& Worst-class Recall
& \textbf{0.8333 $\pm$ 0.0956}
& 0.5259 $\pm$ 0.1185 \\
& Normal Recall
& 0.9963 $\pm$ 0.0078
& \textbf{0.9981 $\pm$ 0.0059} \\
& Fault A Recall
& \textbf{0.8333 $\pm$ 0.0956}
& 0.5259 $\pm$ 0.1185 \\
& Fault B Recall
& 0.9981 $\pm$ 0.0059
& \textbf{1.0000 $\pm$ 0.0000} \\
& Fault C Recall
& \textbf{0.9778 $\pm$ 0.0259}
& 0.9370 $\pm$ 0.0250 \\
& Fault A/B Confusion Rate
& \textbf{0.0176 $\pm$ 0.0134}
& 0.0528 $\pm$ 0.0191 \\
& Fault A/B to Normal Rate
& \textbf{0.0630 $\pm$ 0.0351}
& 0.1556 $\pm$ 0.0503 \\

\bottomrule
\end{tabular}
\end{table}

Table~\ref{tab:overlap_ablation_full_appendix} shows that the Non-Private Fed DRO outperforms the Non-Private Fed OvA DRO across all three overlap regimes. The advantage is most pronounced in the moderate- and low-overlap settings, where Non-Private Fed DRO achieves substantially higher mCCR, Macro-F1, and worst-class recall. In particular, the gains are driven largely by improved recovery of Fault A, which is the most difficult class across all regimes. Under strong overlap, both methods experience similar Fault A/B confusion rates, but Non-Private Fed DRO produces fewer Fault A/B samples classified as Normal and achieves higher recall for both Fault A and Fault B. As the overlap decreases, the native model continues to improve while also reducing the Fault A/B confusion rate, indicating that the joint multiclass formulation better exploits the separation between related fault classes. Overall, these results support the use of a native multiclass DRO formulation when fault modes have correlated features and partially overlapping class structure.

\end{document}